\title{Parallelism and Generation Order in Masked Diffusion Language Models: Limits Today, Potential Tomorrow}
\author{
\textbf{Yangyang Zhong}$^{1,2}$, \textbf{Yanmei Gu}$^{2}$, \textbf{Zhengqing Zang}$^{1,2}$, \textbf{Xiaomeng Li}$^{2}$, \\
\textbf{Yuqi Ding}$^{2,3}$, \textbf{Xibei Jia}$^{1,2}$, \textbf{Yuting Shen}$^{2,4}$, \textbf{Zhenzhong Lan}$^{2,5}$, \textbf{Liwang Zhu}$^{2}$, \\
\textbf{Weiping Liu}$^{2}$, \textbf{Junlin Zhou}$^{2}$, \textbf{Haisheng Liu}$^{2}$, \textbf{Zhong Xin Yu}$^{2}$, \textbf{Pengxin Luo}$^{1}$, \\
\textbf{Donglian Qi}$^{1}$, \textbf{Yunfeng Yan}$^{1\dagger}$, \textbf{Junbo Zhao}$^{2,1\dagger}$ \\[4pt]
$^{1}$Zhejiang University, $^{2}$Ant Group, $^{4}$Shanghai Jiao Tong University \\
$^{3}$University of Chinese Academy of Social Sciences, $^{5}$Westlake University \\[2pt]
\texttt{\{yyff, j.zhao\}@zju.edu.cn} 
}
\begin{document}
\maketitle
\begin{abstract}
Masked Diffusion Language Models (MDLMs) promise parallel token generation and arbitrary-order decoding, yet it remains unclear to what extent current models truly realize these capabilities. We characterize MDLM behavior along two dimensions—parallelism strength and generation order—using {Average Finalization Parallelism (AFP)} and Kendall's $\tau$. We evaluate eight mainstream MDLMs (up to 100B parameters) on 58 benchmarks spanning knowledge, reasoning, and programming. The results show that MDLMs still lag behind comparably sized autoregressive models, mainly because parallel probabilistic modeling weakens inter-token dependencies. Meanwhile, MDLMs exhibit adaptive decoding behavior: their parallelism and generation order vary significantly with the task domain, the stage of reasoning, and whether the output is correct. On tasks that require ``backward information'' (e.g., Sudoku), MDLMs adopt a solution order that tends to fill easier Sudoku blanks first, highlighting their advantages. Finally, we provide theoretical motivation and design insights supporting a \emph{Generate-then-Edit} paradigm, which mitigates dependency loss while retaining the efficiency of parallel decoding.
\end{abstract}

\section{Introduction}
Autoregressive (AR) language models dominate modern natural language processing (NLP) due to their strong likelihood-based training objectives and reliable left-to-right decoding. 
However, the strictly sequential nature of AR decoding entails two fundamental limitations: 
(i) high inference latency and constrained generation throughput; and 
(ii) for tasks requiring global constraints or non-monotonic planning, a fixed chronological order may not constitute the most natural solution path. 
As an important branch of discrete diffusion models, masked diffusion language models (MDLMs) address these gaps by iteratively denoising masked sequences, enabling parallel token prediction within a single step and, in principle, allowing for more flexible, non-monotonic generation orders \cite{li2025survey,sahoo2024simple}. 
For KV-cache reuse and efficiency, most practical systems adopt a block-diffusion architecture—executing autoregressively across blocks while employing diffusion within each block \cite{arriola2025block}. 
Recent high-performing MDLMs (e.g., LLaDA~2.0\cite{llada2.0}, Trado\cite{tracerl}, DiRL\cite{dirl}, SDAR\cite{sdar}, and OpenPangu-Diffusion\cite{openpangu}) are typically initialized from the architecture and weights of strong AR models and then transferred to the diffusion paradigm by modifying the decoding mechanism, achieving competitive results.

Despite rapid progress, it remains unclear whether MDLMs truly exploit their latent non-monotonic potential during inference, or how they navigate the trade-off between parallelism and generation quality. 
Existing studies primarily focus on inference strategies or parameter tuning, reporting their impact on a limited set of benchmarks \citep{Fast-dllm,ye2024beyond,Parallelbench,feng2025theoretical}, yet a systematic characterization of the model's decoding trajectories and dynamic behaviors is missing. 
While some recent works have begun examining decoding order \citep{chen2025beyond,diffucoder}, they often fail to explicitly decouple ``generation order'' from ``parallelism intensity''; for instance, \citep{diffucoder} conflates order metrics with variations in parallelism. 
Furthermore, speed evaluations frequently rely on hardware-dependent metrics such as throughput (tokens/s), which are sensitive to implementation details and hinder fair cross-model comparisons. 
Consequently, the community still lacks unified, interpretable, and hardware-agnostic metrics to disentangle the parallelism and non-monotonicity of MDLMs, limiting our understanding of their quality degradation and potential strengths.

Motivated by these gaps, we treat \textbf{parallelism} and \textbf{generation order} as two defining degrees of freedom of MDLMs and conduct a large-scale reality check via mechanism-level analyses. Our contributions are four-fold: 
\textbf{(1) Large-scale Evaluation:} We unify the deployment of recent MDLMs (e.g., LLaDA~2.0) and strong AR baselines on \textbf{58 benchmarks} spanning Knowledge, Math, Reasoning, and Coding. We identify a consistent accuracy gap and provide a theoretical account through the lens of \textit{parallel factorization}: the conditional-independence approximation in parallel decoding induces an unavoidable lower bound on quality. 
\textbf{(2) Parallelism and Order Metrics:} To operationalize these properties, we propose \textbf{Average Finalization Parallelism (AFP)} and adopt Kendall's $\tau$ to quantify the alignment between token finalization and surface order. Analyzing a 100B-scale MDLM, we reveal an \textbf{adaptive trade-off} where the model accelerates on structure-heavy spans but decelerates at semantic pivots. Notably, correct predictions demonstrate higher parallelism, suggesting a \textit{scaling dividend} where enhanced model capability naturally accelerates inference. 
\textbf{(3) Uncovering Non-monotonic Potential:} Through Sudoku-variant benchmarks, we amplify the advantages of MDLMs in parallel and non-monotonic solving, exhibiting solution paths qualitatively distinct from AR decoding and suggesting significant any-order probabilistic modeling potential. 
\textbf{(4) Mitigating Parallel Factorization Loss:} We provide a theoretical perspective suggesting that a two-stage generate-then-edit paradigm could mitigate the dependency loss induced by parallel factorization.

\begin{figure*}[t] 
    \centering
    \includegraphics[width=\textwidth]{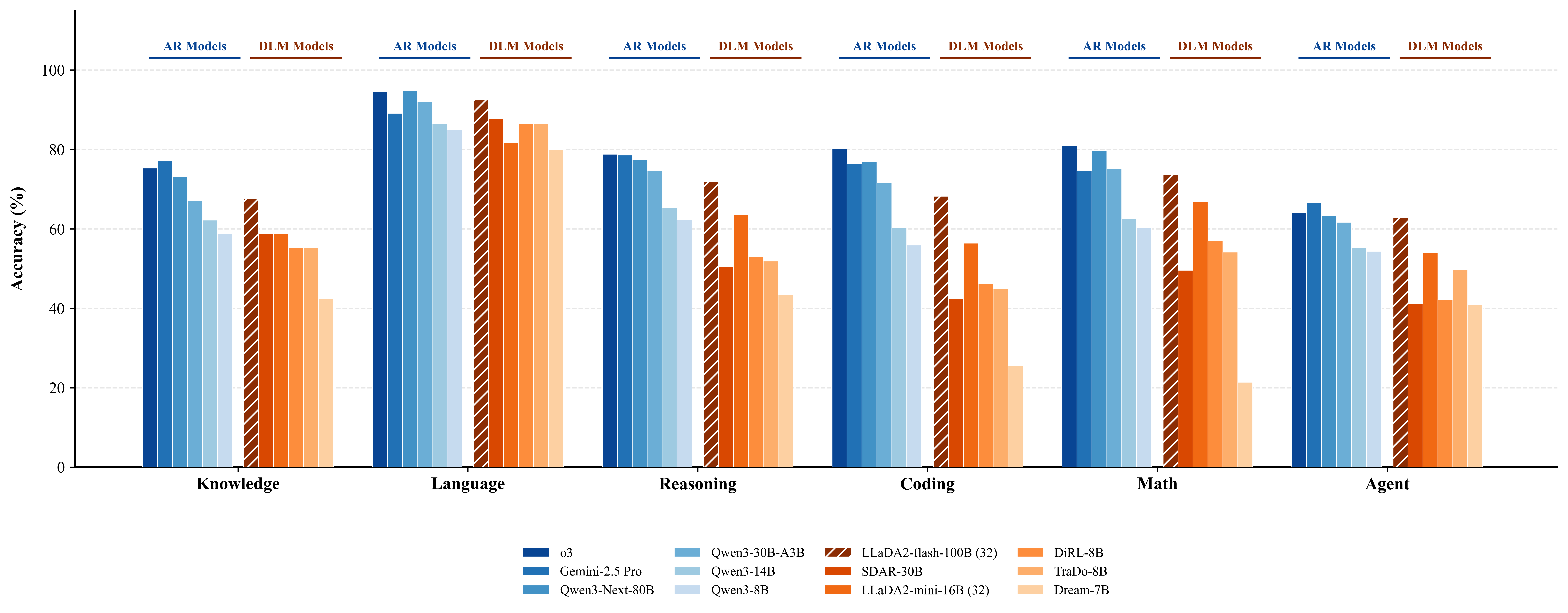}
    \caption{\textbf{Overall performance comparison between AR and DLM models across six evaluation dimensions.} }
    \label{fig:58arvsdlm}
\end{figure*}

\begin{figure}[h]
    \centering
    \includegraphics[width=\columnwidth]{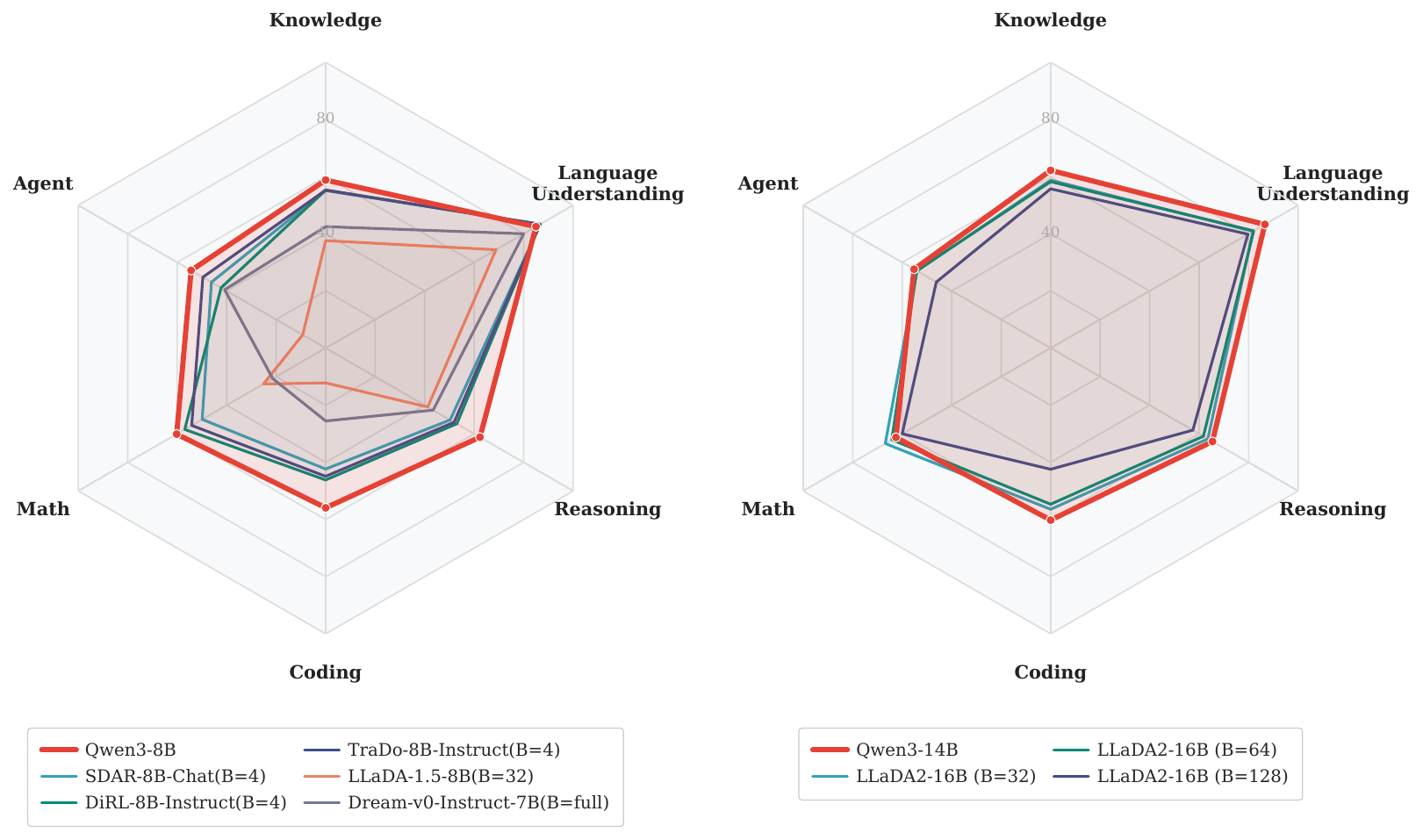} 
    \caption{\textbf{Comparison at the same parameter scale.} Smaller block size $B$ yields superior performance.}
    \label{fig:14v16v8}
\end{figure}

\section{Related Work}

\subsection{Masked Diffusion Language Models}
Masked Diffusion Language Models (MDLMs) are a representative class of discrete diffusion generative models~\citep{lisan1,lisan2}. In the forward process, noise is injected by progressively replacing clean tokens with a special mask symbol \texttt{[MASK]} in a discrete space. The reverse process then iteratively denoises a masked sequence to recover the original text. Recently, MDLMs have been scaled to the regime of large pretrained language models. On the one hand, several works study \emph{native} training of discrete diffusion language models; for example, the LLaDA 1.0 line~\citep{llada,llada1.5} introduces diffusion-oriented masking strategies and decoding schedules. On the other hand, a growing trend adapts strong autoregressive (AR) models to the diffusion paradigm by modifying their causal decoding while largely preserving pretrained weights (e.g., LLaDA-MoE~\cite{lladamoe}, Dream-7B~\cite{Dream}, LLaDA 2.0~\cite{llada2.0}, SDAR~\cite{sdar}, Trado~\cite{tracerl}, DiRL~\cite{dirl}, and OpenPangu-Diffusion~\cite{openpangu}). In our study, we further unify the deployment of these recent SOTA models and conduct an in-depth analysis.

\subsection{Analyses of Diffusion LM Mechanisms}
Prior work has analyzed dLLM decoding and parallelization.\cite{Fast-dllm} link quality drops in parallel decoding to the conditional-independence assumption and propose threshold-/factor-based unmasking, but evaluate on limited leaderboards and use hardware-dependent throughput (tokens/s), which hinders fair cross-setting comparison. \cite{diffucoder} study \emph{decoding order} via AR-ness, yet it depends on a hyperparameter $k$ and is sensitive to decoding setups; moreover, decoding order is often conflated with \emph{parallelism} (tokens per step). \cite{ye2024beyond} highlight dLLM benefits for reasoning/planning, but provide limited instance-level decoding dynamics. In contrast, we study 58 benchmarks, explicitly separate decoding order from parallelism, use robust order statistics, and measure parallelism by average tokens per step to avoid hardware confounds.

\section{Preliminaries}
\label{sec:preliminaryes}
In this section, we review inference in autoregressive and masked diffusion language models.

\vspace{1mm}
\noindent\textbf{Auto-regressive Modeling.} Let $x_{1:T} = (x_1,\dots,x_T)$ be a token sequence. Autoregressive language models factorize its probability as
\begin{equation}
    p_{\mathrm{AR}}(x_{1:T})
    = \prod_{t=1}^{T} p_\theta(x_t \mid x_{<t}),
    \label{eq:ar}
\end{equation}
where $x_{<t} = (x_1,\dots,x_{t-1})$ and $\theta$ are model parameters.
Inference is strictly left-to-right: at each step $t$, the model predicts $x_t$ conditioned on the full prefix $x_{<t}$.

\vspace{1mm}
\noindent\textbf{Masked-diffusion Inference.} This paradigm underpins recent state-of-the-art models such as LLaDA \cite{llada2.0,llada,llada1.5,lladamoe} and Dream-7b \cite{Dream}. Let $\tilde{x}\sim q(\cdot\mid x)$ denote a masked variant of $x$, where positions in a masked index set
$M(\tilde{x})=\{i:\tilde{x}_i=[m]\}$ are replaced by $[m]$.
In one parallel denoising iteration, we choose an update set $K\subseteq M(\tilde{x})$ and predict all
tokens in $K$ in a single forward pass using the (mean-field) approximation
\begin{equation}
\begin{aligned}
p_\theta(x_K \mid \tilde{x})
&\approx
\prod_{i\in K} p_\theta(x_i \mid \tilde{x}), \\
&\text{s.t. } \tilde{x}_i=[m],\ \forall i\in K.
\end{aligned}
\label{eq:md_meanfield}
\end{equation}
where $x_K=\{x_i\}_{i\in K}$.
Eq.~\eqref{eq:md_meanfield} makes explicit that within-step parallelism assumes conditional independence
among $\{x_i\}_{i\in K}$ given $\tilde{x}$, i.e., tokens updated in the same step cannot condition on each other.

To enable variable-length generation and efficient KV caching, these models adopt a blockwise framework that applies the above inference mechanism autoregressively. Specifically, a sequence $x$ is partitioned into $B$ contiguous blocks $\{x^{(1)}, \dots, x^{(B)}\}$. The generation follows a block-level autoregressive factorization: 
\begin{equation}
\log p_\theta(x) = \sum_{b=1}^B \log p_\theta(x^{(b)} \mid x^{(<b)}).
\end{equation}
For each block $b$, the distribution $p_\theta(x^{(b)} \mid x^{(<b)})$ is modeled by the masked diffusion process described in Eq.~\eqref{eq:md_meanfield}, where the denoising is conditioned on the fixed, previously generated blocks $x^{(<b)}$.

\begin{figure*}[t]
    \centering
    
    \begin{subfigure}[b]{0.24\textwidth}
        \centering
        \includegraphics[width=\textwidth]{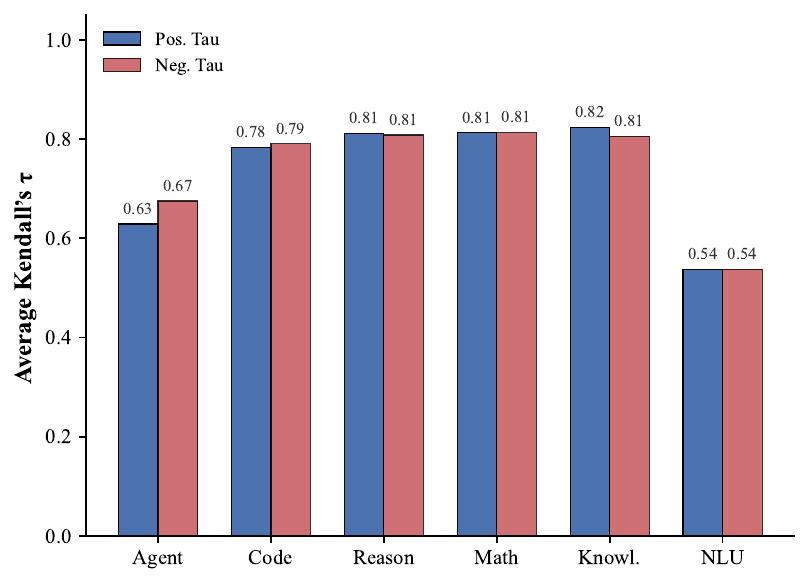} 
        \caption{Kendall’s τ (normal)}
        \label{fig:case1}
    \end{subfigure}
    \hfill 
    \begin{subfigure}[b]{0.24\textwidth}
        \centering
        \includegraphics[width=\textwidth]{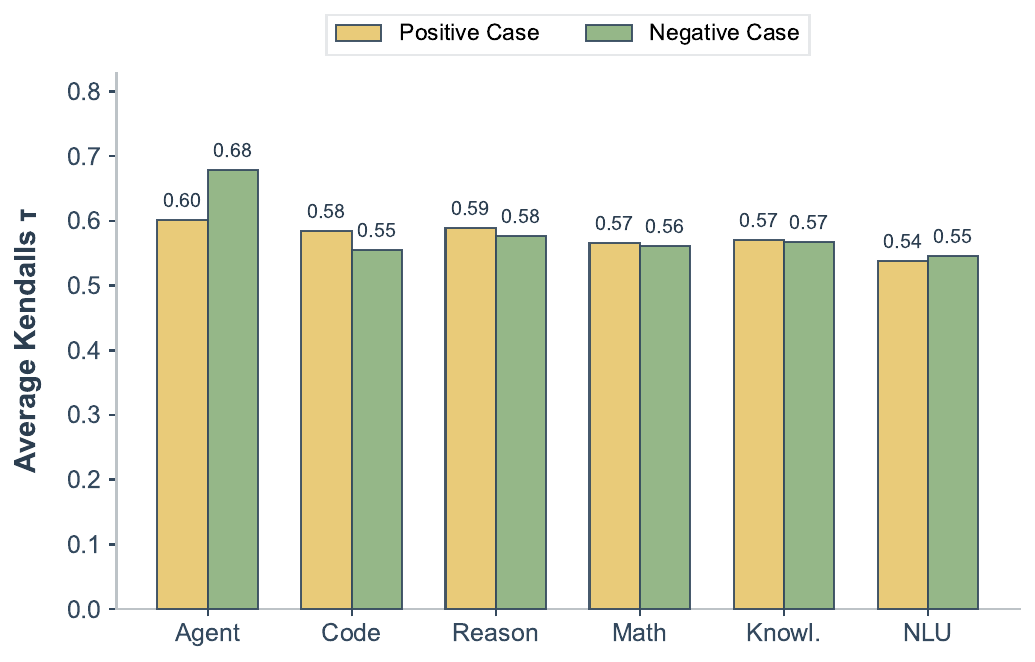}
        \caption{Kendall’s τ (repetitive)}
        \label{fig:case2}
    \end{subfigure}
    \hfill
    \begin{subfigure}[b]{0.24\textwidth}
        \centering
        \includegraphics[width=\textwidth]{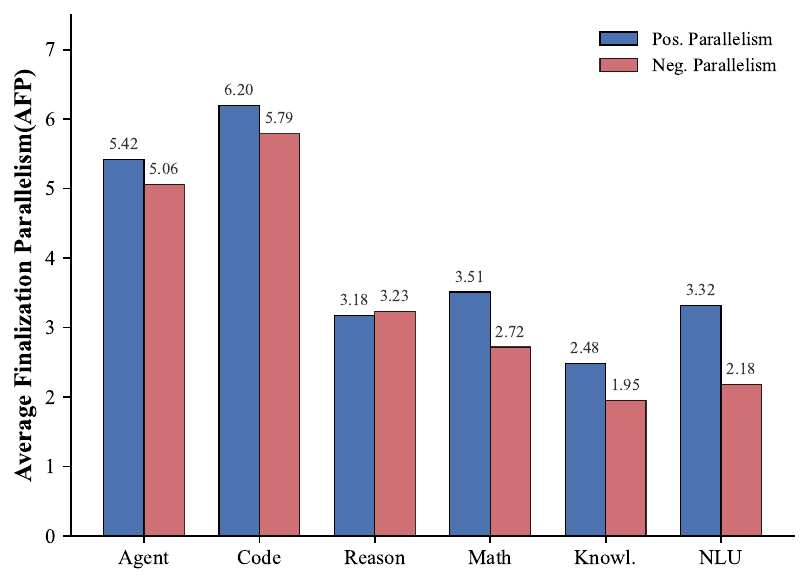}
        \caption{AFP (normal)}
        \label{fig:case3}
    \end{subfigure}
    \hfill
    \begin{subfigure}[b]{0.24\textwidth}
        \centering
        \includegraphics[width=\textwidth]{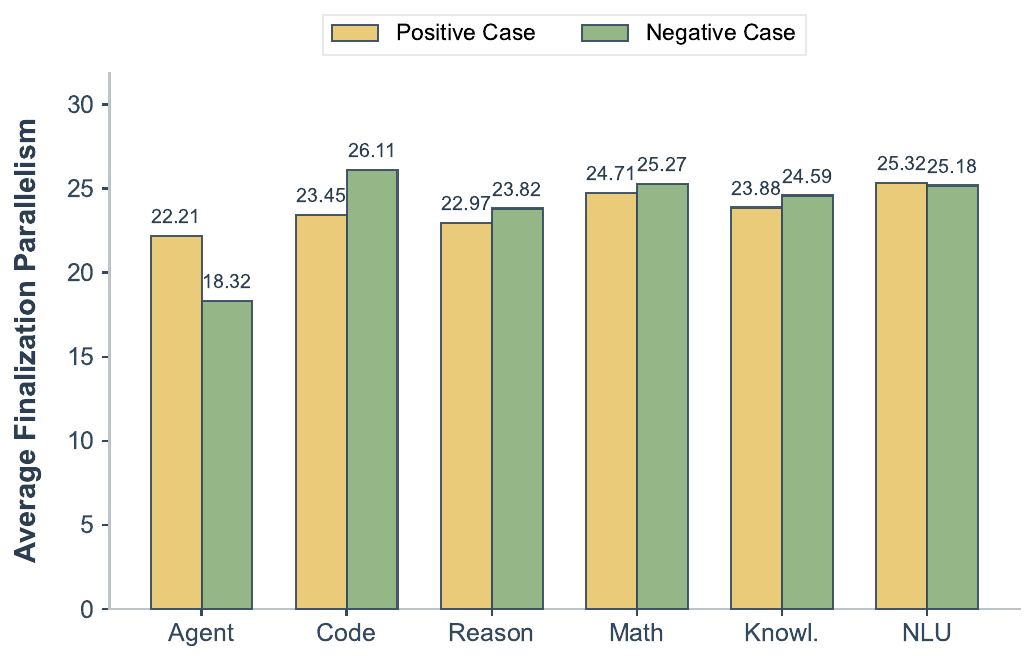}
        \caption{AFP (repetitive)}
        \label{fig:case4}
    \end{subfigure}

    \caption{\textbf{Comparison of intra-chunk parallelism and sequential ordering patterns across different task domains.} Repeating samples are analyzed separately because, despite their limited count, each fills the entire 32k context window and disproportionately impacts overall statistical values.}
    \label{fig:four_plots}
\end{figure*}

\section{Reality Check: MDLMs Still Trail AR LMs at Scale}

Despite the potential of MDLMs to challenge the AR paradigm, existing studies often rely on small-scale datasets, leaving their performance at scale largely unexplored. To provide a transparent and systematic evaluation, we conduct a large-scale benchmark of eight representative MDLMs against state-of-the-art AR models. Using a unified platform and 58 diverse benchmarks, this section serves as a "reality check" to quantify the current progress of the field and identify the remaining performance gaps between these two architectural paradigms.

\subsection{Experimental Protocol: Models, 58 Benchmarks, and Unified Inference}
\noindent\textbf{Models and Benchmarks.} 
We conduct a large-scale evaluation of 8 representative state-of-the-art MDLMs alongside frontier Autoregressive (AR) models, including OpenAI o3 and Gemini-2.5 Pro. To ensure a systematic assessment, we curate a comprehensive suite of \textbf{58 benchmarks} spanning six core dimensions: \textit{Knowledge, Mathematics, Reasoning, Language Understanding, Agentic capabilities, and Coding}. We report mean accuracy over all test instances for each benchmark (one inference per instance). Due to space constraints, exhaustive model specifications and benchmark descriptions are provided in Appendix~\ref{DEP}.

\vspace{1.5mm}
\noindent\textbf{Implementation and Fairness.} 
All experiments are performed on a distributed cluster equipped with \textbf{512 NVIDIA GPUs} using a unified inference pipeline. Unless otherwise specified in the Appendix, each model is evaluated using its optimal default configurations. Notably, as we employ a standardized and more stringent evaluation protocol (e.g., unified prompt templates and rigorous answer-matching logic), the absolute scores for certain models may be lower than those reported in their respective technical reports. However, this controlled environment eliminates confounding variables and ensures a rigorous \textit{apples-to-apples} comparison across disparate model architectures. Detailed implementation settings are deferred to Appendix~\ref{DEP}.

\subsection{Empirical Trade-off: More Parallelism, Lower Accuracy}
Detailed experimental results across 58 benchmarks are reported in Tables \cref{tab:code-benchmarks,tab:knowledge-language-benchmarks,tab:math-benchmarks,tab:reasoning-agent-benchmarks}. Aggregated evaluations across six capability dimensions (as illustrated in Figure \ref{fig:58arvsdlm}) indicate that a performance gap persists between current masked diffusion language models (MDLMs) and their autoregressive (AR) counterparts. A comparative analysis of models at a similar parameter scale in Figure \ref{fig:14v16v8} reveals that MDLMs achieving near-AR performance typically employ restricted block sizes. Notably, to attain peak precision, OpenPangu-7B-Diffusion utilizes a block size of 4 and constrains each step to decode only the token with the highest confidence. These findings collectively suggest an empirical conclusion: in contemporary architectures, excessive decoding parallelism serves as a primary bottleneck that compromises predictive accuracy.

\subsection{Why This Gap Persists: The Parallel Factorization Limit}
\label{sec:Theoretical Support}
The fundamental limitation of Masked Discrete Diffusion Models (MDLMs) stems from the \textbf{conditional independence assumption} \cite{tidar,Parallelbench}. This assumption necessitates factorizing the complex joint distribution of multiple tokens into a product of independent marginals during parallel decoding, thereby introducing significant and unavoidable approximation errors.

\begin{figure*}[h] 
    \centering
    \includegraphics[width=\textwidth]{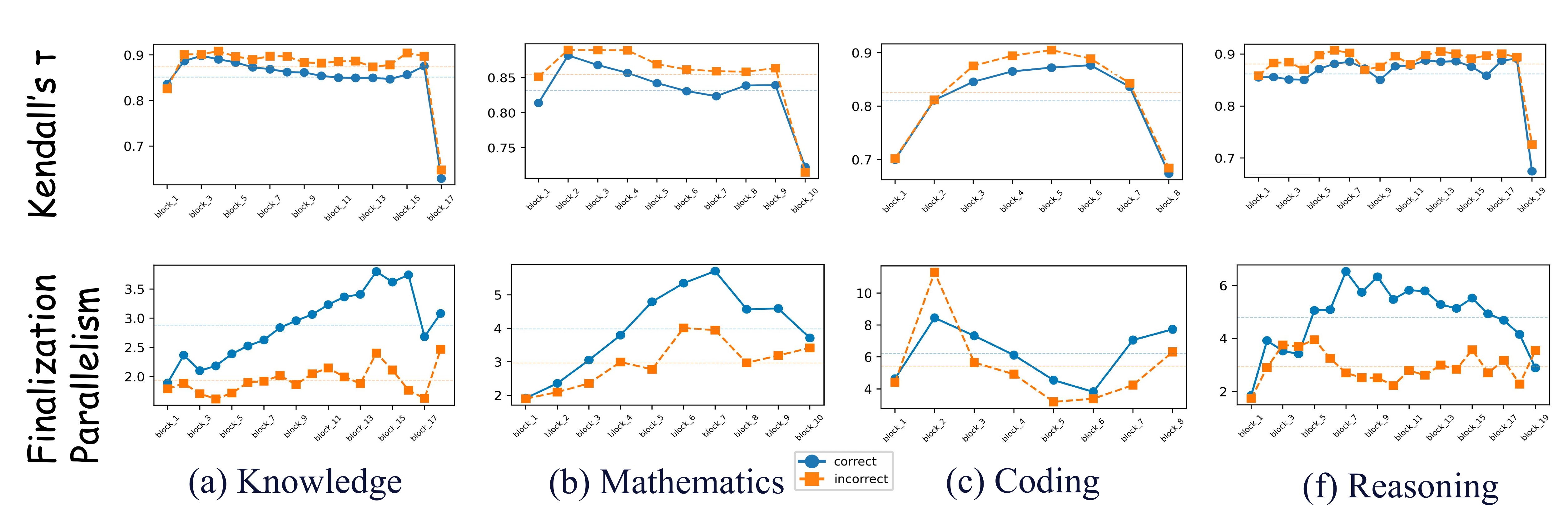} 
\caption{\textbf{Comparison of inter-chunk parallelism and sequential ordering patterns across different task domains.} Refer to Appendix~\ref{app:tau_block} for more groupings of $\tau$, and Appendix~\ref{app:fap_block} for parallel AFP.}
    \label{fig:Trajectories}
\end{figure*}

\noindent\textbf{The Factorization Gap.} Autoregressive (AR) models strictly adhere to the probabilistic chain rule, $P_{AR}(x_{1:n}|c) = \prod_{i=1}^n P(x_i | x_{<i}, c)$, to ensure global coherence. In contrast, an MDLM with a block size $B$ approximates the joint distribution of a token block $Y$ as the product of independent marginal distributions:
\begin{equation}
\label{ea:mdlm}
P_{\text{MDLM}}(Y | X) = \prod_{y_i \in Y} P_{\theta}(y_i | X, S_{<t})
\end{equation}
where $S_{<t}$ denotes tokens finalized in previous steps. When tokens exhibit strong syntactic, semantic, or logical coupling, this independence assumption leads to a pronounced structural bias. Following the framework of \cite{Parallelbench}, even if $P_{\theta}$ achieves the optimal marginal distribution at each position, the KL divergence between this factorized form and the true data distribution $P_{data}(Y|X)$ is lower-bounded by the \textbf{Conditional Total Correlation (CTC)}:

\begin{align}
\min_{\theta} D_{KL}(P_{data} \| P_{\theta})
&\ge C(Y|X) \nonumber\\
&= \sum_{y_i \in Y} H(y_i|X) - H(Y|X)
\end{align}
Here, $C(Y|X)$ quantifies the statistical dependency strength among tokens in $Y$ given the context $X$. Stronger dependencies result in a higher theoretical floor for quality loss, represented by the CTC bound.

\noindent\textbf{Elucidating Empirical Observations.} The aforementioned framework clarifies the performance decay observed in Section 3.2: as the block size $B$ (parallelism granularity) increases, the complexity of inter-token dependencies within $Y$ grows, forcing the model to factorize under tighter intra-group constraints. This leads to a monotonic increase in the CTC-induced error bound. Our analysis suggests that the performance gap between MDLMs and AR models in high-parallelism settings is not merely a consequence of insufficient model capacity, but rather an intrinsic architectural limitation in representing high-dependency structures. This further explains why state-of-the-art methods (e.g., Trado, SDAR, openPangu~\cite{openpangu}) often adopt highly conservative block sizes; they effectively sacrifice their defining architectural advantage—parallelism—to circumvent the theoretical precision ceiling imposed by independent factorization.

\section{Decoding Dynamics of MDLMs: Disentangling Parallelism and Generation Order}
\label{Decoding Dynamics of MDLMs}

While the preceding analysis identifies the current performance {limits} of MDLMs, we argue that their latent {potential} resides in the unique flexibility of their decoding trajectories. 
To characterize this behavior, we shift our perspective from macro-scale accuracy to a {mechanistic deconstruction} of the two defining dimensions in our title: {Parallelism} and {Generation Order}. 
Focusing on the state-of-the-art 100B-parameter LLaDA2-flash, we operationalize these properties via Average Finalization Parallelism (AFP, Eq.~\ref{eq:afp}) and Kendall’s $\tau$ (Eq.~\ref{eq:tau}) to conduct an in-depth analysis of the {decoding dynamics} of frontier diffusion models across diverse benchmarks.

\begin{figure*}[t]
    \centering
    \begin{subfigure}[b]{0.3\textwidth}
        \centering
        \includegraphics[width=\linewidth]{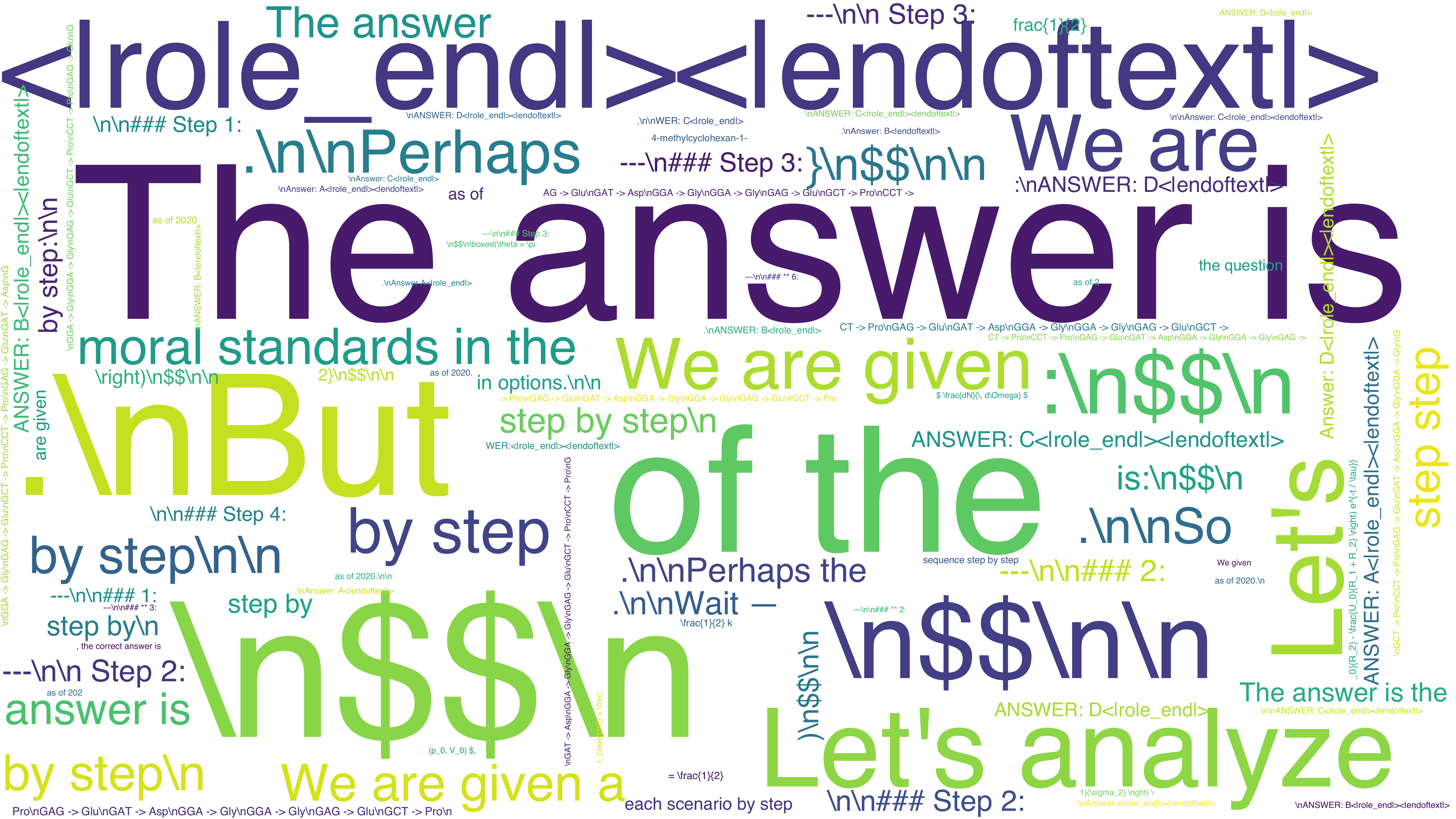} 
        \caption{Knowledge}
        \label{fig:sub1}
    \end{subfigure}
    \hfill 
    \begin{subfigure}[b]{0.3\textwidth}
        \centering
        \includegraphics[width=\linewidth]{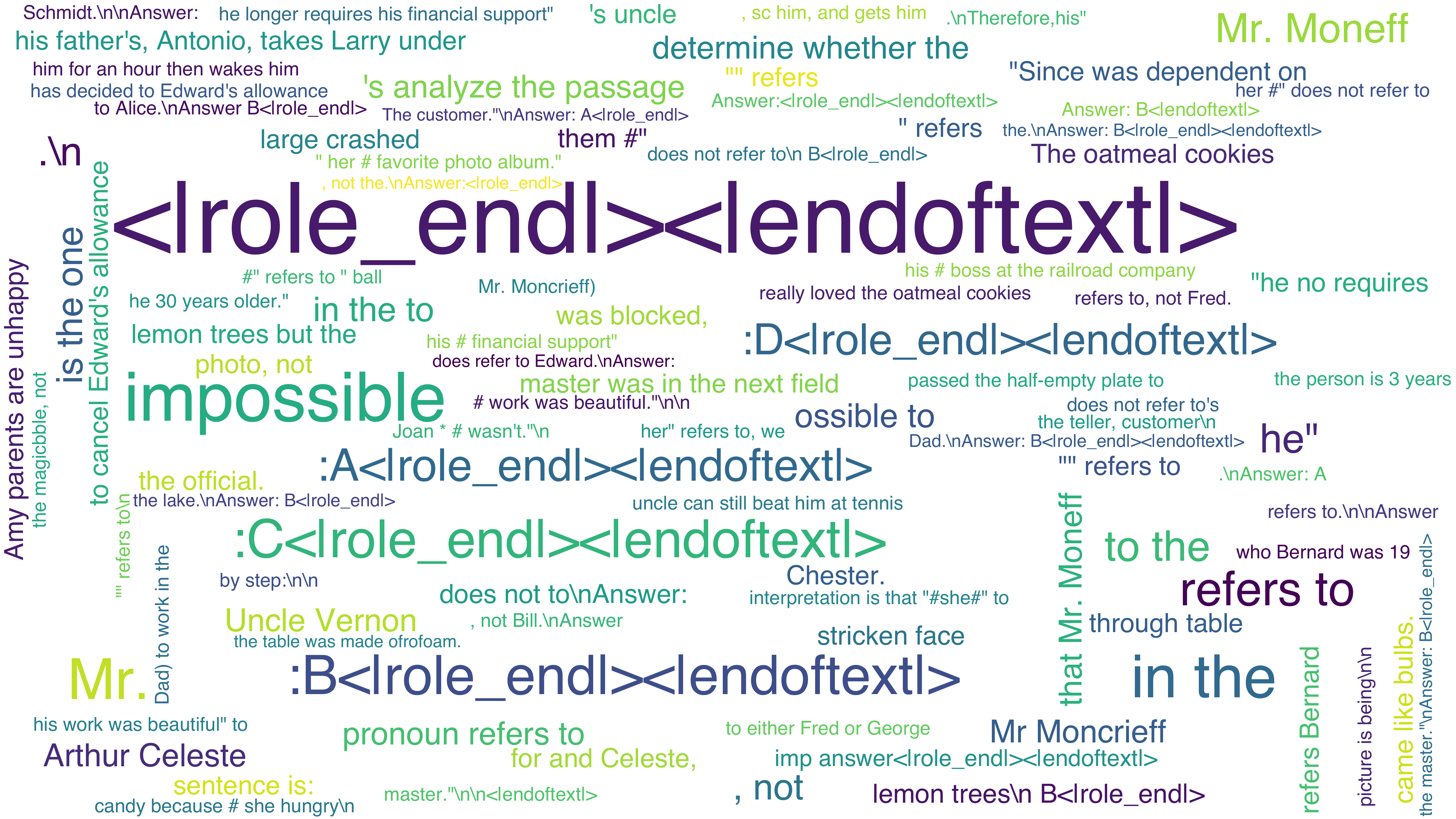}
        \caption{Natural Language Understand}
        \label{fig:sub2}
    \end{subfigure}
    \hfill
    \begin{subfigure}[b]{0.3\textwidth}
        \centering
        \includegraphics[width=\linewidth]{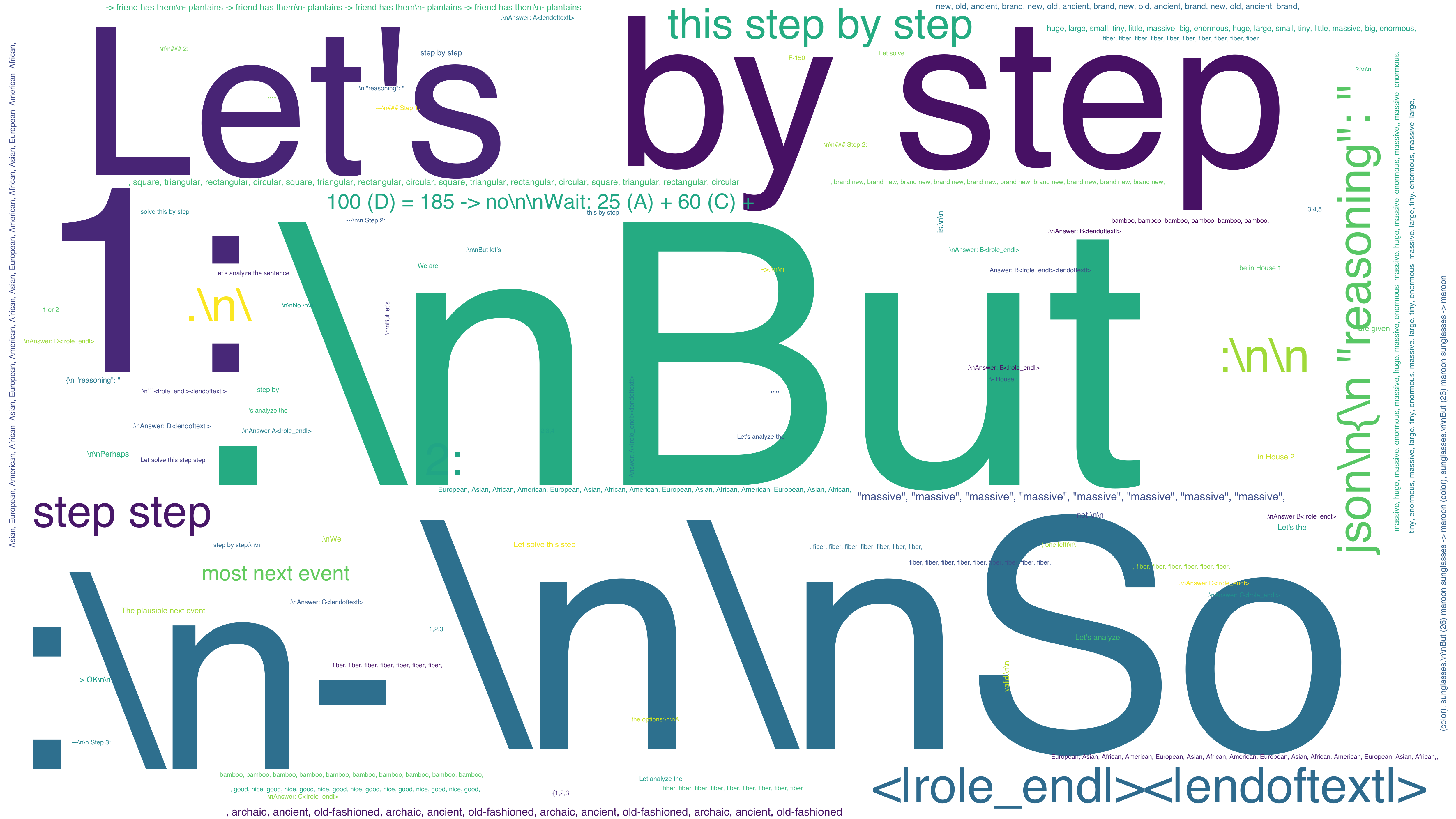}
        \caption{Reasoning}
        \label{fig:sub3}
    \end{subfigure}

    \vspace{1em} 

    \begin{subfigure}[b]{0.3\textwidth}
        \centering
        \includegraphics[width=\linewidth]{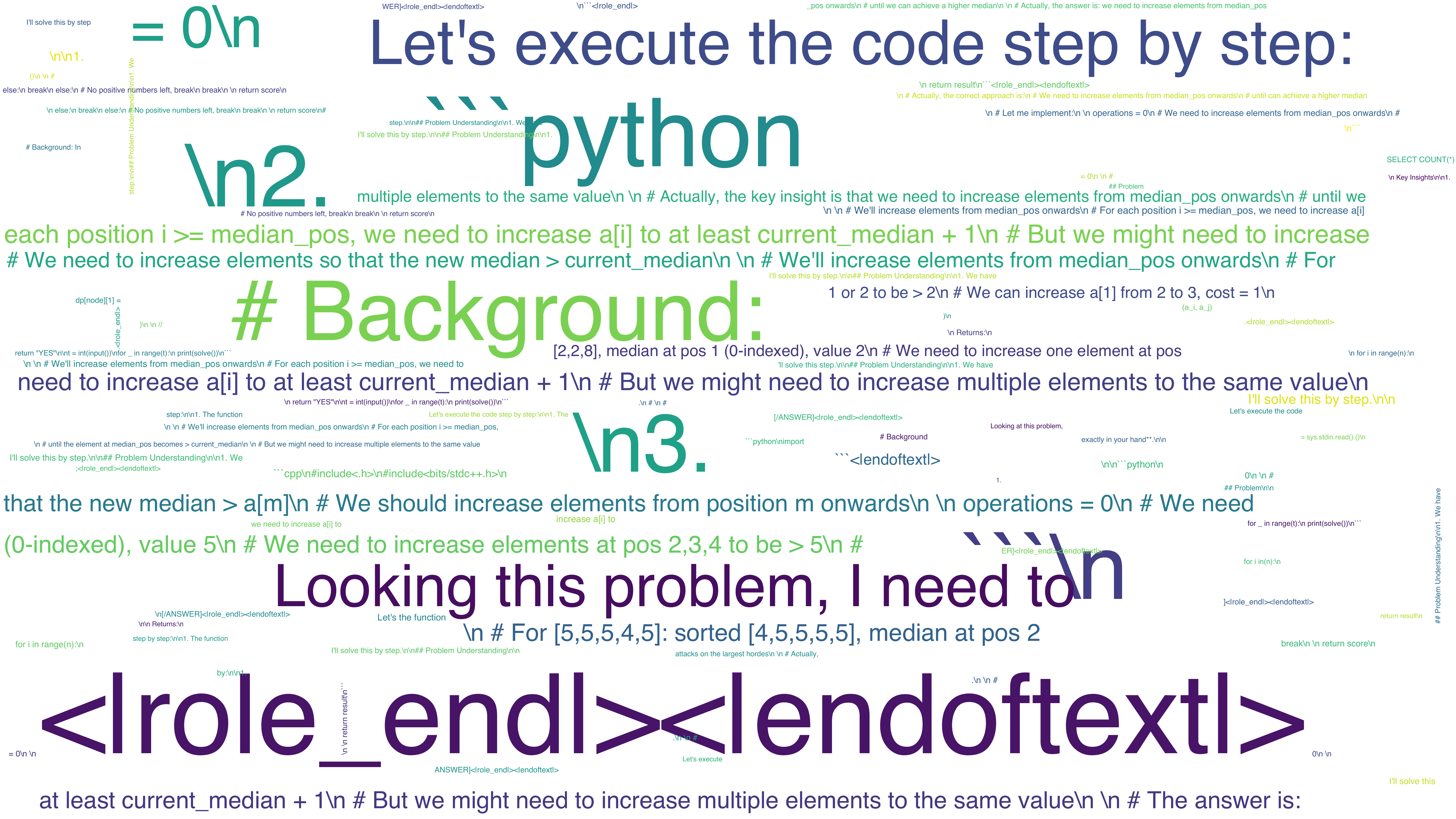}
        \caption{Coding}
        \label{fig:sub4}
    \end{subfigure}
    \hfill
    \begin{subfigure}[b]{0.3\textwidth}
        \centering
        \includegraphics[width=\linewidth]{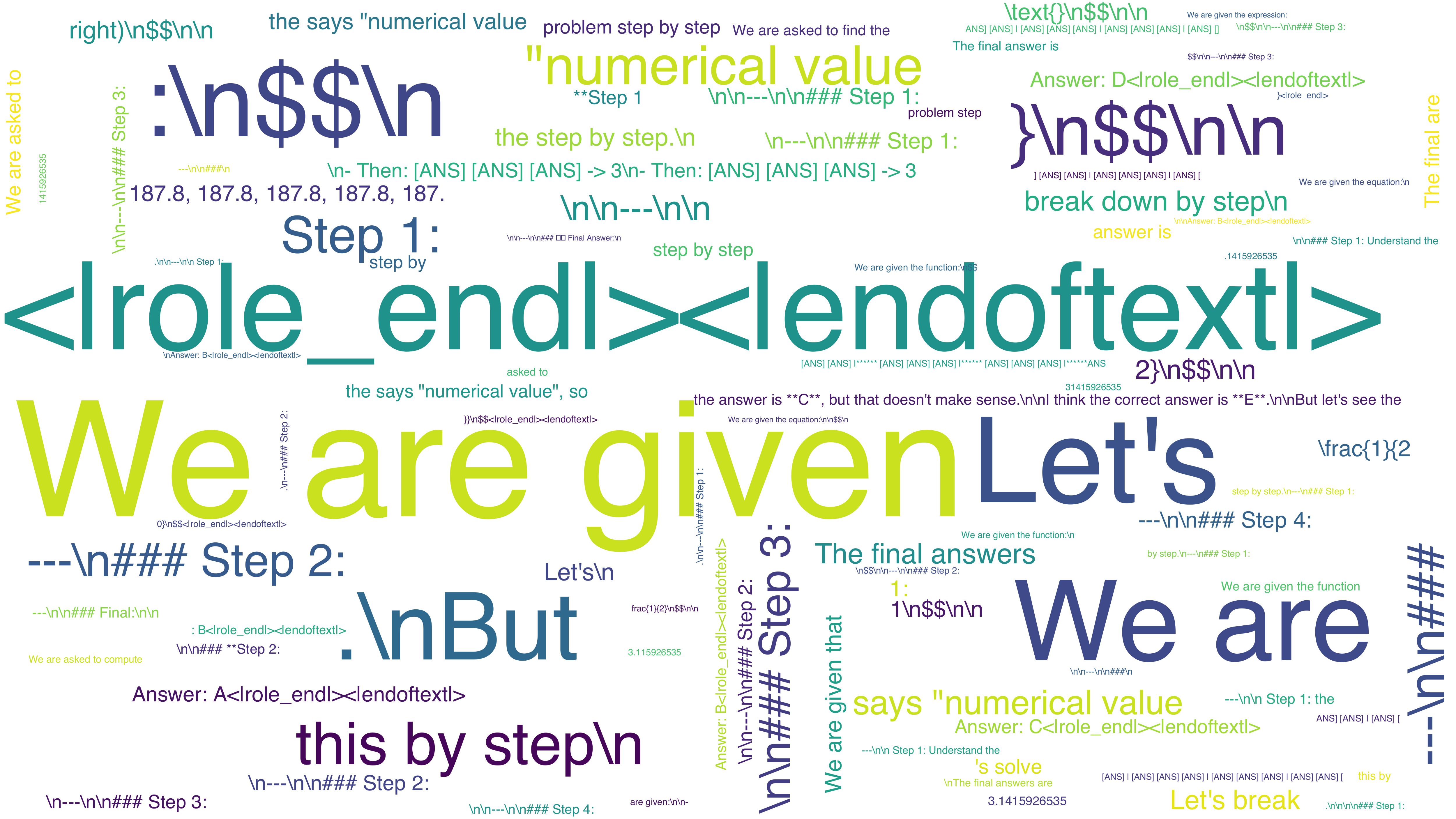}
        \caption{Math}
        \label{fig:sub5}
    \end{subfigure}
    \hfill
    \begin{subfigure}[b]{0.3\textwidth}
        \centering
        \includegraphics[width=\linewidth]{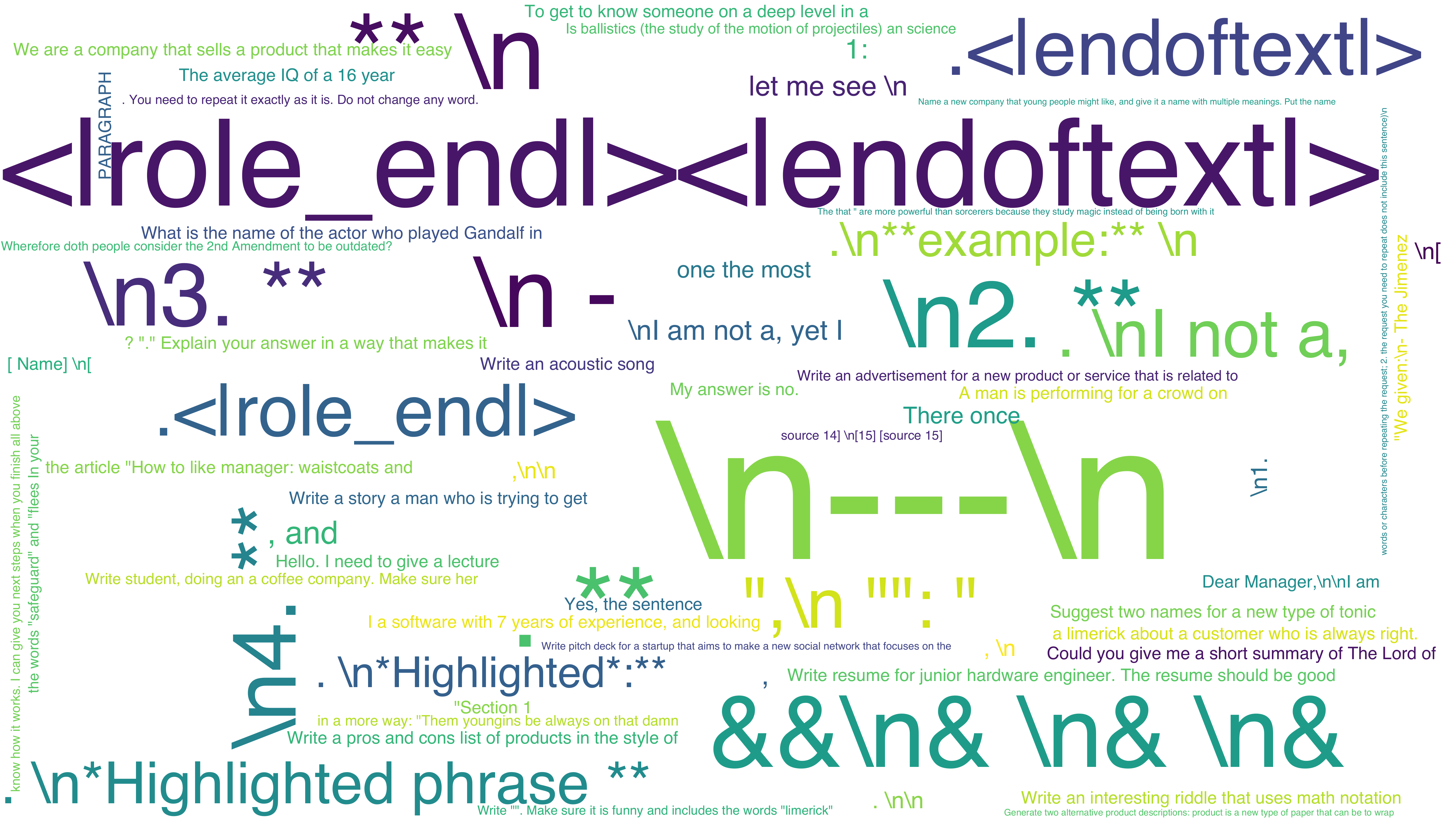}
        \caption{Agent}
        \label{fig:sub6}
    \end{subfigure}

    \caption{\textbf{Word clouds of the most frequent parallel decoding combinations across six benchmarks.} High-frequency co-occurrences are dominated by fixed phrases and specific special token combinations.}
    \label{fig:all word clouds}
\end{figure*}

\begin{figure*}[t] 
    \centering
    \includegraphics[width=\textwidth]{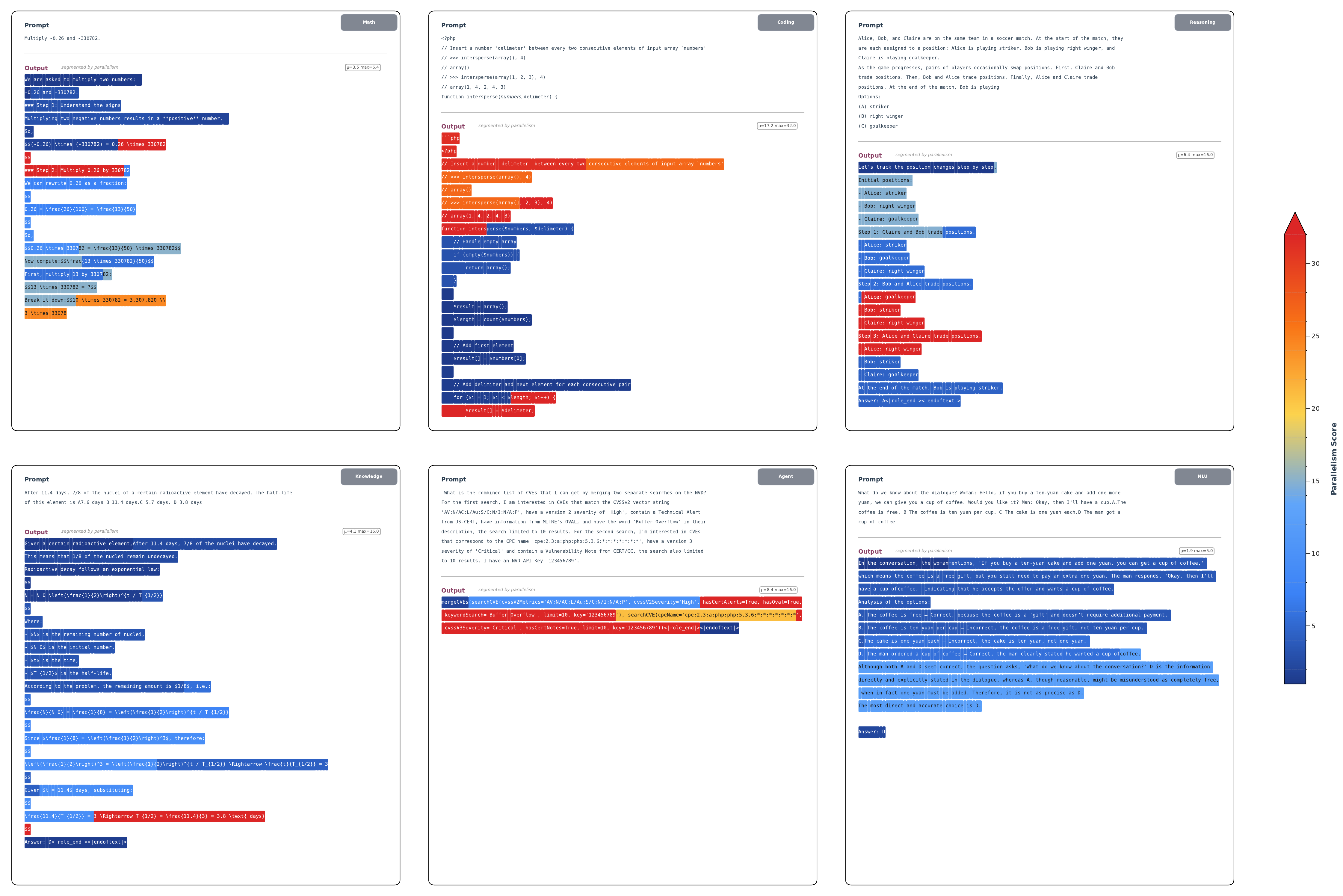} 
    \caption{\textbf{Parallelism visualization plot.} Tokens are organized into blocks, each overlaid with a background color. The intensity of the color corresponds to the average number of tokens decoded per step within that block.}
    \label{fig:Multi_Parallel_Vis}
\end{figure*}

\subsection{Metrics for Decoding Dynamics: AFP and Kendall’s τ}
\label{sec:Metrics_tau_afp}
To quantitatively analyze how MDLMs decode tokens, we introduce two key metrics.

\noindent\textbf{Average Finalization Parallelism(AFP).}
For a generated sequence of length $n$, let $c_i \in \{1,\dots,T\}$ denote the decoding step at which token $i$ is finalized. We define the number of
\emph{effective decoding steps} as
$T_{\mathrm{eff}} = \bigl|\{\, t \mid \exists\, i \text{ s.t. } c_i = t \,\}\bigr|$,
i.e., the number of distinct steps that finalize at least one token.
The \emph{Average Finalization Parallelism} (AFP) is then

\begin{equation}
\label{eq:afp}
    \mathrm{AFP}
    = \frac{n}{T_{\mathrm{eff}}}
    = \frac{n}{\bigl|\{\, t \mid \exists\, i \text{ s.t. } c_i = t \,\}\bigr|}.
\end{equation}
Autoregressive decoding yields $\mathrm{AFP} \approx 1$, while a fully
parallel decoder that finalizes all tokens in a single step achieves
$\mathrm{AFP} = n$.

\vspace{1mm}
\noindent\textbf{Kendall’s $\tau$.} 
To quantify the alignment between finalization and surface order, we adopt Kendall’s $\tau$ \cite{abdi2007kendall}, a rank correlation coefficient from statistics. For a sequence of length $n$, it measures the correlation between token indices $i$ and their finalization steps $c_i$. A pair of indices $(i, j)$ with $i < j$ is concordant if $c_i < c_j$ and discordant if $c_i > c_j$. The metric is defined as:
\begin{equation}
\label{eq:tau}
    \tau = \frac{\mathcal{C} - \mathcal{D}}{n(n-1)/2},
\end{equation}
where $\mathcal{C}$ and $\mathcal{D}$ are the numbers of concordant and discordant pairs, respectively. While AR decoding yields $\tau = 1$ (strictly monotonic), $\tau < 1$ indicates a non-monotonic generation order where future tokens may be finalized before their predecessors.

\subsection{Macro-Level Patterns: How Task Domains Shape Decoding Trajectories}

We employ AFP and Kendall’s $\tau$ to characterize MDLM dynamics, analyzing both \textit{intra-block} distributions and \textit{inter-block} evolution. 

\paragraph{Intra-block Patterns.} We observed significant disparities in the behavioral patterns of samples exhibiting the repetition phenomenon~\cite{fudu}; consequently, these instances are analyzed separately in the subsequent sections. As shown in Fig.~\ref{fig:four_plots}, non-repetitive samples exhibit clear domain-specific traits: 
(1) \textbf{Sequential Dependency}: Logic-intensive tasks (Math, Reasoning, Code) show $\tau > 0.8$, strictly following a left-to-right causal chain. Conversely, Agentic and NLU tasks yield lower $\tau$, favoring global semantic aggregation over linear derivation (Fig.~\ref{fig:case1}). 
(2) \textbf{Difficulty-awareness}: High AFP in Code and Agent domains suggests MDLMs effectively capture structural parallelism. Crucially, correct samples consistently exhibit higher AFP than incorrect ones, indicating that high-confidence parallel decoding aligns with factual correctness (Fig.~\ref{fig:case3}). 
(3) \textbf{Pathological Repetition}: Repetitive outputs display an extreme imbalance: ultra-high parallelism (AFP 5--7$\times$ normal) alongside near-zero $\tau$ (\cref{fig:case2,fig:case4}). This reflects \textbf{entropy collapse}, where the model is trapped in a low-entropy state, triggering aggressive parallel filling.

\paragraph{Inter-block Patterns.} To mitigate length bias, we group samples by total block count. Trajectories within the same domain remain remarkably consistent (Fig.~\ref{fig:Trajectories}; Appendix.~\ref{app:Inter-block Patterns}).
(1) \textbf{$\tau$ Evolution}: Reasoning tasks maintain high-linearity $\tau$; coding exhibits an ``arch-shaped'' trajectory. A universal $\tau$ drop in the final block across all domains reveals a ``generate-EOS-then-backfill'' strategy. 
(2) \textbf{AFP Evolution}: In Knowledge and Math, AFP for correct samples climbs steadily as reasoning progresses, while stagnating for incorrect ones. This suggests the model captures the \textbf{uncertainty decay} driven by contextual accumulation.

\subsection{Micro-Level Semantics: Probing Parallelism and POS-based Order}

\noindent\textbf{Semantic Parallel Composition.} Building upon the statistical analysis of 58 evaluation tasks, we further investigate high-frequency token combinations across varying degrees of parallelism within six capability dimensions (as illustrated in Figure~\ref{fig:all word clouds}). Empirical observations reveal that these parallel-generated token sequences share a distinct characteristic: they are highly formulaic with minimal semantic information gain. These combinations primarily consist of newlines, punctuation, JSON delimiters, Markdown/mathematical symbols, and high-frequency transitional markers. Such patterns suggest that parallel decoding tends to produce generic formatting and discourse structures. This phenomenon aligns closely with the theoretical framework established in Section~\ref{sec:Theoretical Support}, which posits that highly templated token sequences exhibit near-zero Conditional Total Correlation, i.e., $C(Y|X) \approx 0$. In these instances, the predictive signal for each token $y_i \in Y$ within a block originates almost entirely from the global context $X$, rather than local interactions within the block $Y$. For example, the header of a \texttt{for} loop inherently determines the subsequent newline and indentation; since these tokens do not depend on each other's local state, they can be generated in parallel.

\vspace{1.0mm}
\noindent\textbf{Semantic Generation Order.} 
To elucidate the underlying mechanism of generation order, we conduct a case study on the Knowledge Benchmark. We utilize Stanza \cite{Stanza} to perform Part-of-Speech (POS) tagging on model outputs and compute the average decoding step for various POS categories (results presented in Table~\ref{tab:pos_global_avg_step}). The results reveal a clear hierarchical pattern: the model first constructs a "structural backbone" and subsequently fills in "fine-grained details." This non-linear generation logic suggests that MDLMs do not simply construct text in a strictly left-to-right fashion. Instead, they prioritize the deployment of structural anchors by identifying global certainty, followed by the incremental refinement of complex modifiers. Further details are provided in Appendix~\ref{app:pos}.


\subsection{Case Studies: Grounding Dynamics in Semantic Anchors}

\noindent\textbf{Parallelism Cases.} Figure \ref{fig:Multi_Parallel_Vis} provides a qualitative validation of the aggregate statistical trends through representative case studies. 
First, we observe a distinct inverse correlation between parallel intensity and semantic novelty: parallelism peaks during the generation of deterministic content, such as numerical recapitulation, logical restatements, or formulaic templates. 
Second, in the {Knowledge and Math} domains, parallel intensity scales as the reasoning trajectory converges, manifesting a "clarification effect" where decoding accelerates once the solution path becomes certain. 
Finally, {Coding} tasks exhibit an initial surge in parallelism driven by structural boilerplate, which subsequently tapers off as the model transitions to complex core logic. 
These fine-grained observations are highly consistent with the statistical trajectories visualized in \cref{fig:Trajectories}. 
\begin{figure*}[t] 
    \centering
    \includegraphics[width=\textwidth]{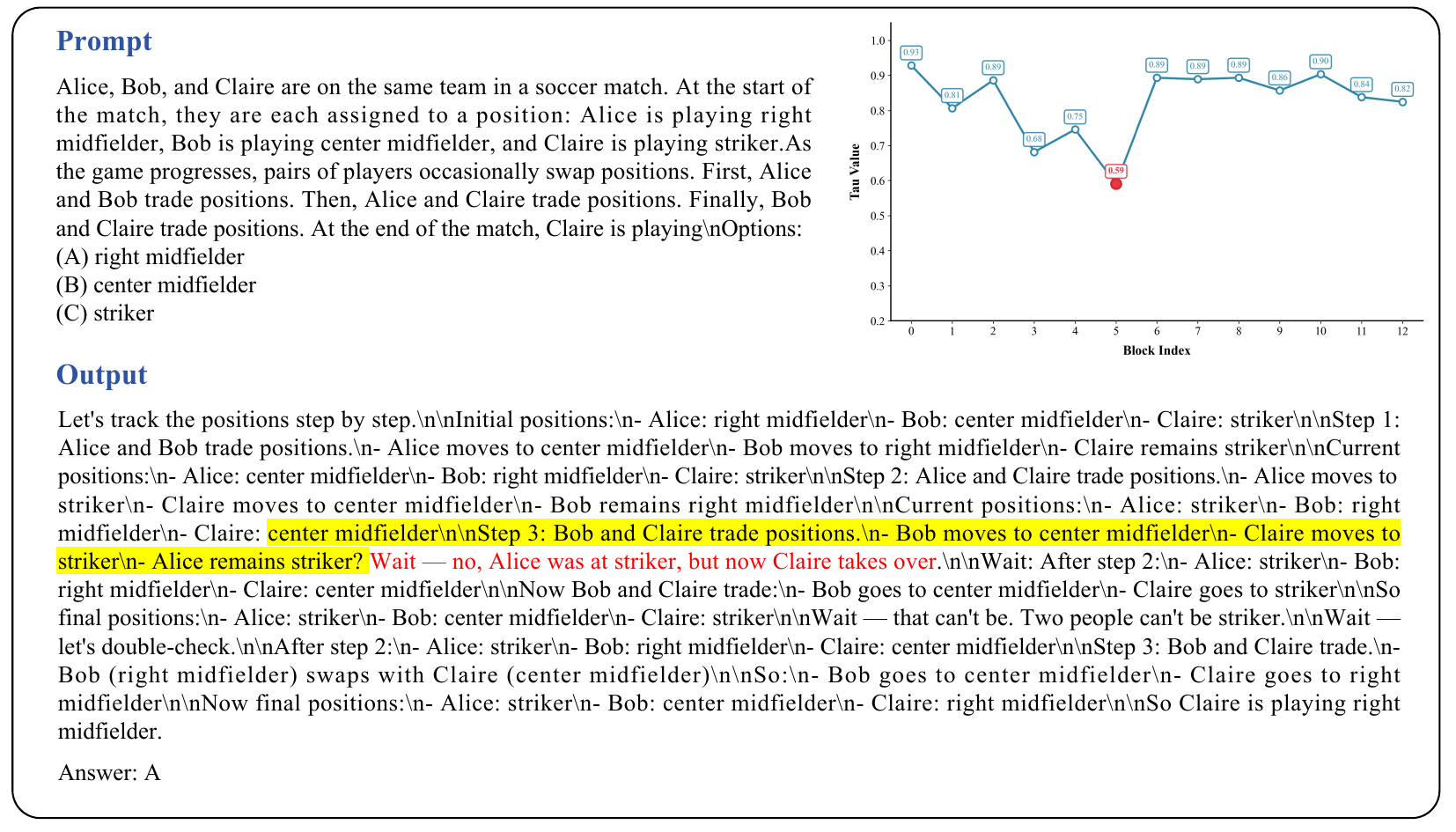} 
    \caption{Case study. Highlighted blocks represent the local minima of τ. More cases in App. \ref{app:order_disruption}.}
    \label{fig:tau_case}
\end{figure*}

\noindent\textbf{Order Disruptions at Semantic Pivots.} 
Beyond the high global sequentiality shown in \cref{fig:Trajectories,fig:four_plots}, we observe localized "plunges" in Kendall’s $\tau$ (\cref{fig:tau_case}). These minima typically align with \textit{semantic transitions} (e.g., shifting from chain-of-thought reasoning to the final answer) or \textit{structural boundaries} (e.g., paragraph breaks or code-to-comment transitions). At these junctions, the model momentarily diverges from a strict left-to-right trajectory. This suggests that MDLMs treat transition markers as \textit{structural anchors} that are generated out-of-order or in parallel to "pre-plan" the layout of the subsequent thought block before fully finalizing the current one. Detailed case analyses are provided in \cref{app:order_disruption}.

\section{Discussion: Unlocking the Potential of MDLMs}
Driven by the findings in Section~\ref{Decoding Dynamics of MDLMs}, we identify three key potentials for MDLMs: parallelism, generation order, and iterative editing.

\paragraph{1) Parallelism Potential: Acceleration.}
While MDLMs parallelize structured content effectively (Sec.~\ref{sec:Metrics_tau_afp}), current speedup is limited by inference frameworks. Future diffusion-specific optimizations (e.g., \texttt{dinfer}~\cite{dinfer}) will be essential to realizing their speed advantage in structured and long-form generation.

\paragraph{2) Sequential Potential: Non-Sequential Dependency.}
Unlike the unidirectional dependency of AR models, MDLMs support "any-order" inference, enabling them to handle tasks with non-linear dependency structures. 
\textbf{Evidence and Analysis:} In our Sudoku experiments (Appendix~\ref{app:Sudoku}), as well as in comparative studies on chemical molecule design and protein design, MDLMs exhibit causal modeling capabilities that surpass those of AR models. The models spontaneously acquire an “easy-to-hard” strategy: they first fill in high-certainty “anchor points,” and then leverage bidirectional attention to resolve global uncertainty.
\textbf{Research Direction:} Current uniform masking strategies fail to account for the internal causal hierarchy of data, leading to an excessively large search space and numerous invalid paths during training. A key future challenge lies in introducing heuristic biases to optimize decoding path modeling by mining reverse causality.

\paragraph{3) Editing Potential: A Theoretical Path to Bridging the Accuracy Gap.}
To mitigate the loss of causal information stemming from the conditional independence assumption in parallel decoding, we explore the theoretical potential of MDLMs for parallel editing. Proofs provided in Appendix~\ref{app:theory_proof} demonstrate that a "generate-then-edit" paradigm not only effectively recovers accuracy degradation but also redefines the boundaries of parallel generation. This paradigm enables the model to engage in bold probabilistic exploration during the initial stages, followed by efficient parallel editing to rectify biases in batches. This "fast sampling followed by parallel refinement" strategy theoretically allows the model to achieve superior generation quality compared to single-pass inference within a shorter equivalent time. This signifies the transition of MDLMs from mere samplers to self-evolving reasoning systems. Furthermore, we implemented an adaptation of editing capabilities on LLaDA2-mini-16B, the preliminary experimental validation of which is detailed in Appendix~\ref{app:editing_preliminary_results}.

\section{Conclusion}
To the best of our knowledge, this work presents the first systematic and large-scale evaluation of state-of-the-art Masked Diffusion Language Models (MDLMs). By benchmarking across 58 diverse tasks and performing an in-depth mechanistic analysis of the 100B LLaDA model, we reveal the fundamental decoding dynamics that govern parallel and arbitrary-order generation. Our findings, grounded in both extensive empirical evidence and theoretical proofs, illuminate the limits of current parallel factorization while highlighting the immense potential of MDLMs for non-linear causal modeling and efficient iterative editing. This study provides a foundational blueprint for the development of next-generation non-autoregressive language models that are both logically flexible and computationally efficient.

\section*{Limitations}
Potential Risks and Misinterpretation. We evaluate all models under a unified local pipeline (prompting/decoding/scoring) to control confounders and enable fair apples-to-apples relative comparisons across architectures. However, these controlled scores should not be treated as leaderboard-equivalent or directly compared to numbers from original reports, since prompts, decoding constraints, and answer-matching can differ and may lead to different absolute results. In addition, hardware-specific optimization can affect outcomes: e.g., OpenPangu is primarily optimized for Huawei Ascend, and our NVIDIA environment (without Ascend-specific kernels) may yield different practical behavior, so such differences should not be over-interpreted as purely modeling effects.

\section*{Acknowledgments}
We thank Ant Group for providing the computing resources and technical support. ChatGPT was used for language polishing.

\bibliography{custom}

\appendix
\clearpage

\section{Appendix}

\label{sec:appendix}

\subsection{Details of Experimental Protocol}
\label{DEP}
\noindent\textbf{Model Selection and Configurations.} 
We evaluate 8 state-of-the-art MDLMs, including LLaDA, Trado ~\cite{tracerl}, and SDAR~\cite{sdar}, with scales reaching 100B parameters (e.g., LLaDA2-flash~\cite{llada2.0}). These models span both block-wise and global diffusion decoding paradigms. For clarity, the inference block size for each model is explicitly documented in our experimental result tables to facilitate a direct comparison of parallelism granularity. As baselines, we benchmark against frontier AR models, including the open-source Qwen3 series ~\cite{qwen3} and proprietary models such as Gemini-2.5 Pro~\cite{gemini25} and OpenAI o3.

\vspace{1mm}
\noindent\textbf{Benchmarks.} To fill the void in systematic assessments for MDLMs, we curate \textbf{58 benchmarks} categorized into six dimensions: 
(i) {Knowledge} (e.g., MMLU); 
(ii) {Mathematics} (e.g., GSM8K, MATH); 
(iii) {Reasoning} (e.g., BBH, GPQA); 
(iv) {Language Understanding} (e.g., Hellaswag); 
(v) {Agentic} (e.g., BFCL); 
(vi) {Coding} (e.g., HumanEval). 
This suite quantifies the performance gap between diffusion and autoregressive paradigms across diverse domains. See Appendix \ref{BD} for a complete list.

\noindent\textbf{Implementation Details.} 
Our large-scale evaluation is conducted on a distributed cluster of {512 NVIDIA GPUs}. To maintain the architectural integrity and default configurations of each model, we perform inference using the {original implementations} provided in their respective official open-source repositories. For OpenPangu, we observe that it is natively optimized for the {Huawei Ascend} architecture and lacks optimized kernels for NVIDIA hardware. Due to the resulting {excessive computational overhead}, evaluating OpenPangu on the full 58-benchmark suite was infeasible; we therefore report its performance on a {representative subset}.

\noindent\textbf{Unified evaluation protocol.} All results are obtained by running each model locally. Except for the parameter configurations specified in the table, all other settings follow the best-performing configurations reported in each model's technical report, and results are generated under a unified evaluation pipeline. While this ensures a fair, apples-to-apples comparison across models and benchmarks, the absolute scores of some models may be lower than those reported in their original papers or public leaderboards due to differences in evaluation settings (e.g., prompts, and our stricter answer-matching/content scoring). Therefore, we recommend interpreting our results primarily as relative comparisons under a controlled protocol, rather than as leaderboard-equivalent numbers.

\subsection{Details of Benchmarks}
\label{BD}
This section provides detailed descriptions of the 58 benchmarks used in our large-scale evaluation. Benchmarks are grouped into six categories: Coding, Mathematics, Knowledge, Language Understanding, Reasoning, and Agentic Agentic Capabilities.

\subsection*{Coding}
To comprehensively assess coding capabilities, our evaluation spans from foundational synthesis to complex, domain-specific reasoning. 
We use \textbf{CruxEval-O} to evaluate a model's ability to predict program outputs \citep{gu2024cruxeval}; \textbf{mbpp} to assess the synthesis of basic Python programs from natural language \citep{austin2021program}; \textbf{MultiPL-E} to test multilingual code generation capabilities \citep{cassano2022multipl}; and \textbf{humaneval} for standard code generation from docstrings \citep{chen2021evaluatinglargelanguagemodels}. Furthermore, we assess performance on more complex tasks with \textbf{livecodebench\_v6}, using novel problems from programming contests \citep{feng2024livecodebench}; \textbf{Bigcodebench-Full}, which challenges models to invoke function calls as tools \citep{benallal2024bigcode}; \textbf{LCBench} for learning curve analysis \citep{zimmer2021auto}; the complex text-to-SQL benchmark \textbf{spider} \citep{yu2018spider}; the challenging algorithmic problems from \textbf{CodeForces} \citep{li2022alphacode}; the scientific-focused \textbf{SciCode} benchmark \citep{tian2024scicode}; and \textbf{Aider} for testing interactive code editing skills \citep{aider-git}.

\begin{table*}[t]
\begingroup
\setlength{\arrayrulewidth}{0.6pt}
\newcommand{\Vsep}{\hspace{5pt}\vrule width \arrayrulewidth\hspace{5pt}}

\centering
\small
\setlength{\tabcolsep}{3.2pt}
\renewcommand{\arraystretch}{1.12}
\caption{\textbf{Performance comparison across code benchmarks.} "Block Size 4 (max)" refers to the parallel computation of 4-position logits where only the single most confident token is decoded. Other configurations utilize threshold-based adaptive decoding, accepting all tokens that exceed a specific confidence level. }
\label{tab:code-benchmarks}

\newlength{\cw}
\setlength{\cw}{1.25cm} 
\newcommand{\bh}[1]{%
  \makecell[c]{\parbox[t]{\cw}{\centering\arraybackslash #1}}%
}
\newcommand{\hbreak}{\allowbreak}

\newcommand{\ShortSep}{%
  \noalign{\vskip 1pt}%
  \omit\hfil
    \hspace{1pt}%
    \leaders\hrule height 0.6pt\hskip 4.8cm 
    \hspace{2pt}\hfil
  & \omit\hfil
  & \multispan{12}\omit\hfil
  \cr
  \noalign{\vskip 1pt}%
}

\resizebox{\linewidth}{!}{%
\begin{tabular}{@{\hspace{5pt}}p{4.8cm} @{\Vsep} c @{\Vsep\hspace{3pt}}
                *{12}{S[table-format=2.2,table-column-width=\cw]} @{\hspace{5pt}}}

\specialrule{0.9pt}{0pt}{2pt}
\multicolumn{1}{c}{\textbf{Model}} & \multicolumn{1}{c}{\textbf{}} & \multicolumn{12}{c}{\textbf{Coding}} \\
\specialrule{0.6pt}{2pt}{2pt}
\textbf{Model Name} & \textbf{Block Size} &
\multicolumn{1}{c}{\bh{CruxEval-O}} &
\multicolumn{1}{c}{\bh{mbpp}} &
\multicolumn{1}{c}{\bh{MultiPL-E}} &
\multicolumn{1}{c}{\bh{humaneval}} &
\multicolumn{1}{c}{\bh{openai\_\hbreak humaneval}} &
\multicolumn{1}{c}{\bh{livecode\_\hbreak benchv6}} &
\multicolumn{1}{c}{\bh{Bigcode\hbreak bench}} &
\multicolumn{1}{c}{\bh{LCBench}} &
\multicolumn{1}{c}{\bh{spider}} &
\multicolumn{1}{c}{\bh{Code\hbreak Forces}} &
\multicolumn{1}{c}{\bh{Sci\hbreak Code}} &
\multicolumn{1}{c}{\bh{Aider}} \\
\specialrule{0.6pt}{2pt}{2pt}

qwen3-8B  & {/} & 74.25 & 79.80 & 67.14 & 79.73 & 84.76 & 26.60 & 36.14 & 48.73 & 72.94 & 26.14 & 17.71 & 56.95 \\
qwen3-14B & {/} & 79.25 & 83.26 & 69.07 & 84.91 & 87.80 & 29.46 & 38.51 & 55.61 & 76.94 & 27.39 & 21.96 & 68.42 \\
qwen3-30B-A3B-Instruct-2507 & {/} & 90.31 & 86.86 & 75.64 & 89.41 & 93.29 & 47.36 & 41.14 & 79.22 & 81.60 & 73.56 & 26.22 & 74.25 \\
qwen3-Next-80B-A3B-Instruct & {/} & 94.38 & 94.82 & 78.87 & 90.93 & 93.90 & 61.23 & 44.91 & 90.75 & 84.12 & 72.68 & 32.64 & 84.77 \\
o3 & {/} & 99.38 & 98.59 & 71.23 & 89.94 & 98.17 & 73.79 & 40.35 & 94.97 & 77.69 & 86.49 & 40.97 & 90.23 \\
Gemini-2.5 Pro & {/} & 94.38 & 98.33 & 48.26 & 91.16 & 25.00 & 72.08 & 89.97 & 89.97 & 85.10 & 84.77 & 44.10 & 93.98 \\

TraDo-8B-Instruct & 4 & 58.56 & 79.10 & 34.11 & 44.82 & 81.10 & 20.93 & 34.30 & 40.19 & 72.33 & 15.22 & 10.42 & 48.68 \\
TraDo-4B-Instruct & 4 & 52.81 & 68.15 & 21.18 & 27.29 & 78.05 & 14.10 & 30.96 & 26.46 & 71.63 & 6.00  & 11.81 & 32.71 \\
DiRL-8B-Instruct  & 4 & 62.50 & 74.94 & 48.64 & 68.14 & 73.78 & 15.97 & 34.30 & 32.83 & 70.98 & 17.21 & 10.94 & 43.80 \\
SDAR-30B-A3B-Chat & 4 & 60.62 & 75.18 & 23.04 & 12.27 & 85.37 & 21.37 & 38.60 & 47.06 & 76.99 & 12.77 & 9.11  & 45.49 \\
SDAR-4B-Chat      & 4 & 51.06 & 69.09 & 17.65 & 14.10 & 71.95 & 10.79 & 29.30 & 24.83 & 69.59 & 5.62  & 9.81  & 30.83 \\
SDAR-8B-Chat      & 4 & 56.81 & 74.88 & 31.48 & 55.11 & 77.44 & 12.17 & 34.47 & 32.41 & 71.03 & 7.69  & 12.33 & 42.29 \\
Dream-v0-Instruct-7B & full & 40.00 & 52.55 & 25.74 & 16.16 & 52.44 & 5.40  & 18.95 & 8.63  & 47.28 & 9.14  & \multicolumn{1}{c}{--} & 5.26 \\

LLaDA-8B-Instruct & 32 & 20.50 & 21.52 & 11.61 & 16.01 & 19.51 & 1.60 & 2.02 & 3.06 & 28.32 & 1.17 & \multicolumn{1}{c}{--} & 4.51 \\
LLaDA-1.5         & 32 & 21.69 & 22.10 & 11.71 & 14.79 & 23.17 & 1.76 & 2.19 & 3.29 & 28.60 & 1.17 & \multicolumn{1}{c}{--} & 3.57 \\
openPangu-7B-Diffusion-Base & 32 &
42.00 & \multicolumn{1}{c}{--} & \multicolumn{1}{c}{--} & \multicolumn{1}{c}{--} & 18.90 &
\multicolumn{1}{c}{--} & \multicolumn{1}{c}{--} & 18.05 &
\multicolumn{1}{c}{--} & \multicolumn{1}{c}{--} & \multicolumn{1}{c}{--} & 0.00 \\

openPangu-7B-Diffusion-Base & 4(max) &
41.50 & \multicolumn{1}{c}{--} & \multicolumn{1}{c}{--} & \multicolumn{1}{c}{--} & 67.68 &
\multicolumn{1}{c}{--} & \multicolumn{1}{c}{--} & 21.26 &
\multicolumn{1}{c}{--} & \multicolumn{1}{c}{--} & \multicolumn{1}{c}{--} & \multicolumn{1}{c}{--} \\


\multirow{3}{*}{LLaDA2-mini-16B}
 & 32  & 72.62 & 82.17 & 70.25 & 81.25 & 85.37 & 31.72 & 33.07 & 67.81 & 76.99 & 22.84 & 13.19 & 39.85 \\
 & 64  & 69.44 & 81.06 & 69.32 & 81.71 & 85.98 & 28.52 & 31.14 & 64.44 & 74.57 & 17.31 & 12.50 & 39.85 \\
 & 128 & 31.31 & 65.46 & 63.70 & 67.30 & 71.95 & 16.85 & 27.46 & 55.44 & 61.76 & 23.31 & 4.17  & 20.30 \\

\multirow{3}{*}{LLaDA2-flash-100B}
 & 32  & 85.44 & 89.14 & 76.01 & 87.65 & 94.51 & 42.51 & 40.88 & 82.91 & 82.49 & 47.72 & 24.65 & 65.04 \\
 & 64  & 79.00 & 86.62 & 74.77 & 85.29 & 91.46 & 42.73 & 39.82 & 78.49 & 81.37 & 51.58 & 21.88 & 63.53 \\
 & 128 & 42.19 & 42.10 & 72.92 & 67.68 & 71.95 & 18.56 & 25.00 & 68.55 & 46.86 & 47.35 & 7.90  & 32.52 \\

\specialrule{0.9pt}{2pt}{0pt}
\end{tabular}%
}
\endgroup
\end{table*}

\subsection*{Mathematics}

To evaluate mathematical capabilities, we employ a diverse set of benchmarks. We begin with foundational multi-step reasoning using \textbf{GSM8K} \cite{gsm8k} and its augmented version, \textbf{GSM\_Plus} \cite{gsm-plus}. We then assess advanced problem-solving with the challenging \textbf{MATH} competition dataset \cite{math-dataset}, a curated subset (\textbf{math500} \cite{math500}), and its Chinese-language parallel, \textbf{CMATH} \cite{cmath}, supplemented by \textbf{GKMathUnion}. To probe the limits of model reasoning, we use elite-level competition problems from \textbf{OlympiadBench} \cite{olympiadbench}, recent editions of the \textbf{AIME (24/25)} \cite{aime} and \textbf{HMMT25} \cite{hmmt}, and the exceptionally difficult \textbf{HARDMath2} \cite{hardmath2}. Performance on higher education material is measured using \textbf{College\_Math} \cite{college-math}, the proof-based \textbf{UGMathBench} \cite{xu2025ugmathbench}, and the theorem-application test \textbf{TheoremQA} \cite{theoremqa}. Finally, for a holistic view, we include composite and dynamic benchmarks like \textbf{MathBench} \cite{mathbench}, the \textbf{Minerva\_Math} problem set \cite{minerva}, and the community-driven \textbf{Livemathbench} \cite{liu2025your}.
\begin{table*}[h]
\begingroup
\setlength{\arrayrulewidth}{0.6pt}
\newcommand{\Vsep}{\hspace{5pt}\vrule width \arrayrulewidth\hspace{5pt}}

\centering
\small
\setlength{\tabcolsep}{3.2pt}
\renewcommand{\arraystretch}{1.12}
\caption{\textbf{Performance comparison across mathematics benchmarks.} "Block Size 4 (max)" refers to the parallel computation of 4-position logits where only the single most confident token is decoded. Other configurations utilize threshold-based adaptive decoding, accepting all tokens that exceed a specific confidence level. }
\label{tab:math-benchmarks}
\setlength{\cw}{1.25cm} 
\newcommand{\bh}[1]{%
  \makecell[c]{\parbox[t]{\cw}{\centering\arraybackslash #1}}%
}
\newcommand{\hbreak}{\allowbreak}

\newcommand{\ShortSep}{%
  \noalign{\vskip 1pt}%
  \omit\hfil
    \hspace{1pt}%
    \leaders\hrule height 0.6pt\hskip 4.8cm 
    \hspace{2pt}\hfil
  & \omit\hfil
  & \multispan{12}\omit\hfil
  \cr
  \noalign{\vskip 1pt}%
}

\resizebox{\linewidth}{!}{%
\begin{tabular}{@{\hspace{5pt}}p{4.8cm} @{\Vsep} c @{\Vsep\hspace{3pt}}
                *{17}{S[table-format=2.2,table-column-width=\cw]} @{\hspace{5pt}}}

\specialrule{0.9pt}{0pt}{2pt}
\multicolumn{1}{c}{\textbf{Model}} & \multicolumn{1}{c}{\textbf{}} & \multicolumn{17}{c}{\textbf{Mathematics}} \\
\specialrule{0.6pt}{2pt}{2pt}
\textbf{Model Name} & \textbf{Block Size} &
\multicolumn{1}{c}{\bh{GSM8K}} &
\multicolumn{1}{c}{\bh{math}} &
\multicolumn{1}{c}{\bh{Olympiad\hbreak Bench}} &
\multicolumn{1}{c}{\bh{CMATH}} &
\multicolumn{1}{c}{\bh{GSM\_\hbreak Plus}} &
\multicolumn{1}{c}{\bh{College\_\hbreak Math}} &
\multicolumn{1}{c}{\bh{Math\hbreak Bench}} &
\multicolumn{1}{c}{\bh{Minerva\_\hbreak Math}} &
\multicolumn{1}{c}{\bh{math500}} &
\multicolumn{1}{c}{\bh{GKMath\hbreak Union}} &
\multicolumn{1}{c}{\bh{Live\hbreak mathbench}} &
\multicolumn{1}{c}{\bh{AIME\hbreak 24}} &
\multicolumn{1}{c}{\bh{AIME\hbreak 25}} &
\multicolumn{1}{c}{\bh{HARD\hbreak Math2}} &
\multicolumn{1}{c}{\bh{UGMath\hbreak Bench}} &
\multicolumn{1}{c}{\bh{Theorem\hbreak QA}} &
\multicolumn{1}{c}{\bh{HMMT\hbreak 25}} \\
\specialrule{0.6pt}{2pt}{2pt}

qwen3-8B  & {/} & 93.73 & 86.24 & 55.07 & 95.54 & 86.06 & 83.07 & 83.65 & 32.05 & 86.20 & 77.46 & 63.39 & 30.10 & 22.29 & 5.57  & 58.77 & 53.00 & 11.35 \\
qwen3-14B & {/} & 95.11 & 88.54 & 58.63 & 95.42 & 87.32 & 84.81 & 88.16 & 35.78 & 88.85 & 79.84 & 66.63 & 32.50 & 26.25 & 1.78  & 62.15 & 59.72 & 11.41 \\
qwen3-30B-A3B-Instruct-2507 & {/} & 96.51 & 96.54 & 77.67 & 96.61 & 89.64 & 89.39 & 94.58 & 35.78 & 98.00 & 91.11 & 85.52 & 76.09 & 61.30 & 4.38  & 70.64 & 72.00 & 43.65 \\
qwen3-Next-80B-A3B-Instruct & {/} & 96.42 & 91.52 & 80.37 & 97.04 & 96.17 & 89.82 & 95.79 & 45.83 & 91.53 & 91.83 & 88.07 & 87.73 & 68.64 & 38.27 & 72.80 & 75.76 & 48.43 \\
o3 & {/} & 95.45 & 96.30 & 82.81 & 94.72 & 88.83 & 89.67 & 93.00 & 19.79 & 94.05 & 91.87 & 82.48 & 88.59 & 85.62 & 44.31 & 74.55 & 76.81 & 76.95 \\
Gemini-2.5 Pro & {/} & 92.99 & 85.78 & 53.04 & 95.72 & 87.30 & 90.13 & 95.31 & 33.88 & 88.75 & 92.62 & 89.28 & 87.81 & 84.27 & 43.36 & 74.47 & 75.94 & 0.00 \\

TraDo-8B-Instruct & 4 & 92.51 & 78.88 & 44.59 & 93.58 & 83.73 & 77.79 & 81.61 & 38.60 & 81.20 & 63.33 & 45.90 & 16.88 & 16.46 & 4.27 & 49.25 & 48.50 & 3.33 \\
TraDo-4B-Instruct & 4 & 91.76 & 76.74 & 41.04 & 93.26 & 82.95 & 75.30 & 80.16 & 34.99 & 76.35 & 58.41 & 40.30 & 13.54 & 6.67  & 0.47 & 48.48 & 42.38 & 6.67 \\
DiRL-8B-Instruct  & 4 & 93.35 & 83.70 & 50.11 & 92.03 & 85.70 & 77.71 & 79.83 & 36.76 & 85.05 & 72.70 & 56.56 & 16.88 & 16.88 & 4.38 & 51.45 & 54.62 & 10.42 \\
SDAR-30B-A3B-Chat & 4 & 91.51 & 77.52 & 35.26 & 92.67 & 82.95 & 65.72 & 79.68 & 27.94 & 78.30 & 56.23 & 41.80 & 6.67  & 10.21 & 2.49 & 46.32 & 44.25 & 3.54 \\
SDAR-4B-Chat      & 4 & 91.26 & 70.46 & 34.59 & 91.26 & 80.63 & 66.47 & 77.12 & 22.73 & 70.25 & 52.54 & 34.02 & 6.46  & 6.67  & 0.47 & 38.66 & 33.88 & 3.33 \\
SDAR-8B-Chat      & 4 & 90.86 & 74.02 & 40.15 & 92.58 & 82.21 & 73.46 & 78.97 & 33.27 & 74.70 & 56.87 & 38.66 & 16.25 & 3.75  & 1.90 & 44.19 & 43.25 & 3.33 \\
Dream-v0-Instruct-7B & full & 9.61 & 36.08 & 11.93 & 63.07 & 4.87 & 68.24 & 46.41 & 1.29 & 36.60 & 10.60 & 19.26 & 0.89 & 0.68 & 0.83 & 15.77 & 25.56 & 0.05 \\

LLaDA-8B-Instruct & 32 & 74.17 & 32.72 & 7.33 & 58.24 & 61.61 & 24.20 & 38.53 & 7.90  & 25.80 & 25.04 & 8.27  & 0.62 & 0.16 & 0.47 & 14.84 & 13.63 & 0.16 \\
LLaDA-1.5         & 32 & 75.55 & 34.32 & 8.00 & 65.39 & 64.49 & 25.87 & 43.96 & 10.17 & 29.70 & 26.94 & 9.15  & 1.15 & 0.10 & 0.47 & 16.81 & 15.00 & 0.21 \\
openPangu-7B-Diffusion-Base & 32 &
26.52 & \multicolumn{1}{c}{--} & \multicolumn{1}{c}{--} & \multicolumn{1}{c}{--} &
\multicolumn{1}{c}{--} & \multicolumn{1}{c}{--} & \multicolumn{1}{c}{--} &
14.52 & 48.00 & \multicolumn{1}{c}{--} & \multicolumn{1}{c}{--} &
\multicolumn{1}{c}{--} & \multicolumn{1}{c}{--} & 0.24 &
\multicolumn{1}{c}{--} & \multicolumn{1}{c}{--} & \multicolumn{1}{c}{--} \\

openPangu-7B-Diffusion-Base & 4(max) &
\multicolumn{1}{c}{--} & \multicolumn{1}{c}{--} & \multicolumn{1}{c}{--} & \multicolumn{1}{c}{--} &
\multicolumn{1}{c}{--} & \multicolumn{1}{c}{--} & \multicolumn{1}{c}{--} &
3.98 & 36.00& \multicolumn{1}{c}{--} & \multicolumn{1}{c}{--} &
\multicolumn{1}{c}{--} & \multicolumn{1}{c}{--} &1.07 &
\multicolumn{1}{c}{--} & \multicolumn{1}{c}{--} & \multicolumn{1}{c}{--} \\


\multirow{3}{*}{LLaDA2-mini-16B}
 & 32  & 94.24 & 93.18 & 67.56 & 95.54 & 86.29 & 85.42 & 87.61 & 30.70 & 93.70 & 87.18 & 77.32 & 46.67 & 40.83 & 3.79 & 58.09 & 63.75 & 24.38 \\
 & 64  & 94.29 & 92.96 & 66.96 & 95.36 & 86.31 & 85.95 & 86.01 & 32.54 & 93.10 & 85.67 & 74.04 & 53.12 & 36.67 & 2.84 & 53.97 & 63.38 & 18.54 \\
 & 128 & 89.14 & 85.98 & 63.19 & 89.75 & 80.61 & 83.07 & 78.81 & 31.80 & 89.80 & 82.86 & 69.67 & 48.54 & 38.33 & 1.42 & 35.16 & 59.38 & 21.46 \\

\multirow{3}{*}{LLaDA2-flash-100B}
 & 32  & 96.36 & 95.60 & 73.48 & 96.99 & 89.60 & 88.57 & 92.03 & 35.66 & 97.60 & 89.84 & 83.88 & 63.33 & 56.67 & 4.27 & 68.63 & 72.25 & 47.50 \\
 & 64  & 96.13 & 95.14 & 72.30 & 95.72 & 89.65 & 88.86 & 89.64 & 38.24 & 96.40 & 89.05 & 82.51 & 61.67 & 49.58 & 3.79 & 66.73 & 70.88 & 52.92 \\
 & 128 & 62.85 & 80.48 & 66.81 & 68.17 & 62.24 & 88.50 & 34.36 & 28.68 & 95.00 & 78.85 & 82.14 & 59.17 & 42.92 & 2.84 & 33.95 & 69.12 & 27.92 \\

\specialrule{0.9pt}{2pt}{0pt}
\end{tabular}%
}
\endgroup
\end{table*}

\subsection*{Knowledge and Language Understanding} 
Our assessment of knowledge and language understanding covers a wide range, from broad factual recall to deep linguistic reasoning. For general world knowledge, we use the Massive Multitask Language Understanding (\textbf{MMLU}) benchmark \cite{mmlu} and its more challenging expert-level variant, \textbf{MMLU-PRO} \cite{mmlu-pro}. We also assess knowledge in Chinese contexts through its counterpart, \textbf{CMMLU} \cite{cmmlu}, and the comprehensive \textbf{C-EVAL} suite \cite{ceval}. Performance on human-standardized exams is measured with \textbf{AGIEval} \cite{agieval}, which includes tasks from exams like the SAT, and \textbf{GAOKAO-Bench} \cite{gaokao-bench}, which is based on the Chinese college entrance exam. To test deep, specialized knowledge, we use the “Google-proof” \textbf{GPQA} benchmark \cite{gpqa}, the scientific reasoning challenge \textbf{ARC-c} \cite{arc}, the multi-disciplinary \textbf{scibench} \cite{scibench}, and the high-difficulty physics benchmark \textbf{PhyBench} \cite{phybench}. Open-domain question answering and fact retrieval are assessed using \textbf{TriviaQA} \cite{triviaqa} and the real-world query-based \textbf{NQ} (Natural Questions) \cite{natural-questions}. Finally, deeper linguistic comprehension is evaluated through the Classical Chinese understanding benchmark \textbf{C³} \cite{c3}, the coreference resolution task \textbf{WSC} (Winograd Schema Challenge) \cite{wsc}, and \textbf{SQuAD2.0} \cite{squad2}, which tests a model's ability to handle unanswerable questions.
\begin{table*}[h]
\begingroup
\setlength{\arrayrulewidth}{0.6pt}
\newcommand{\Vsep}{\hspace{5pt}\vrule width \arrayrulewidth\hspace{5pt}}

\centering
\small
\setlength{\tabcolsep}{3.2pt}
\renewcommand{\arraystretch}{1.12}
\caption{\textbf{Performance comparison across Knowledge and Language benchmarks.} "Block Size 4 (max)" refers to the parallel computation of 4-position logits where only the single most confident token is decoded. Other configurations utilize threshold-based adaptive decoding, accepting all tokens that exceed a specific confidence level. }
\label{tab:knowledge-language-benchmarks}

\newlength{\cwK}
\newlength{\cwL}
\setlength{\cwK}{1.15cm} 
\setlength{\cwL}{1.15cm} 

\newcommand{\bhK}[1]{\makecell[c]{\parbox[t]{\cwK}{\centering\arraybackslash #1}}}
\newcommand{\bhL}[1]{\makecell[c]{\parbox[t]{\cwL}{\centering\arraybackslash #1}}}
\newcommand{\hbreak}{\allowbreak}

\resizebox{\linewidth}{!}{%
\begin{tabular}{@{\hspace{5pt}} p{4.8cm} @{\Vsep} c @{\Vsep\hspace{3pt}}
                *{12}{S[table-format=2.2,table-column-width=\cwK]} @{\Vsep\hspace{3pt}}
                *{3}{S[table-format=2.2,table-column-width=\cwL]} @{\hspace{5pt}}}

\specialrule{0.9pt}{0pt}{2pt}
\multicolumn{1}{c}{\textbf{Model}} & \multicolumn{1}{c}{\textbf{}} &
\multicolumn{12}{c}{\textbf{Knowledge}} & \multicolumn{3}{c}{\textbf{Language}} \\
\specialrule{0.6pt}{2pt}{2pt}
\textbf{Model Name} & \textbf{Block Size} &
\multicolumn{1}{c}{\bhK{MMLU}} &
\multicolumn{1}{c}{\bhK{MMLU-\hbreak PRO}} &
\multicolumn{1}{c}{\bhK{GPQA}} &
\multicolumn{1}{c}{\bhK{AGIEval}} &
\multicolumn{1}{c}{\bhK{GAOKAO-\hbreak Bench}} &
\multicolumn{1}{c}{\bhK{Trivia\hbreak QA}} &
\multicolumn{1}{c}{\bhK{NQ}} &
\multicolumn{1}{c}{\bhK{ARC-c}} &
\multicolumn{1}{c}{\bhK{sci\hbreak bench}} &
\multicolumn{1}{c}{\bhK{Phy\hbreak bench}} &
\multicolumn{1}{c}{\bhK{CMMLU}} &
\multicolumn{1}{c}{\bhK{C-\hbreak EVAL}} &
\multicolumn{1}{c}{\bhL{C3}} &
\multicolumn{1}{c}{\bhL{WSC}} &
\multicolumn{1}{c}{\bhL{squad\hbreak 2.0}} \\
\specialrule{0.6pt}{2pt}{2pt}

qwen3-8B  & {/} & 82.35 & 66.12 & 47.92 & 71.19 & 86.13 & 52.47 & 30.91 & 92.71 & 3.51 & 11.89 & 79.09 & 81.44 & 92.05 & 77.70 & 85.25 \\
qwen3-14B & {/} & 84.67 & 70.49 & 51.42 & 77.21 & 90.17 & 61.19 & 34.07 & 93.69 & 2.97 & 13.49 & 82.37 & 84.64 & 94.08 & 74.70 & 90.89 \\
qwen3-30B-A3B-Instruct-2507 & {/} & 87.20 & 74.09 & 54.83 & 84.60 & 94.24 & 65.52 & 39.25 & 95.81 & 4.57 & 31.86 & 86.46 & 87.86 & 96.16 & 90.62 & 89.71 \\
qwen3-Next-80B-A3B-Instruct & {/} & 96.84 & 81.62 & 73.45 & 87.43 & 91.32 & 72.86 & 58.89 & 96.86 & 1.53 & 32.76 & 88.89 & 95.39 & 97.42 & 95.97 & 91.32 \\
o3 & {/} & 91.59 & 82.09 & 81.41 & 87.65 & 93.57 & 86.21 & 66.65 & 96.82 & 4.24 & 39.81 & 86.47 & 87.51 & 98.14 & 99.04 & 86.49 \\
Gemini-2.5 Pro & {/} & 92.87 & 85.59 & 84.34 & 88.77 & 96.91 & 84.29 & 63.27 & 97.20 & 0.81 & 52.63 & 89.00 & 89.67 & 98.14 & 99.94 & 69.38 \\

TraDo-8B-Instruct & 4 & 79.69 & 58.96 & 39.33 & 64.96 & 83.66 & 58.35 & 34.40 & 91.53 & 3.53 & 6.55  & 77.66 & 78.37 & 93.97 & 78.85 & 89.14 \\
TraDo-4B-Instruct & 4 & 75.81 & 53.62 & 35.26 & 59.53 & 77.67 & 48.44 & 28.73 & 89.83 & 2.69 & 8.23  & 73.14 & 74.59 & 91.07 & 57.69 & 87.80 \\
DiRL-8B-Instruct  & 4 & 81.08 & 44.87 & 44.07 & 61.18 & 82.08 & 56.57 & 33.07 & 90.17 & 2.30 & 11.81 & 76.95 & 79.44 & 93.48 & 78.85 & 87.41 \\
SDAR-30B-A3B-Chat & 4 & 80.92 & 59.82 & 36.84 & 69.29 & 86.81 & 65.23 & 39.53 & 94.83 & 3.01 & 7.34  & 81.71 & 81.16 & 96.00 & 78.85 & 88.23 \\
SDAR-4B-Chat      & 4 & 74.12 & 50.09 & 29.45 & 59.23 & 74.42 & 47.48 & 28.39 & 91.19 & 2.50 & 4.14  & 72.33 & 72.25 & 91.01 & 60.58 & 87.26 \\
SDAR-8B-Chat      & 4 & 77.72 & 55.17 & 36.96 & 64.08 & 82.41 & 57.33 & 33.30 & 92.20 & 2.92 & 5.74  & 76.89 & 76.91 & 93.42 & 77.88 & 89.02 \\
Dream-v0-Instruct-7B & full & 69.28 & 42.10 & 32.04 & 46.36 & 47.43 & 50.33 & 24.63 & 89.53 & 1.56 & 3.30  & 51.66 & 51.80 & 83.62 & 71.33 & 85.12 \\

LLaDA-8B-Instruct & 32 & 50.78 & 25.31 & 23.30 & 39.41 & 49.39 & 37.56 & 18.37 & 80.59 & 1.09 & 1.54 & 45.76 & 46.54 & 70.03 & 32.75 & 88.17 \\
LLaDA-1.5         & 32 & 56.00 & 30.19 & 24.53 & 42.15 & 54.30 & 37.48 & 19.14 & 79.45 & 0.40 & 2.23 & 52.42 & 52.98 & 80.49 & 37.56 & 88.48 \\
openPangu-7B-Diffusion-Base & 32 &
\multicolumn{1}{c}{--} & \multicolumn{1}{c}{--} & \multicolumn{1}{c}{--} & \multicolumn{1}{c}{--} & 39.28 &
\multicolumn{1}{c}{--} & \multicolumn{1}{c}{--} & 80.34 &
2.27 & 4.35 & \multicolumn{1}{c}{--} & 68.82 &
86.58 & 48.08 & 52.19 \\

openPangu-7B-Diffusion-Base & 4(max) &
\multicolumn{1}{c}{--} & \multicolumn{1}{c}{--} & \multicolumn{1}{c}{--} & \multicolumn{1}{c}{--} & \multicolumn{1}{c}{--} &
\multicolumn{1}{c}{--} & \multicolumn{1}{c}{--} & 76.27 &
3.31 & 5.59 & \multicolumn{1}{c}{--} & 72.24 &
83.34 & 51.92 & \multicolumn{1}{c}{--} \\


\multirow{3}{*}{LLaDA2-mini-16B}
 & 32  & 80.58 & 64.21 & 47.22 & 72.74 & 83.79 & 52.59 & 32.58 & 92.88 & 3.71 & 13.67 & 79.41 & 81.86 & 89.64 & 69.23 & 86.45 \\
 & 64  & 80.58 & 62.43 & 45.74 & 72.09 & 83.40 & 52.07 & 32.27 & 92.88 & 4.20 & 13.58 & 79.76 & 79.94 & 89.26 & 70.43 & 86.46 \\
 & 128 & 76.93 & 59.33 & 46.28 & 70.01 & 74.15 & 50.01 & 28.86 & 89.41 & 5.78 & 7.83  & 80.16 & 80.16 & 89.04 & 68.75 & 81.34 \\

\multirow{3}{*}{LLaDA2-flash-100B}
 & 32  & 87.91 & 74.84 & 62.25 & 82.01 & 93.29 & 66.81 & 44.79 & 95.93 & 3.74 & 27.71 & 85.05 & 85.93 & 94.19 & 93.27 & 89.88 \\
 & 64  & 86.75 & 73.89 & 65.59 & 80.72 & 92.71 & 66.74 & 44.57 & 94.58 & 4.06 & 20.53 & 83.53 & 84.62 & 93.21 & 93.27 & 89.75 \\
 & 128 & 42.13 & 21.99 & 46.53 & 46.31 & 57.26 & 65.10 & 40.75 & 31.86 & 7.57 & 22.02 & 26.99 & 34.98 & 50.58 & 24.04 & 76.85 \\

\specialrule{0.9pt}{2pt}{0pt}
\end{tabular}%
}
\endgroup
\end{table*}
\subsection*{Reasoning and Agentic Capabilities} 
We evaluate advanced reasoning and agentic abilities across a comprehensive set of benchmarks. For \textit{reasoning}, we include: multi-step abstract and compositional reasoning (\textbf{BBH} \cite{bbh}, \textbf{BBeH}); commonsense and physical reasoning in grounded scenarios (\textbf{HellaSwag} \cite{hellaswag}, \textbf{PIQA} \cite{piqa}); formal logical deduction via synthetic world models (\textbf{ProntoQA} \cite{prontoqa}), constraint-satisfaction puzzles (\textbf{ZebraLogic} \cite{zebralogic}), and open-ended bilingual logic (\textbf{AutoLogi} \cite{autologi}); expert-level academic reasoning resistant to web retrieval (\textbf{HLE} \cite{hle}); multistep soft reasoning over natural language narratives (\textbf{MuSR} \cite{sprague2023musr}); Chinese textual entailment (\textbf{OCNLI} \cite{ocnli}); discrete numerical reasoning over passages (\textbf{DROP} \cite{drop}); and knowledge-orthogonal rule application in out-of-distribution settings (\textbf{KorBench} \cite{ma2024kor}). For \textit{agentic} capabilities, we assess precise tool/API invocation correctness via \textbf{BFCL} \cite{bfcl}, and multi-agent coordination, workflow orchestration, and supervisor hierarchy management using \textbf{Nexus} \cite{nexus}.
\begin{table*}[h]
\begingroup
\setlength{\arrayrulewidth}{0.6pt}
\newcommand{\Vsep}{\hspace{5pt}\vrule width \arrayrulewidth\hspace{5pt}}

\centering
\small
\setlength{\tabcolsep}{3.2pt}
\renewcommand{\arraystretch}{1.12}
\caption{\textbf{Performance comparison across Reasoning and Agent benchmarks.} "Block Size 4 (max)" refers to the parallel computation of 4-position logits where only the single most confident token is decoded. Other configurations utilize threshold-based adaptive decoding, accepting all tokens that exceed a specific confidence level. }
\label{tab:reasoning-agent-benchmarks}

\newlength{\cwR}
\newlength{\cwA}
\setlength{\cwR}{1.15cm}
\setlength{\cwA}{1.15cm}

\newcommand{\bhR}[1]{\makecell[c]{\parbox[t]{\cwR}{\centering\arraybackslash #1}}}
\newcommand{\bhA}[1]{\makecell[c]{\parbox[t]{\cwA}{\centering\arraybackslash #1}}}
\newcommand{\hbreak}{\allowbreak}

\resizebox{\linewidth}{!}{%
\begin{tabular}{@{\hspace{5pt}} p{4.8cm} @{\Vsep} c @{\Vsep\hspace{3pt}}
                *{12}{S[table-format=2.2,table-column-width=\cwR]} @{\Vsep\hspace{3pt}}
                *{2}{S[table-format=2.2,table-column-width=\cwA]} @{\hspace{5pt}}}

\specialrule{0.9pt}{0pt}{2pt}
\multicolumn{1}{c}{\textbf{Model}} & \multicolumn{1}{c}{\textbf{}} &
\multicolumn{12}{c}{\textbf{Reasoning}} & \multicolumn{2}{c}{\textbf{Agent}} \\
\specialrule{0.6pt}{2pt}{2pt}
\textbf{Model Name} & \textbf{Block Size} &
\multicolumn{1}{c}{\bhR{ocnli}} &
\multicolumn{1}{c}{\bhR{drop}} &
\multicolumn{1}{c}{\bhR{kor\hbreak bench}} &
\multicolumn{1}{c}{\bhR{bbh}} &
\multicolumn{1}{c}{\bhR{bbeh}} &
\multicolumn{1}{c}{\bhR{hella\hbreak swag}} &
\multicolumn{1}{c}{\bhR{piqa}} &
\multicolumn{1}{c}{\bhR{Pronto\hbreak QA}} &
\multicolumn{1}{c}{\bhR{HLE}} &
\multicolumn{1}{c}{\bhR{Zebra\hbreak Logic}} &
\multicolumn{1}{c}{\bhR{Auto\hbreak Logi}} &
\multicolumn{1}{c}{\bhR{MuSR}} &
\multicolumn{1}{c}{\bhA{BFCL}} &
\multicolumn{1}{c}{\bhA{nexus}} \\
\specialrule{0.6pt}{2pt}{2pt}

qwen3-8B  & {/} & 60.88 & 84.39 & 58.80 & 78.71 & 17.87 & 79.19 & 88.25 & 93.19 & 3.85 & 37.45 & 76.49 & 69.24 & 70.27 & 38.46 \\
qwen3-14B & {/} & 67.22 & 84.78 & 62.48 & 83.09 & 16.96 & 86.52 & 88.68 & 88.06 & 3.20 & 48.98 & 82.13 & 72.41 & 67.23 & 43.17 \\
qwen3-30B-A3B-Instruct-2507 & {/} & 71.29 & 89.88 & 74.16 & 85.38 & 37.30 & 86.39 & 91.78 & 96.38 & 5.89 & 91.08 & 88.24 & 78.89 & 73.43 & 49.92 \\
qwen3-Next-80B-A3B-Instruct & {/} & 73.86 & 91.32 & 78.48 & 86.88 & 47.16 & 89.13 & 93.85 & 97.50 & 7.04 & 95.58 & 89.52 & 78.73 & 76.24 & 50.50 \\
o3 & {/} & 75.12 & 88.09 & 81.04 & 89.71 & 66.20 & 92.67 & 96.19 & 76.12 & 15.85 & 93.90 & 88.53 & 82.33 & 75.19 & 53.08 \\
Gemini-2.5 Pro & {/} & 73.29 & 87.35 & 81.36 & 88.66 & 66.75 & 92.86 & 96.35 & 96.88 & 7.83 & 91.90 & 77.61 & 82.31 & 75.16 & 58.18 \\

TraDo-8B-Instruct & 4 & 65.39 & 60.03 & 51.12 & 59.38 & 14.96 & 83.68 & 87.38 & 73.12 & 3.01 & 14.10 & 49.75 & 61.03 & 58.15 & 41.09 \\
TraDo-4B-Instruct & 4 & 62.98 & 59.77 & 47.36 & 54.98 & 14.25 & 81.77 & 83.13 & 61.12 & 4.12 & 9.22  & 40.91 & 55.61 & 41.69 & 34.90 \\
DiRL-8B-Instruct  & 4 & 65.80 & 45.52 & 51.84 & 61.80 & 16.31 & 77.72 & 86.83 & 93.50 & 3.99 & 13.60 & 61.00 & 58.21 & 45.98 & 38.53 \\
SDAR-30B-A3B-Chat & 4 & 65.76 & 76.92 & 50.40 & 55.15 & 15.04 & 88.93 & 91.29 & 48.12 & 3.85 & 11.10 & 41.84 & 57.97 & 46.10 & 36.27 \\
SDAR-4B-Chat      & 4 & 62.58 & 60.06 & 46.24 & 53.43 & 13.43 & 83.14 & 81.99 & 59.00 & 3.89 & 8.00  & 35.16 & 53.58 & 40.03 & 34.07 \\
SDAR-8B-Chat      & 4 & 64.17 & 65.09 & 49.04 & 52.18 & 13.51 & 83.16 & 86.83 & 72.62 & 3.20 & 11.12 & 43.70 & 59.15 & 53.18 & 39.12 \\
Dream-v0-Instruct-7B & full & 49.46 & 74.67 & 33.52 & 43.58 & 13.58 & 69.12 & 87.70 & 66.94 & 4.54 & 4.35  & 26.82 & 46.95 & 52.53 & 29.14 \\

LLaDA-8B-Instruct & 32 & 24.85 & 58.66 & 34.64 & 38.95 & 10.87 & 68.58 & 71.93 & 51.19 & 3.94 & \multicolumn{1}{c}{--} & 13.17 & 46.74 & \multicolumn{1}{c}{--} & 11.15 \\
LLaDA-1.5         & 32 & 35.73 & 63.79 & 35.12 & 42.69 & 10.97 & 67.66 & 73.45 & 56.44 & 4.73 & \multicolumn{1}{c}{--} & 13.43 & 49.79 & \multicolumn{1}{c}{--} & 9.27 \\
openPangu-7B-Diffusion-Base & 32 &
\multicolumn{1}{c}{--} & \multicolumn{1}{c}{--} & \multicolumn{1}{c}{--} & \multicolumn{1}{c}{--} &
\multicolumn{1}{c}{--} & \multicolumn{1}{c}{--} & 72.74 & \multicolumn{1}{c}{--} &
\multicolumn{1}{c}{--} & \multicolumn{1}{c}{--} & \multicolumn{1}{c}{--} & 49.49 &
\multicolumn{1}{c}{--} & \multicolumn{1}{c}{--} \\

openPangu-7B-Diffusion-Base & 4(max) &
\multicolumn{1}{c}{--} & \multicolumn{1}{c}{--} & \multicolumn{1}{c}{--} & \multicolumn{1}{c}{--} &
\multicolumn{1}{c}{--} & \multicolumn{1}{c}{--} & 69.21 & \multicolumn{1}{c}{--} &
\multicolumn{1}{c}{--} & \multicolumn{1}{c}{--} & \multicolumn{1}{c}{--} & 52.65 &
\multicolumn{1}{c}{--} & \multicolumn{1}{c}{--} \\


\multirow{3}{*}{LLaDA2-mini-16B}
 & 32  & 63.80 & 85.76 & 54.40 & 78.70 & 15.79 & 78.87 & 86.40 & 88.88 & 4.36 & 62.48 & 71.81 & 71.10 & 71.15 & 36.80 \\
 & 64  & 63.32 & 83.92 & 53.52 & 79.00 & 16.00 & 78.92 & 86.78 & 83.75 & 4.82 & 47.25 & 68.99 & 74.41 & 71.79 & 36.53 \\
 & 128 & 62.61 & 77.79 & 49.44 & 74.65 & 14.55 & 78.74 & 86.78 & 80.00 & 4.08 & 34.30 & 57.13 & 69.50 & 69.19 & 23.23 \\

\multirow{3}{*}{LLaDA2-flash-100B}
 & 32  & 71.86 & 89.45 & 67.76 & 87.00 & 28.27 & 85.08 & 92.93 & 97.00 & 4.91 & 82.05 & 79.53 & 78.23 & 74.90 & 50.82 \\
 & 64  & 68.31 & 85.51 & 68.24 & 84.87 & 27.40 & 83.60 & 91.19 & 97.00 & 4.17 & 74.60 & 81.27 & 78.64 & 74.54 & 49.05 \\
 & 128 & 6.10  & 30.07 & 56.08 & 33.88 & 21.20 & 58.72 & 52.45 & 53.50 & 4.36 & 50.75 & 72.18 & 54.03 & 64.52 & 27.70 \\

\specialrule{0.9pt}{2pt}{0pt}
\end{tabular}%
}
\endgroup
\end{table*}

\subsection{Details of Inter-block Patterns. }
\label{app:Inter-block Patterns}

\subsubsection{Dynamic variations of τ values across six domains for positive and negative samples.}
\label{app:tau_block}
See \cref{app:know_tau,app:agent_tau,app:math_tau,app:coding_tau,app:readsoning_tau,app:nlu_tau} for details.

\begin{figure*}
    \centering
    \includegraphics[width=\textwidth, height=0.9\textheight, keepaspectratio]{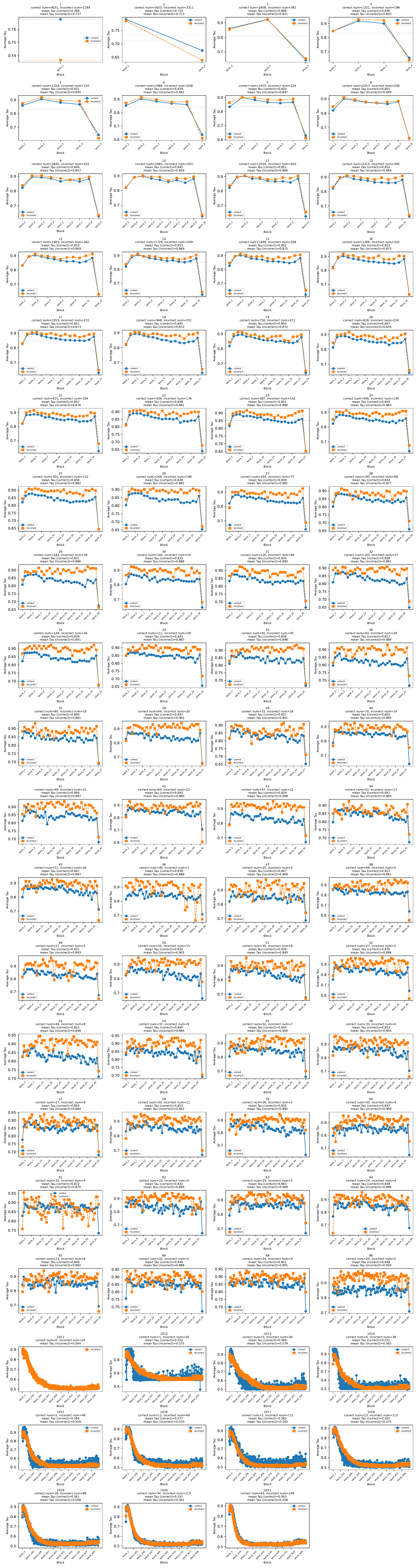} 
    \caption{Evolution of τ values for grouped samples across knowledge domains.}
    \label{app:know_tau}
\end{figure*}

\begin{figure*}
    \centering
    \includegraphics[width= \textwidth, height=0.9\textheight, keepaspectratio]{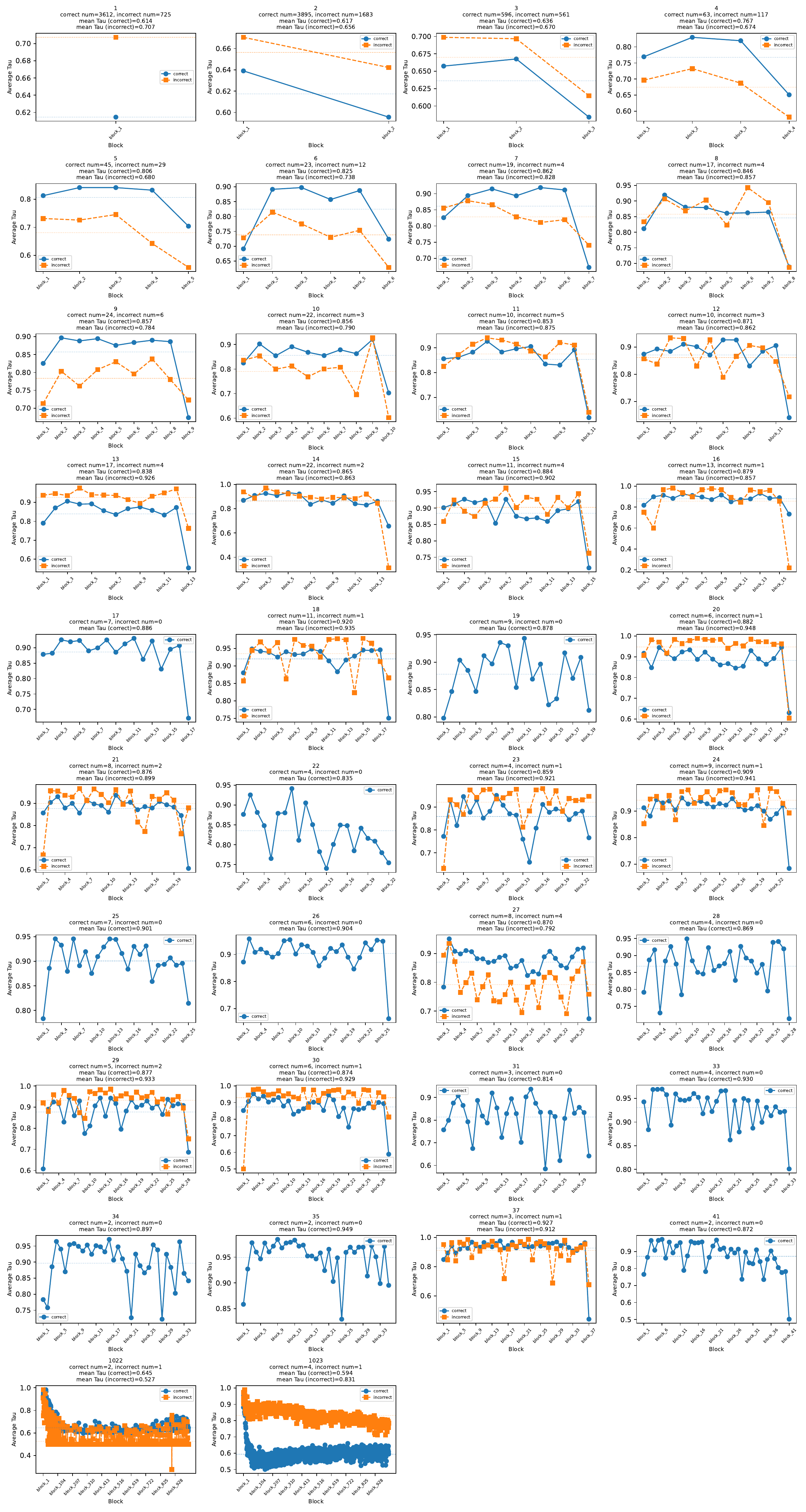} 
    \caption{Evolution of τ values for grouped samples across agent domains.}
    \label{app:agent_tau}
\end{figure*}

\begin{figure*}
    \centering
    \includegraphics[width=\textwidth, height=0.95\textheight, keepaspectratio]{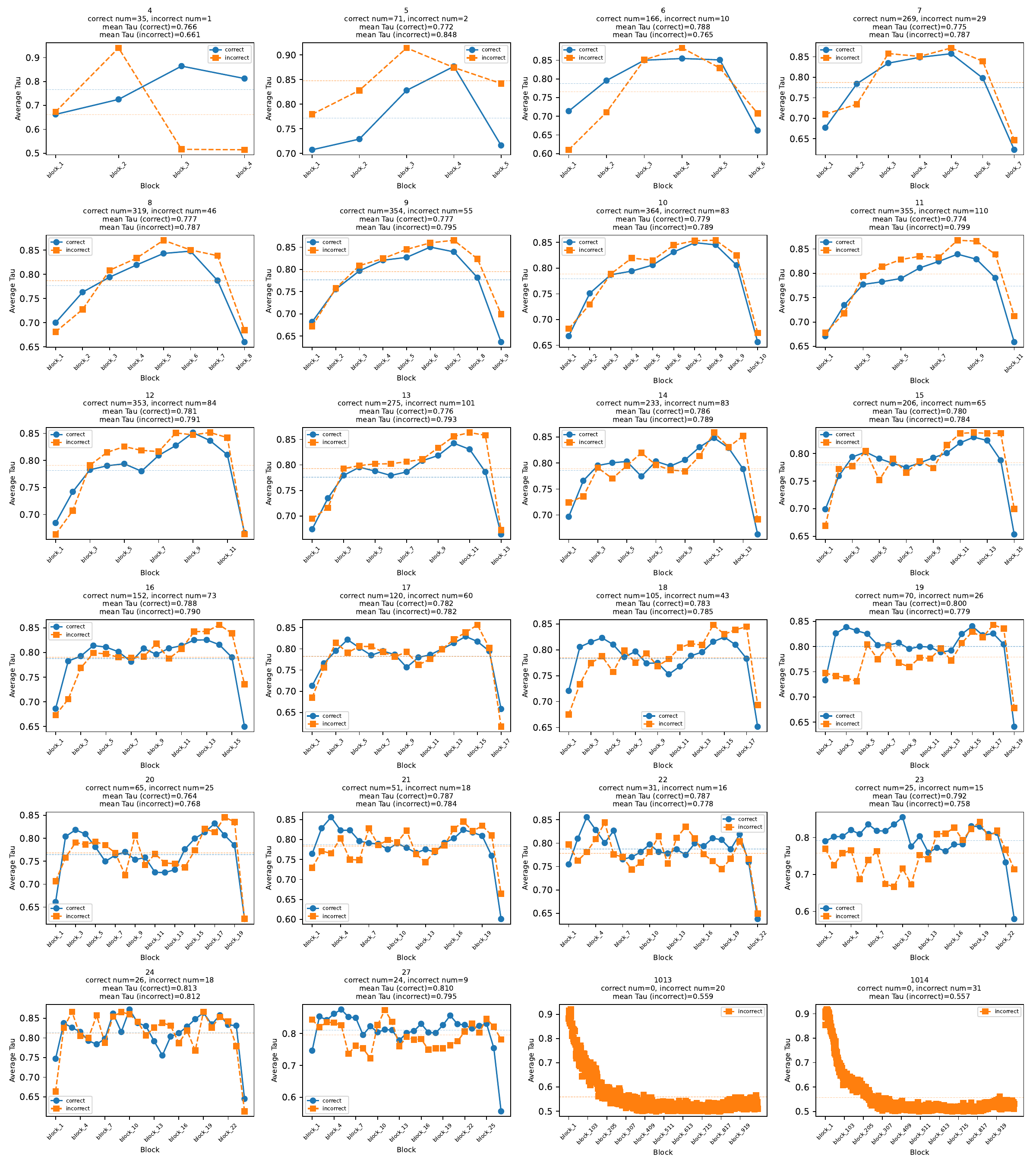} 
    \caption{Evolution of τ values for grouped samples across coding domains.}
    \label{app:coding_tau}
\end{figure*}

\begin{figure*}
    \centering
    \includegraphics[width=\textwidth, height=0.95\textheight, keepaspectratio]{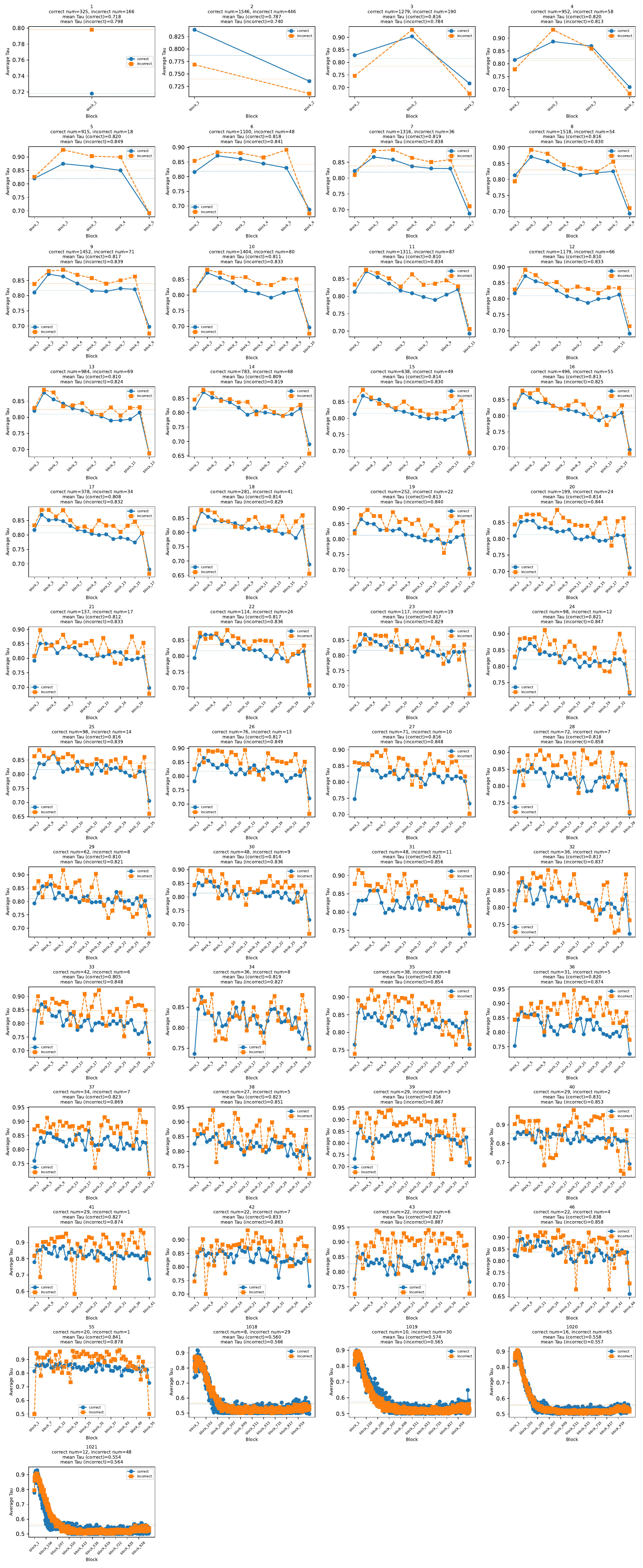} 
    \caption{Evolution of τ values for grouped samples across math domains.}
    \label{app:math_tau}
\end{figure*}

\begin{figure*}
    \centering
    \includegraphics[width=\textwidth, height=0.95\textheight, keepaspectratio]{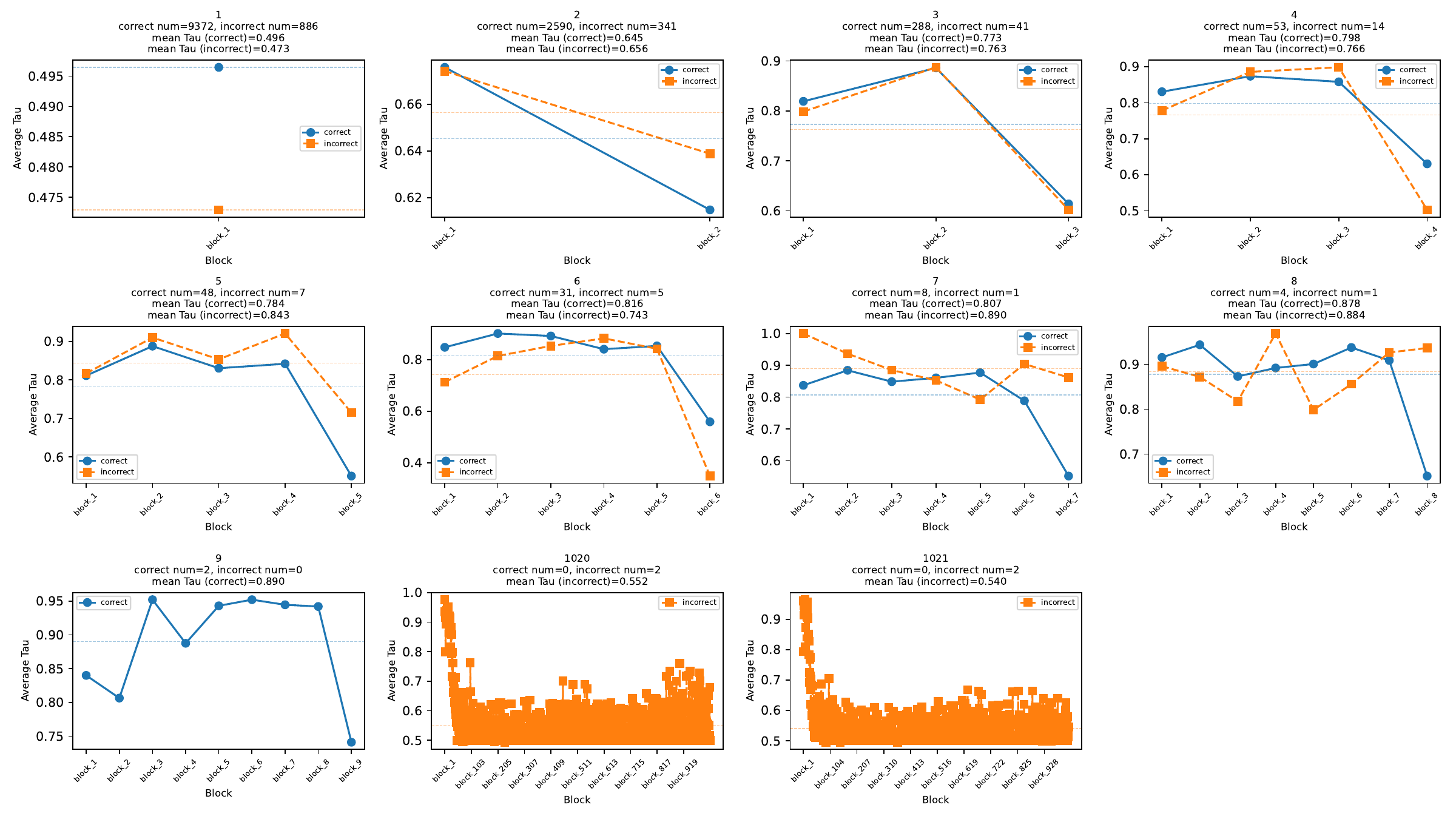} 
    \caption{Evolution of τ values for grouped samples across language understanding domains.}
    \label{app:nlu_tau}
\end{figure*}

\begin{figure*}
    \centering
    \includegraphics[width=\textwidth, height=0.95\textheight, keepaspectratio]{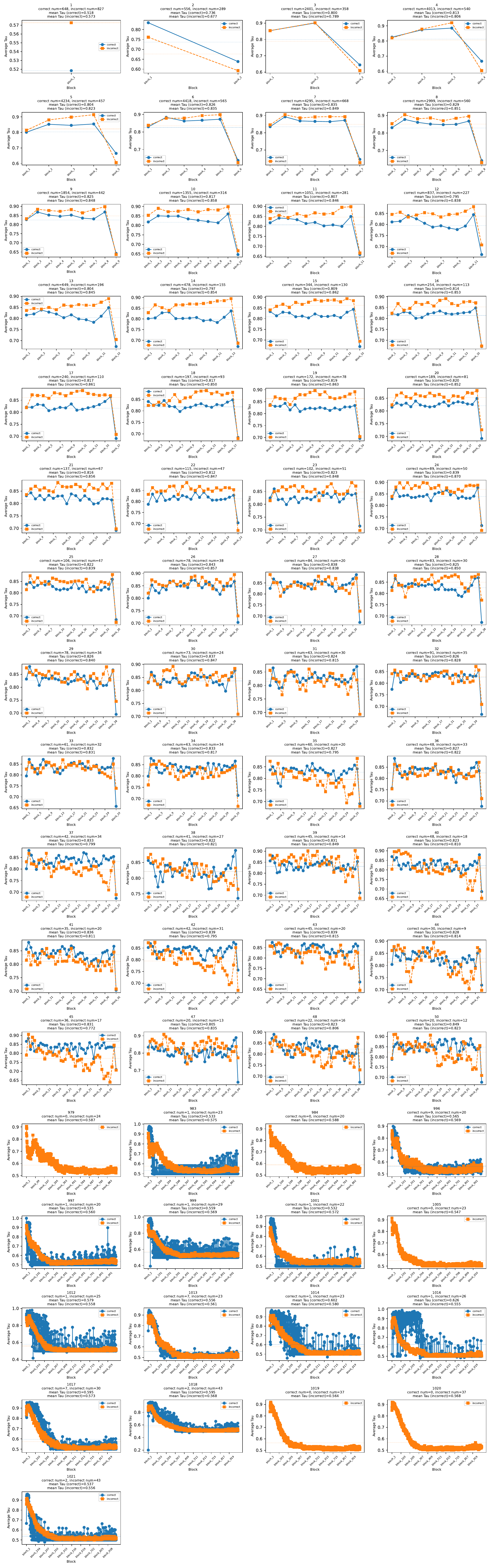}
    \caption{Evolution of τ values for grouped samples across reasoning domains.}
    \label{app:readsoning_tau}
\end{figure*}

\subsubsection{Dynamic variations of AFP values across six domains for positive and negative samples.}
\label{app:fap_block}

See \cref{app:agent_afp,app:code_afp,app:afp_know,app:afp_math,app:nlu_tau,afp_reason} for details.

\begin{figure*}
    \centering
    \includegraphics[width=\textwidth, height=0.9\textheight, keepaspectratio]{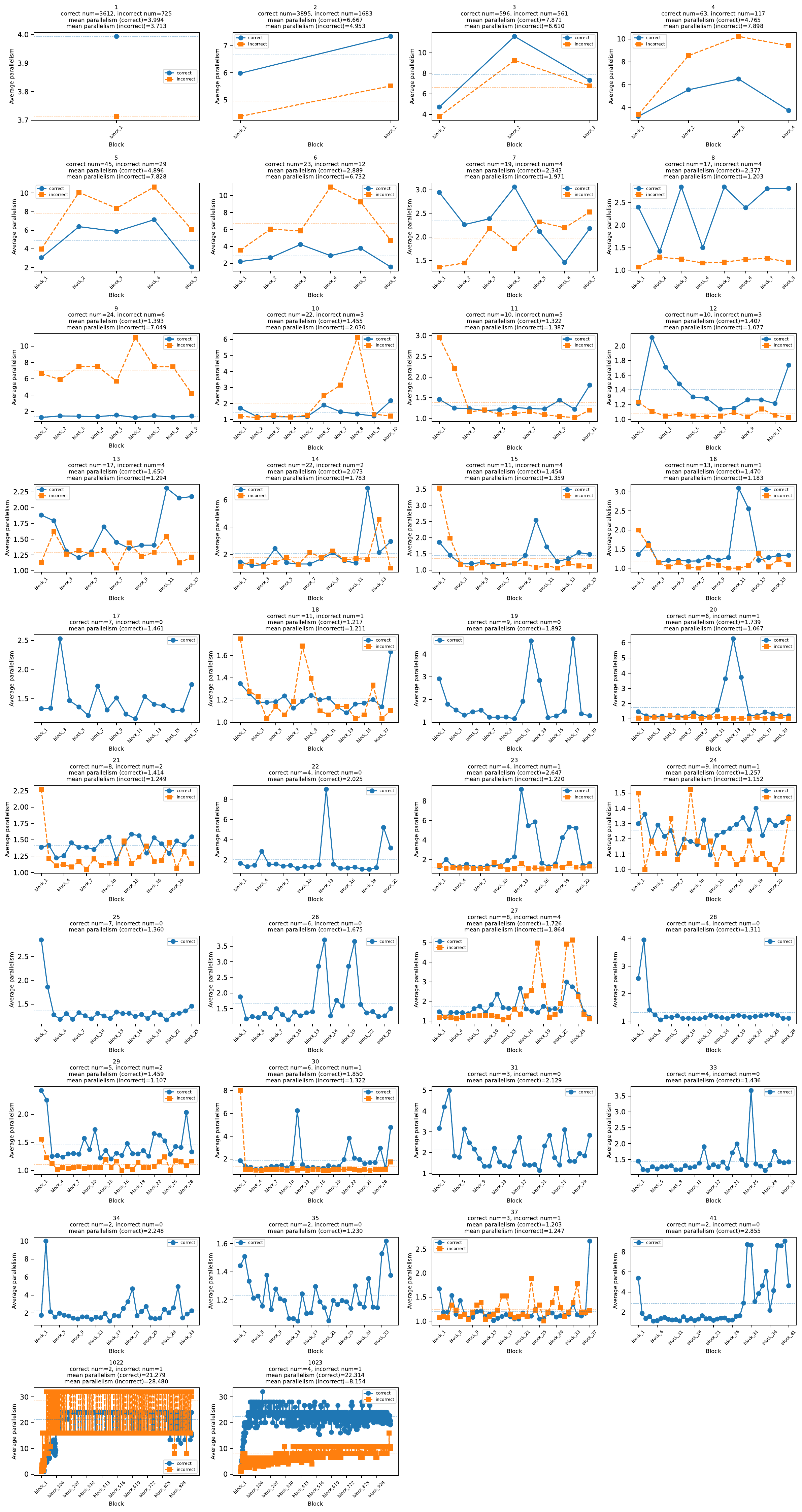}
    \caption{Evolution of afp values for grouped samples across agent domains.}
    \label{app:agent_afp}
\end{figure*}

\begin{figure*}
    \centering
    \includegraphics[width=\textwidth, height=0.9\textheight, keepaspectratio]{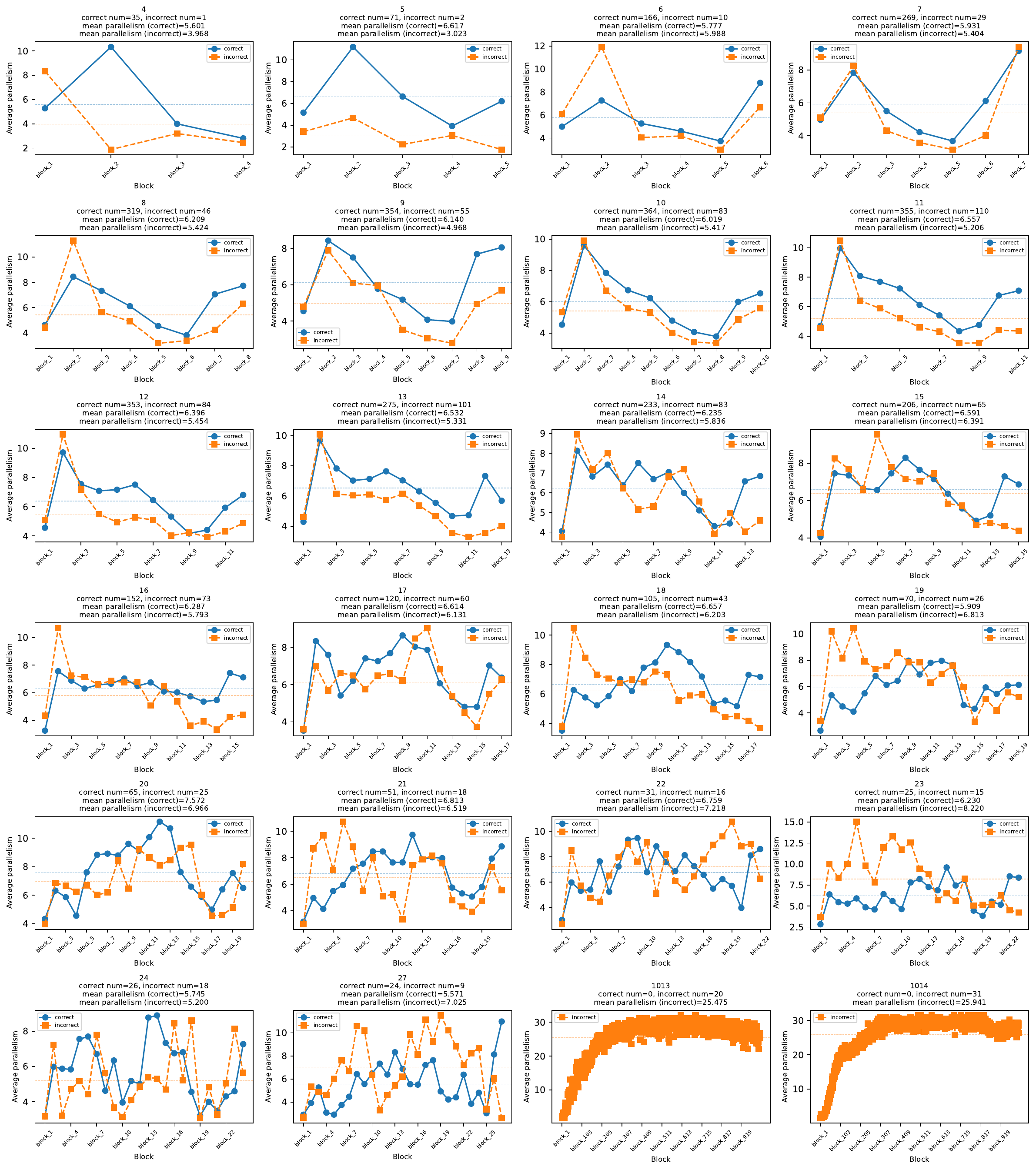}
    \caption{Evolution of afp values for grouped samples across code domains.}
    \label{app:code_afp}
\end{figure*}

\begin{figure*}
    \centering
    \includegraphics[width=\textwidth, height=0.9\textheight, keepaspectratio]{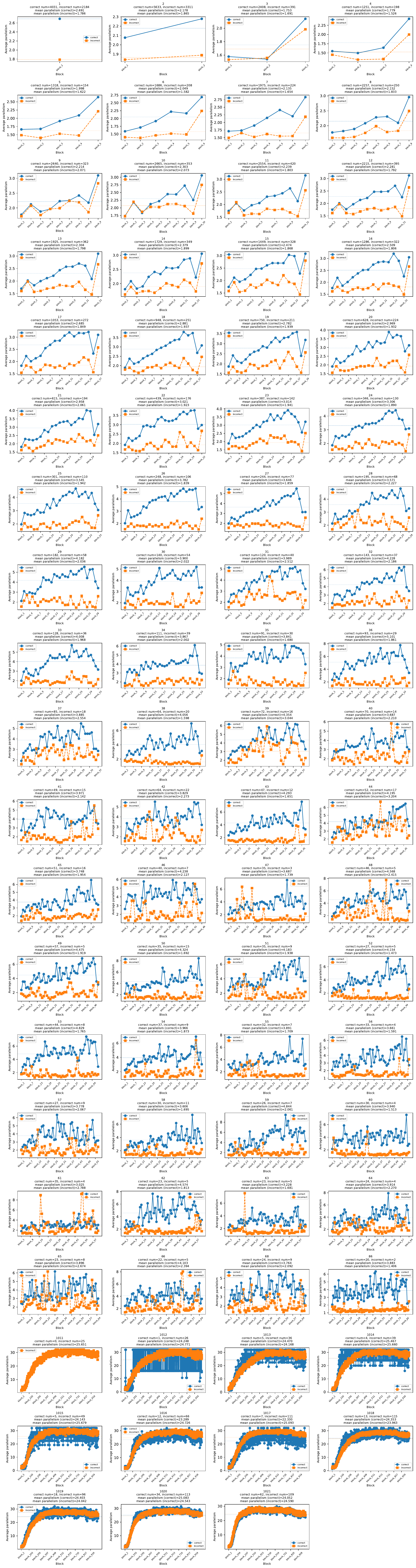}
    \caption{Evolution of afp values for grouped samples across knowledge domains.}
    \label{app:afp_know}
\end{figure*}

\begin{figure*}
    \centering
    \includegraphics[width=\textwidth, height=0.9\textheight, keepaspectratio]{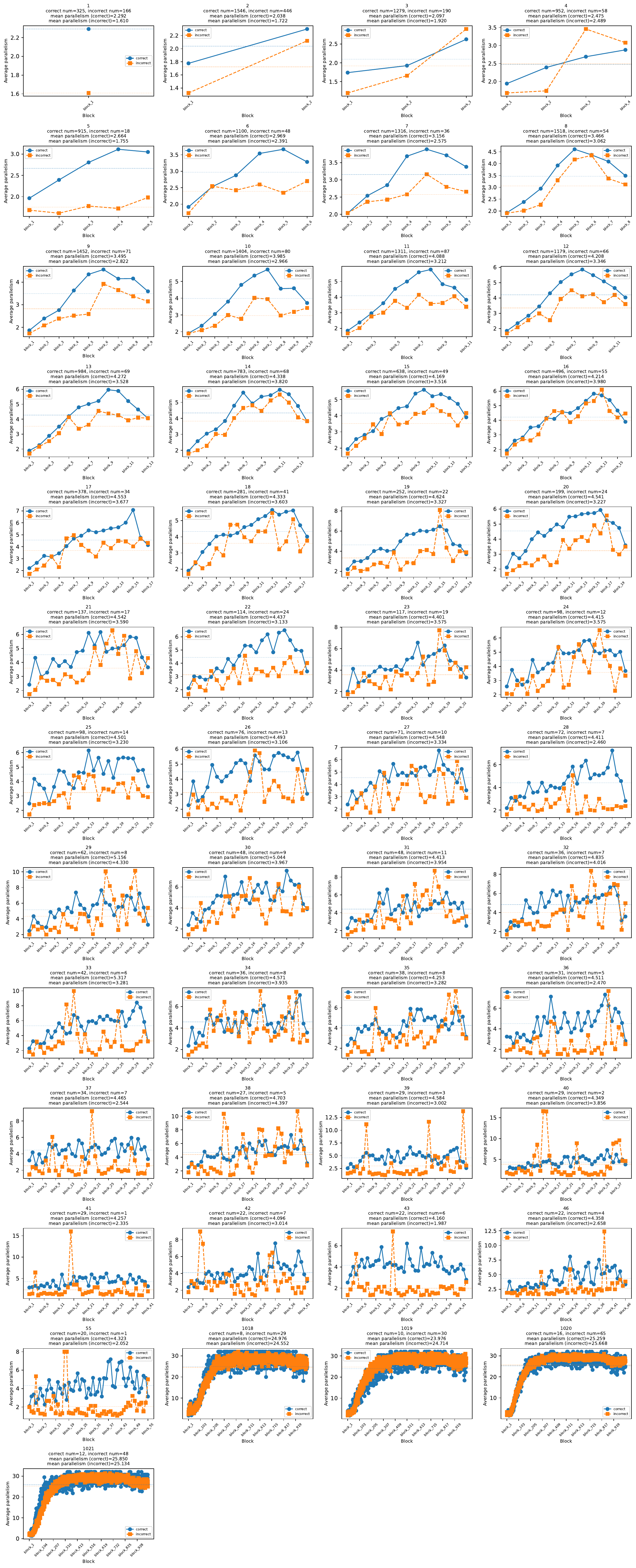}
    \caption{Evolution of afp values for grouped samples across math domains.}
    \label{app:afp_math}
\end{figure*}

\begin{figure*}
    \centering
    \includegraphics[width=\textwidth, height=0.9\textheight, keepaspectratio]{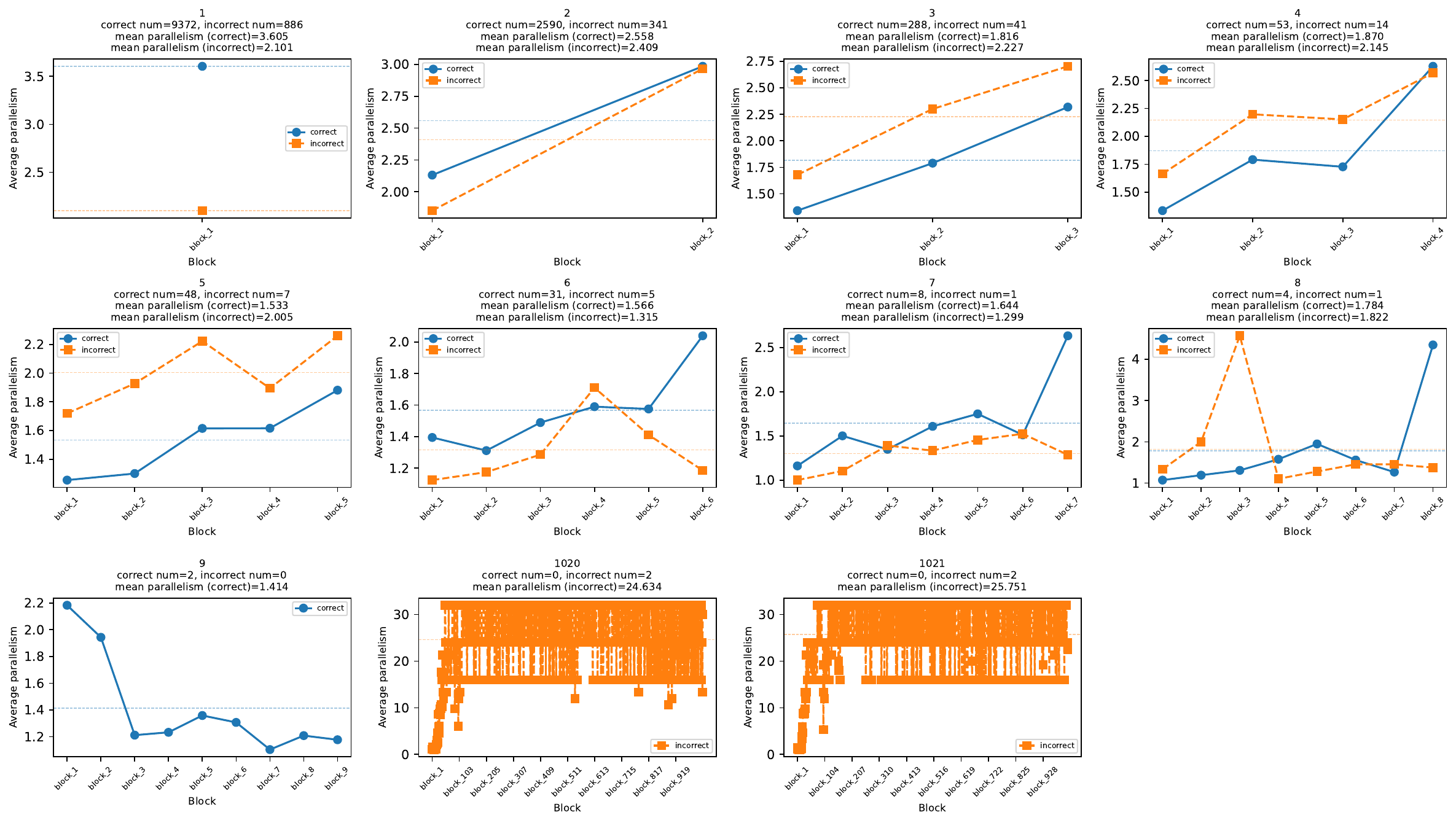}
    \caption{Evolution of afp values for grouped samples across language understanding domains.}
    \label{app:afp_nlu}
\end{figure*}

\begin{figure*}
    \centering
    \includegraphics[width=\textwidth, height=0.9\textheight, keepaspectratio]{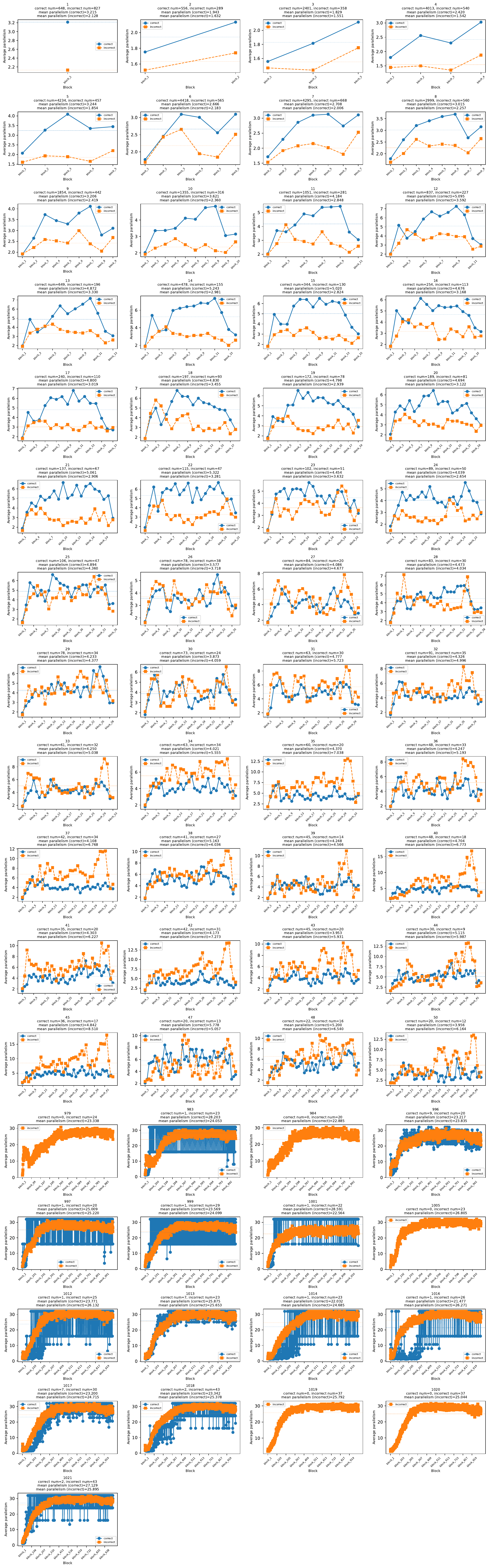}
    \caption{Evolution of afp values for grouped samples across reasoning domains.}
    \label{afp_reason}
\end{figure*}

\subsection{Details of Part-of-Speech(POS) Decoding Order}
\label{app:pos}
    
        

Taking the knowledge benchmark as an example, we collected the Part-of-Speech (POS) sequences of the output content within each generation block. The results are shown in Table \ref{tab:pos_global_avg_step}. We observe that the model exhibits a distinct temporal order in its POS tags. 

First, tags typically associated with simple or independent content—such as \texttt{INTJ} (interjection), \texttt{X} (other), \texttt{NUM} (number), \texttt{PROPN} (proper noun), and \texttt{SYM} (symbol)—appear relatively early. This indicates that the model tends to initiate generation with simple tokens like numbers, names, and symbols. Second, structural cues also emerge early: \texttt{PUNCT} (punctuation) and \texttt{CCONJ} (coordinating conjunction) generally appear at lower average steps, implying that the model quickly inserts delimiters and connectors before drafting substantive content. The core substance of the sentences is then constructed during the intermediate steps, where \texttt{NOUN} (noun), \texttt{VERB} (verb), \texttt{PRON} (pronoun), and \texttt{AUX} (auxiliary) are most prevalent. Finally, tags that add supplementary details, such as \texttt{ADJ} (adjective), \texttt{ADV} (adverb), and \texttt{ADP} (adposition), tend to appear later, suggesting that the model typically appends descriptions and relational context once the main content is established. 

Notably, this sequence is not merely a reflection of tag frequency: some highly frequent tags (e.g., \texttt{NOUN} and \texttt{PUNCT}) are not the earliest to appear, while several less common tags emerge at the very beginning. This further supports the hypothesis that the model "builds the basic structure first, then populates the details."

\begin{table}[h]
\small
\centering
\caption{\textbf{Global POS tendency measured by average local step (knowledge benchmark).}For each generated block, we tag every token with a POS label and record its \emph{local step} (its position index inside the block, starting from 1).
We then compute the \textbf{Global Avg. Step} for each POS tag by averaging these local steps over all occurrences of this tag in the whole dataset.
A smaller average step means that this POS tag is more likely to appear earlier in a block, while a larger value means it tends to appear later.
\textbf{Total Count} is the total number of tokens assigned to this POS tag in our collected outputs.
For clarity, the tags are sorted by \textbf{Global Avg. Step} in ascending order.
Note that these numbers describe a position trend inside blocks, and they are not the same as the overall POS frequency in normal text.}
\label{tab:pos_global_avg_step}
\begin{tabular}{lrr}
\specialrule{0.9pt}{0pt}{2pt}
\textbf{POS Tag} & \textbf{Global Avg. Step} & \textbf{Total Count} \\
\specialrule{0.6pt}{2pt}{2pt}
INTJ  & 2.3418 & 437156 \\
X     & 2.7302 & 1135829 \\
NUM   & 3.3031 & 8419293 \\
PROPN & 3.6147 & 5188437 \\
SYM   & 3.8025 & 2816180 \\
CCONJ & 4.5384 & 1894660 \\
PUNCT & 4.5618 & 11005503 \\
AUX   & 4.7912 & 3089759 \\
NOUN  & 4.8039 & 15598091 \\
PRON  & 4.8151 & 2060850 \\
SCONJ & 4.8545 & 660041 \\
PART  & 4.9402 & 1099089 \\
DET   & 5.2555 & 3905146 \\
ADV   & 5.4416 & 2307803 \\
VERB  & 5.6032 & 3916307 \\
ADP   & 6.0675 & 2759438 \\
ADJ   & 6.2864 & 2922280 \\
\specialrule{0.9pt}{2pt}{0pt}
\end{tabular}
\end{table}

\clearpage
\onecolumn 
\subsection{Order Disruptions at Semantic Pivots}
\label{app:order_disruption}
\begin{center}
    \centering
    \includegraphics[width=0.9\textwidth, height=0.7\textheight, keepaspectratio]{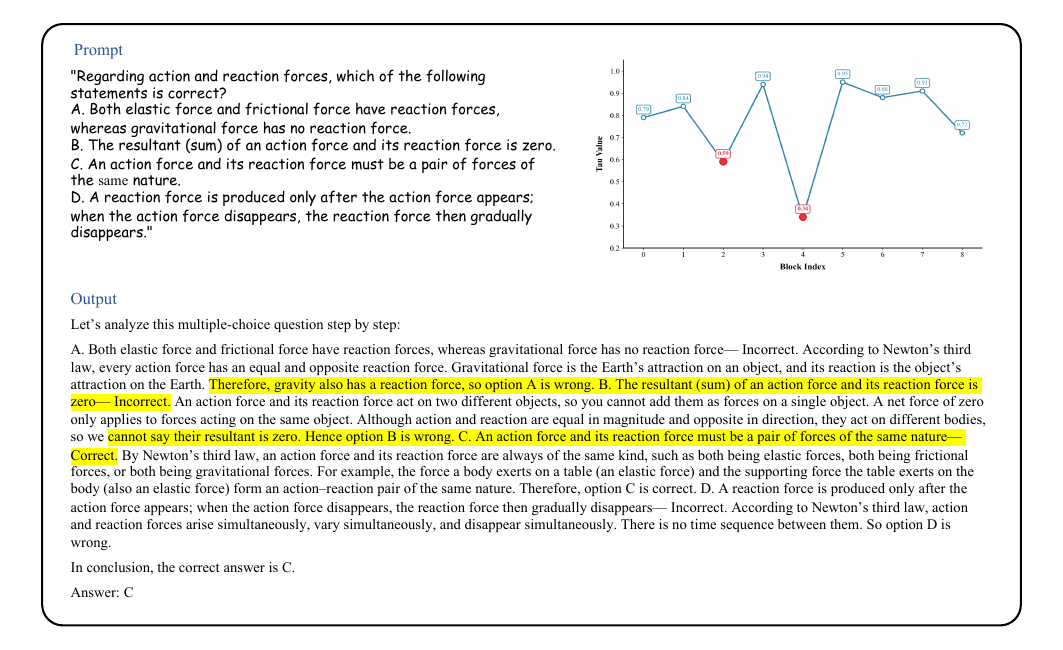}
   \captionof{figure}{\textbf{case study.}  This $\tau$ local-minimum output falls exactly at the boundary between ``closing the previous reasoning chain'' and ``opening the next one.'' The model first uses ``cannot say their resultant is zero. Hence option B is wrong.'' to simultaneously finalize B’s key rationale and its incorrect conclusion, and then immediately switches to ``C. \ldots{} Correct.'' to start a new evaluation. Because this moment involves multiple constraints---summarization, switching, structural labeling, and planning the next line of argument---the demand for parallel organization is greatest, so the curve reaches its lowest value here.}
    \label{fig:placeholder}
\end{center}

\begin{center}
    \centering
    \includegraphics[width=0.9\textwidth, height=0.7\textheight, keepaspectratio]{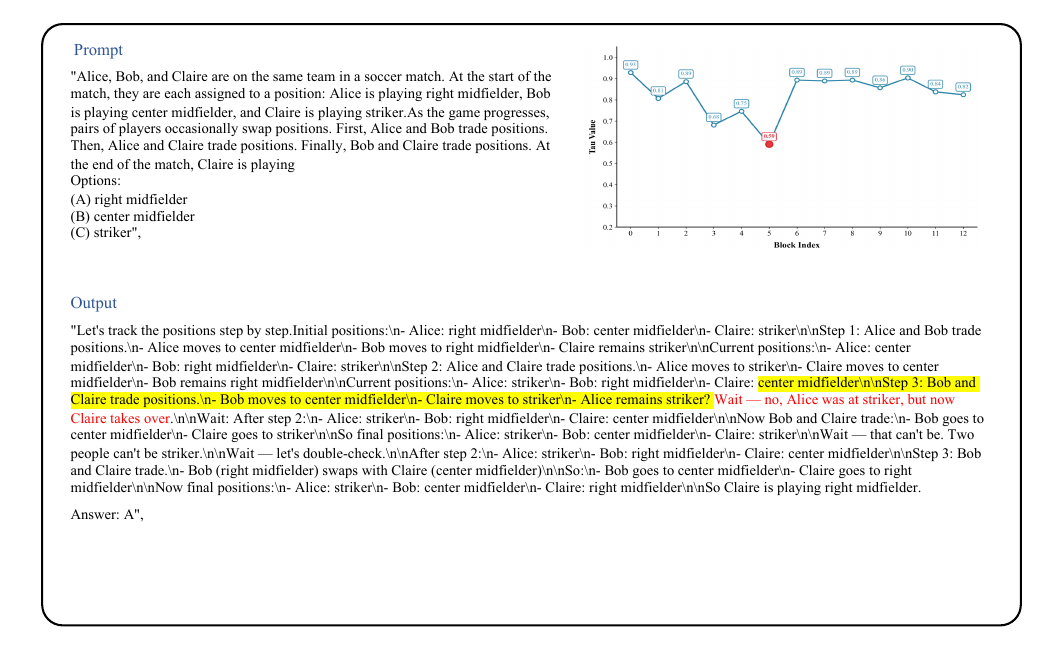}
    \captionof{figure}{\textbf{case study.} The minimum $\tau$ value occurs at the critical state-update point of the "third swap," where a single operation has to change two people’s positions at the same time. Moreover, step 3 is the final swap: right after writing these two updates, the model must immediately derive the final position assignment. Models therefore tend to initiate both "updating" and "checking whether it is reasonable / whether it will conflict" mechanisms simultaneously at this point. The later output—"Wait … / Two people can't be striker"—is precisely the explicit surfacing of that validation, and the minimum $\tau$ value often appears around the transition just before and after the check is triggered.}
    \label{fig:placeholder2}
\end{center}

\begin{center}
    \centering
    \includegraphics[width=0.9\textwidth, height=0.7\textheight, keepaspectratio]{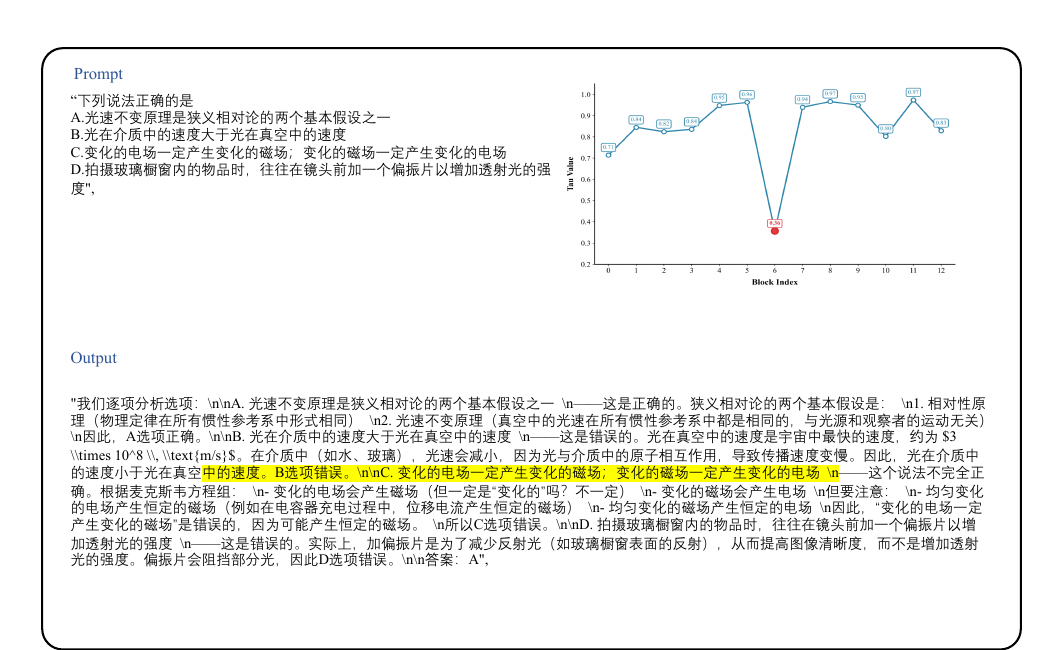}
    \captionof{figure}{\textbf{case study.}At the boundary where "B ends and C begins," the model’s output shows the highest degree of parallelism. At this point, the model must handle two things in parallel: on the one hand, it needs to finalize B's conclusion and bring the wording to a close; on the other hand, it needs to launch C's new argument framework (resetting the subject under discussion and the criteria for judgment). This kind of "state switching across paragraphs/across options" creates a stronger need for parallel organization.}
    \label{fig:placeholder}
\end{center}

\begin{center}
    \centering
    \includegraphics[width=0.9\textwidth, height=0.7\textheight, keepaspectratio]{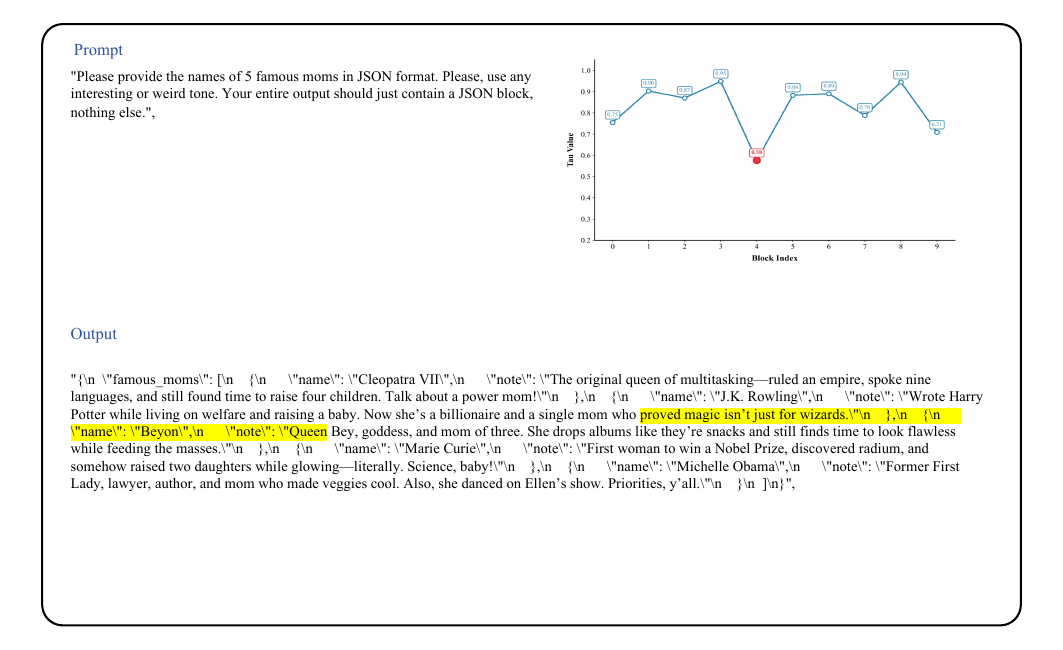}
     \captionof{figure}{\textbf{case study.} The minimum $\tau$ value occurs in this segment because it sits at the boundary between "the end of the previous item + the beginning of the next item." At the same moment, the model must handle in parallel the correctness of the JSON structure, the semantic wrap-up of the previous item, the planning of the next entry, and the entity switch—moving from Rowling’s concluding sentence to the start of Beyoncé’s entry, which involves both an entity change and stylistic continuity. Entity switching typically requires more parallel retrieval and planning, so the degree of parallelism is higher at the switch point.}
    \label{fig:placeholder}
\end{center}

\begin{center}
    \centering
    \includegraphics[width=0.9\textwidth, height=0.7\textheight, keepaspectratio]{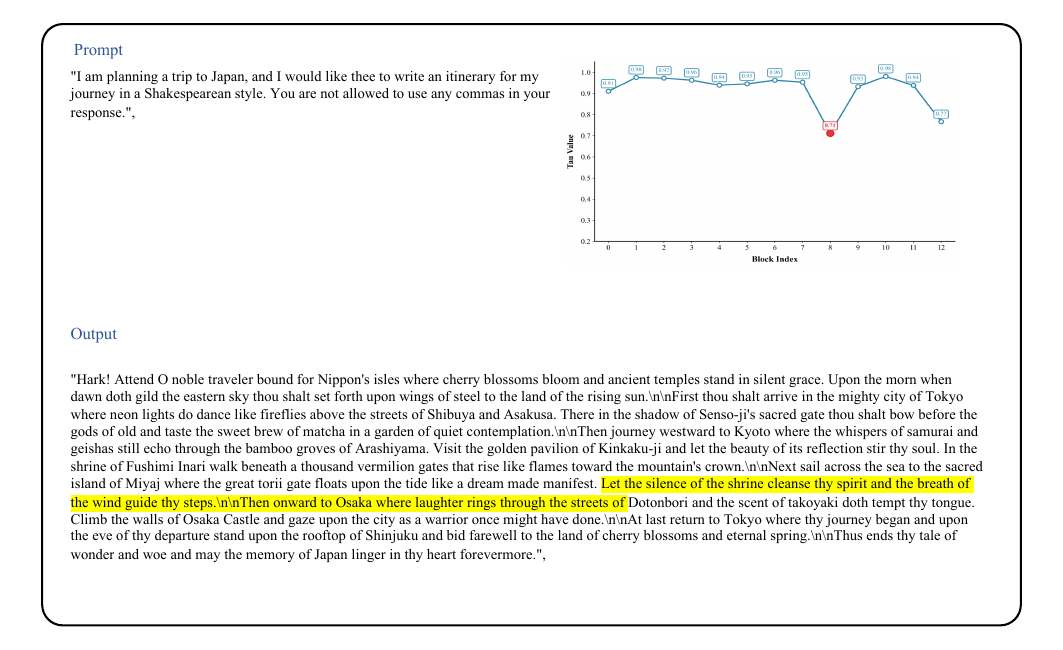}
    \captionof{figure}{\textbf{case study.} When the $\tau$ value reaches its minimum, within the same short span of text the model needs to (1) maintain a consistent writing style to deliver a summary, while (2) rapidly reset the semantic context and prepare the details to be written next. In effect, it is handling two generation goals at once—an ending and a beginning—and this forward-looking content organization manifests as a higher degree of parallel processing.}
    \label{fig:placeholder}
\end{center}

\begin{center}
    \centering
    \includegraphics[width=0.9\textwidth, height=0.7\textheight, keepaspectratio]{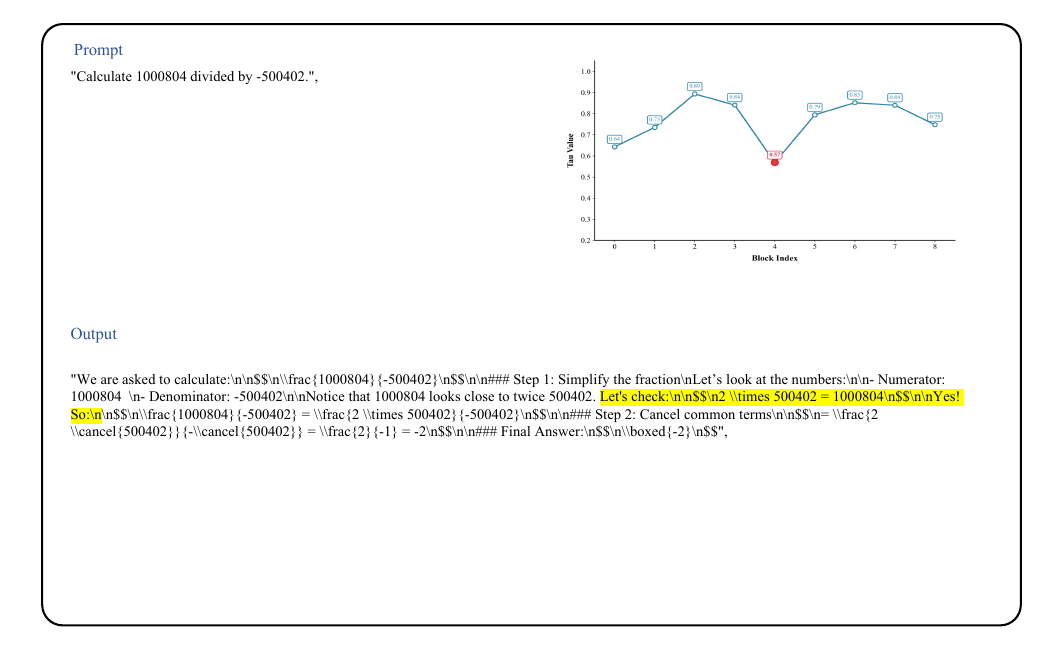}
   \captionof{figure}{\textbf{case study.} When the $\tau$ value reaches its minimum, the model must complete multiple tasks in parallel at the same moment: on the one hand, it performs and cross-checks the core computation; on the other hand, it organizes that computation into a clear proof chain using LaTeX formulas and transitional phrasing. At the same time, while outputting the equality check, it already plans the subsequent simplification route. In addition, it has to satisfy numerical correctness, consistency in symbols and typesetting, and a narrative rhythm that smoothly bridges what comes before and after. Because this point is both a checkpoint for “whether it holds” and a starting point for “how to simplify,” it functions as a key pivot in the solution, prompting the model to activate verification and simplification in parallel.}
    \label{fig:placeholder}
\end{center}
\clearpage 
\twocolumn 

\subsection{Experimental Study of Any-Order Inference in Sudoku, Molecular Design, and Protein Design}
\label{app:Sudoku}

\subsubsection{Sudoku}
\begin{strip}
    \centering
    \vspace{10pt} 
    \includegraphics[width=\textwidth, height=0.4\textheight, keepaspectratio]{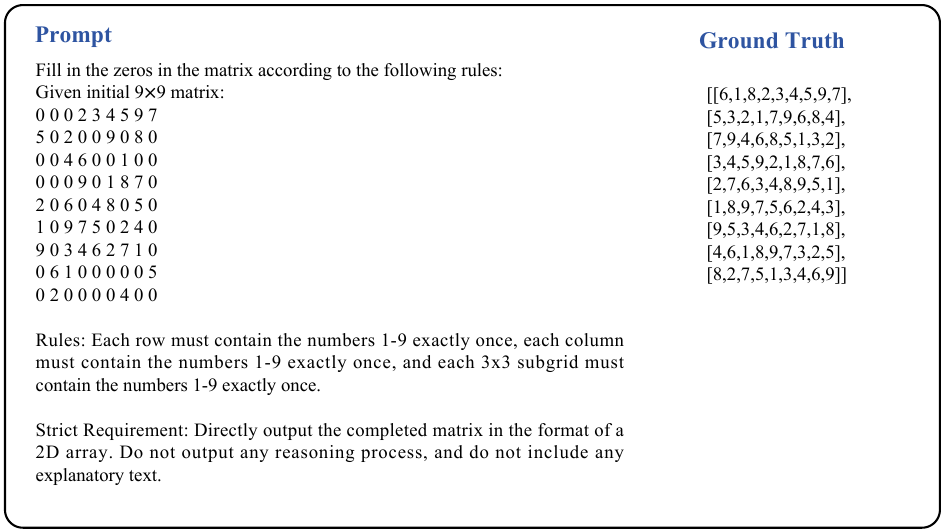} 
    \captionof{figure}{Sudoku Case.}
    \label{app:Sudoku_case}
    \vspace{10pt}
\end{strip}
\paragraph{Data Description} 
Sudoku is a representative Constraint Satisfaction Problem (CSP) that requires the model to maintain global consistency across rows, columns, and $3 \times 3$ sub-grids. We utilized a script to randomly generate 150 unique $9 \times 9$ Sudoku puzzles with varying difficulty levels. The problem format, as illustrated in Figure \ref{app:Sudoku_case}, presents a grid with missing digits (represented by zeros or specific placeholders), and the model is required to output the fully completed $9 \times 9$ matrix.

\paragraph{Experimental Setup}
To evaluate the learning efficiency and reasoning capabilities of different architectures, we partitioned the 150 puzzles into a training set of 50 samples and a test set of 100 samples. We compared two primary architectures: (1) \textbf{Autoregressive (AR) models}, including Qwen3-80B-next (Zero-shot) and Qwen3-8B (Fine-tuned); and (2) \textbf{Diffusion-based models}, including LLaDA-flash-100B (Zero-shot) and Dream-7B (Fine-tuned). All fine-tuning experiments were conducted using the models' respective official training frameworks. The evaluation metric is the accuracy score based on the number of correctly solved grids out of 100 test cases.

\paragraph{Results and Analysis}
The experimental results, summarized in Table \ref{tab:sudoku_results}, reveal several critical insights into the relationship between model architecture and logical reasoning efficiency:
\begin{table*}
\centering
\caption{Sudoku Solving Performance: Comparison between Diffusion-based and Autoregressive Models. (Training set: 50 samples, Test set: 100 samples)}
\label{tab:sudoku_results}
\begin{tabular}{lcccc}
\toprule
\textbf{Model} & \textbf{Architecture} & \textbf{Params} & \textbf{Epochs} & \textbf{Score (Acc.)} \\
\midrule
\rowcolor[HTML]{F3F3F3} 
\textit{Zero-shot Baselines} & & & & \\
LLaDA-flash-100B & Block-wise Diffusion & 100B & 0 & 78 \\
Qwen3-80B-next   & Autoregressive   & 80B  & 0 & 63 \\
\midrule
\rowcolor[HTML]{ECF4FF} 
\textit{Fine-tuned Models} & & & & \\
Dream-7B  & Full-attention Diffusion & 7B & 0 & 9 \\
                &                    &    & 2 & 31 \\
                &                    &    & 5 & 65 \\
                &                    &    & 10 & \textbf{80} \\
\cmidrule{2-5}
Qwen3-8B        & Autoregressive     & 8B & 0 & 0 \\
                &                    &    & 2 & 0 \\
                &                    &    & 10 & 15 \\
                &                    &    & 20 & 17 \\
                &                    &    & 50 & 55 \\
\bottomrule
\end{tabular}
\end{table*}

\begin{enumerate}
    \item \textbf{Architectural Superiority of Diffusion Models:} Despite having significantly fewer parameters, the fine-tuned \textbf{Dream-7B} (Diffusion) achieved a score of \textbf{80} within only 10 epochs, surpassing even the zero-shot performance of the \textbf{100B-parameter LLaDA} (78). This suggests that the full-attention mechanism in diffusion models is inherently more compatible with the bidirectional constraints of Sudoku than the causal masking used in AR models.
    
    \item \textbf{Sample Efficiency and Convergence:} Dream-7B demonstrated remarkable sample efficiency. With only 50 training examples, it reached a score of 65 at epoch 5. In contrast, \textbf{Qwen3-8B} (AR) exhibited much slower convergence, remaining at a score of 0 for the first few epochs and only reaching a score of 55 after 50 epochs. This disparity highlights that AR models struggle to "look ahead" or maintain global consistency without extensive training or explicit Chain-of-Thought (CoT) prompting.
    
    \item \textbf{Paradigm Shift in Low-Data Scenarios:} The fact that a 7B diffusion model can outperform an 80B AR model (63 score) and a 100B diffusion baseline underscores that for tasks requiring parallel constraint satisfaction, \textbf{architecture outweighs scale}. Diffusion models treat Sudoku solving as an iterative refinement process of the entire grid, whereas AR models are forced to solve it as a sequential prediction task, which is susceptible to error propagation.
\end{enumerate}

\subsubsection{Cross-math puzzle}
\begin{figure*}
    \centering
    \includegraphics[width=0.8\textwidth, height=0.8\textheight, keepaspectratio]{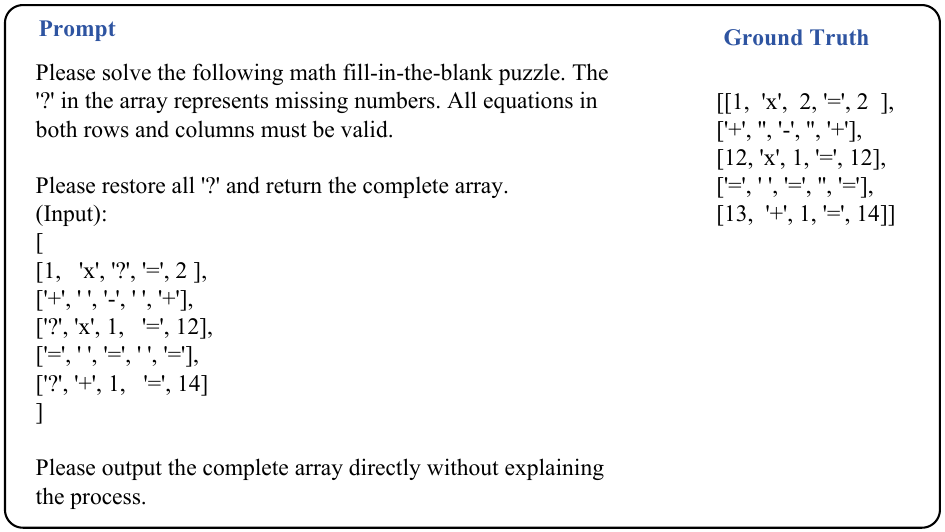} 
    \caption{\textbf{Cross-math puzzle Case.} The task requires filling placeholders to satisfy three horiWe evaluated both Diffusion-based and Autoregressive (AR) architectures in zero-shot and fine-tuned settings. For zero-shot baselines, we included massive models such as Qwen3-next-80B and LLaDA-mini-16B. For the fine-tuning experiments, we utilized 50 training samples and evaluated the models at multiple checkpoints (Epochs 0 to 50). All models were fine-tuned using their official training frameworks with standardized hyperparameters to ensure a fair comparison of architectural efficiency in low-data scenarios.zontal and three vertical equations simultaneously.}
    \label{fig:Cross-math puzzle——case}
\end{figure*}

\begin{table*}
\centering
\caption{Performance Comparison on the \textsc{Cross-math puzzle} Task. Scores represent accuracy on the 100-sample test set.}
\label{tab:math_matrix_results}
\small
\begin{tabular}{lcccc}
\toprule
\textbf{Model} & \textbf{Architecture} & \textbf{Params} & \textbf{Epochs} & \textbf{Score (Acc.)} \\
\midrule
\rowcolor[HTML]{F3F3F3} 
\textit{Zero-shot Baselines} & & & & \\
Qwen3-next-80B      & Autoregressive   & 80B  & 0 & 26.32 \\
LLaDA-mini-16B      & Diffusion        & 16B  & 0 & 15.79 \\
Trado-8B-instruct   & Diffusion   & 8B   & 0 & 14.74 \\
Qwen2.5-7B-instruct & Autoregressive   & 7B   & 0 & 6.32 \\
\midrule
\rowcolor[HTML]{ECF4FF} 
\textit{Fine-tuned Models} & & & & \\
\textbf{LLaDA-8B-instruct} & \textbf{Diffusion} & 8B & 0 & 4 \\
                           &                    &    & 2 & 36 \\
                           &                    &    & 5 & \textbf{42} \\
\cmidrule{2-5}
\textbf{Dream-7B}          & \textbf{Diffusion} & 7B & 0 & 0 \\
                           &                    &    & 2 & 18 \\
                           &                    &    & 3 & 26 \\
                           &                    &    & 5 & 40 \\
\cmidrule{2-5}
Qwen3-8B                   & Autoregressive     & 8B & 0 & 0 \\
                           &                    &    & 2 & 0 \\
                           &                    &    & 3 & 0 \\
                           &                    &    & 5 & 0 \\
                           &                    &    & 10 & 25 \\
\bottomrule
\end{tabular}
\end{table*}

\begin{table*}[htbp]
    \centering
    \caption{Performance on Chemical Formula Generation Tasks}
    \label{tab:chemical_results}
    \resizebox{\textwidth}{!}{ 
    \begin{tabular}{llccccccc}
        \toprule
        \textbf{Task} & \textbf{Method} & \textbf{Exact $\uparrow$} & \textbf{BLEU $\uparrow$} & \textbf{Levenshtein $\downarrow$} & \textbf{RDK FTS $\uparrow$} & \textbf{MACCS FTS $\uparrow$} & \textbf{Morgan FTS $\uparrow$} & \textbf{Validity $\uparrow$} \\
        \midrule
        \multirow{2}{*}{Description-guided Mol Gen} & Qwen3-8B & 0.006 & 0.554 & 34.984 & 0.273 & 0.465 & 0.194 & 0.313 \\
         & DiRL-8B (bl=64) & 0.017 & 0.748 & 37.090 & 0.421 & 0.595 & 0.262 & 0.524 \\
        \midrule
        \multirow{2}{*}{Forward Reaction Pred} & Qwen3-8B & 0.195 & 0.822 & 14.131 & 0.620 & 0.739 & 0.564 & 0.793 \\
         & DiRL-8B (bl=64) & 0.309 & 0.939 & 16.670 & 0.619 & 0.780 & 0.575 & 0.999 \\
        \midrule
        \multirow{2}{*}{Reagent Prediction} & Qwen3-8B & 0.005 & 0.446 & 30.788 & 0.190 & 0.325 & 0.160 & 0.575 \\
         & DiRL-8B (bl=64) & 0.043 & 0.514 & 25.730 & 0.327 & 0.453 & 0.284 & 0.989 \\
        \midrule
        \multirow{2}{*}{Retrosynthesis} & Qwen3-8B & 0.148 & 0.876 & 15.858 & 0.594 & 0.730 & 0.564 & 0.628 \\
         & DiRL-8B (bl=64) & 0.160 & 0.905 & 21.609 & 0.600 & 0.753 & 0.549 & 0.997 \\
        \bottomrule
    \end{tabular}
    }
\end{table*}

\paragraph{Data Description}
To further evaluate the models' proficiency in integrating numerical reasoning with structural consistency, we introduce the \textsc{MathMatrix} task. This task represents a more complex Constraint Satisfaction Problem (CSP) than Sudoku, as it requires both algebraic calculation and bidirectional logical alignment.

\begin{itemize}[leftmargin=*]
    \item \textbf{Task Structure:} Each instance consists of a $5 \times 5$ grid representing a system of interlocking arithmetic equations. As shown in the examples (see Fig.~\ref{fig:Cross-math puzzle——case}), the 1st, 3rd, and 5th rows and columns constitute horizontal and vertical equations (e.g., $A \times B = C$), while the 2nd and 4th rows and columns contain the operators (e.g., $+$, $-$, $\times$) that link these equations.
    \item \textbf{Constraint Mechanism:} Unlike Sudoku, which relies on set-based exclusion logic, \textsc{MathMatrix} requires precise arithmetic operations. A valid solution must ensure that all six equations (three horizontal and three vertical) are satisfied simultaneously. This creates a high-degree of coupling: changing a single digit may invalidate both its row-wise and column-wise expressions.
    \item \textbf{Data Partitioning:} We programmatically generated 150 unique \textsc{MathMatrix} puzzles with varying arithmetic complexity. Following the same experimental protocol as the Sudoku task, we partitioned the data into a training set of 50 samples and a test set of 100 samples to assess the models' few-shot learning and generalization capabilities.
\end{itemize}

\paragraph{Experimental Setup}
We evaluated both Diffusion-based and Autoregressive (AR) architectures in zero-shot and fine-tuned settings. For zero-shot baselines, we included massive models such as Qwen3-next-80B and LLaDA-mini-16B. For the fine-tuning experiments, we utilized 50 training samples and evaluated the models at multiple checkpoints . All models were fine-tuned using their official training frameworks with standardized hyperparameters to ensure a fair comparison of architectural efficiency in low-data scenarios.

\paragraph{Results and Analysis}

The experimental results in Table \ref{tab:math_matrix_results} reveal several critical findings regarding architectural performance on highly-coupled logical tasks:

\begin{enumerate}[leftmargin=*]
    \item \textbf{Rapid Convergence of Diffusion Models:} Diffusion-based models exhibited an explosive growth in accuracy during the early stages of fine-tuning. \textbf{LLaDA-8B-instruct} improved from a zero-shot score of 4 to 36 within only 2 epochs, eventually reaching \textbf{42} at epoch 5. Similarly, \textbf{Dream-7B} reached a score of 40 at epoch 5. Remarkably, both 7B-8B diffusion models outperformed the \textbf{80B-parameter Qwen3} (26.32) after just 5 epochs of training on a tiny 50-sample dataset.
    
    \item \textbf{The Bottleneck of Causal Masking:} In contrast, the autoregressive \textbf{Qwen3-8B} failed to achieve any correct solutions (Score=0) for the first 5 epochs. This suggests that the unidirectional nature of causal masking makes it extremely difficult for the model to learn interlocking arithmetic constraints. Qwen3-8B only began to converge at epoch 10 (Score=25), still trailing behind the diffusion models that were trained for significantly fewer iterations.
    
\end{enumerate}

\subsubsection{Molecular Design and Protein Design}

\textbf{Experimental Setup and Results}. Using the Mol-instructions \cite{fang2023mol} dataset, we selected four chemical formula generation tasks and one protein design task. All models were evaluated using identical training data and evaluation protocols. The results of the chemical formula generation tasks are summarized in Table~\ref{tab:chemical_results}, while the performance for the protein design task is presented in Table~\ref{tab:protein_results}.

\begin{table}[htbp]
    \centering
    \caption{Performance on Protein Design Task}
    \label{tab:protein_results}
    \begin{tabular}{llc}
        \toprule
        \textbf{Task} & \textbf{Model} & \textbf{NSW $\uparrow$} \\
        \midrule
        \multirow{3}{*}{Protein Design} & Qwen3-8B & 0.0108 \\
         & dream7b & 0.0861 \\
         & DiRL-8B (bl=64) & 0.1200 \\
        \bottomrule
    \end{tabular}
\end{table}

\noindent\textbf{Analysis of Results}. We compared two diffusion models (dream7b and DiRL-8B) with a state-of-the-art autoregressive (AR) model (Qwen3-8B). The results in Table~\ref{tab:chemical_results} and Table~\ref{tab:protein_results} confirm that DLMs possess a clear advantage in tasks governed by global sequence constraints. Taking the protein design task (Table~\ref{tab:protein_results}) as a representative case:

\begin{itemize}
    \item \textbf{Spatial Proximity vs. Sequence Distance:} Although proteins are represented as 1D amino acid sequences, their biological function is determined by their 3D folded structure. Two residues far apart in the 1D sequence (e.g., positions 10 and 200) may be spatially adjacent in 3D space, forming a critical active site.
    
    \item \textbf{Limitations of AR Models:} Autoregressive models generate tokens in a strict left-to-right order. When predicting the 100th residue, the model can only consider residues 1--99; it is fundamentally unable to incorporate the ``future'' constraints of the 200th residue, which has not yet been generated.
    
    \item \textbf{Advantages of Diffusion Models:} DLMs maintain a global receptive field during every denoising step. This allows the model to simultaneously consider the entire sequence context, enabling it to fix key structural ``anchors'' at both ends of a sequence before filling in the middle, or to generate a functional motif first and then ``diffuse'' outwards to create the supporting scaffold. This mechanism aligns perfectly with the physical reality of global structural constraints in biomolecules.
\end{itemize}

\subsection{Preliminary Experimental Validation of Editing Capabilities}
\label{app:editing_preliminary_results}

We implemented and evaluated the proposed editing mechanism on LLaDA2.0-mini-16B. Preliminary results demonstrate its significant potential in enhancing generation quality and accelerating inference throughput.

\subsubsection{Mechanism Design}
\begin{table*}[h]
\centering
\caption{Performance comparison of LLaDA2-mini-16B under different editing configurations.}
\label{tab:editing_results}
\resizebox{\textwidth}{!}{
\begin{tabular}{lllcccccccccccc}
\toprule
\textbf{Model} & \textbf{Thr.} & \textbf{Metric} & \textbf{Avg.} & \textbf{GPQA} & \textbf{SQ2} & \textbf{DROP} & \textbf{CruO} & \textbf{MBPP} & \textbf{HEval} & \textbf{LCB} & \textbf{GSM} & \textbf{MATH} & \textbf{OlB} & \textbf{IFEv} \\
\midrule
llada2-mini & 0.95 & Score & 71.48 & 28.79 & 85.77 & 81.62 & 70.75 & 83.84 & 85.37 & 31.50 & 92.57 & 89.00 & 59.85 & 77.26 \\
(baseline) & & AFP & 3.34 & 5.98 & 2.13 & 1.96 & 3.09 & 3.31 & 3.37 & 4.28 & 2.61 & 3.70 & 4.56 & 1.76 \\
& & TPS & -- & 336.1 & -- & -- & 589.8 & 731.9 & -- & 636.4 & -- & -- & -- & -- \\
\midrule
+ Edit & m2t:0.8 & Score & \textbf{73.27} & 48.48 & 82.66 & 81.27 & 73.38 & 78.92 & 85.98 & 24.89 & 92.80 & 93.80 & 67.85 & 75.97 \\
(Forward=2) & t2t:0.3 & AFP & 1.05 & 0.81 & 1.29 & 0.77 & 1.06 & 1.27 & 1.40 & 1.15 & 0.98 & 1.16 & 0.99 & 0.65 \\
\midrule
+ Edit & m2t:0.5 & Score & 68.89 & 32.83 & 83.93 & 78.74 & 68.62 & 78.45 & 82.32 & 27.09 & 92.04 & 89.02 & 59.85 & 64.88 \\
(Forward=1) & t2t:0.0 & AFP & \textbf{6.03} & 4.84 & 4.67 & 5.39 & 5.89 & 8.83 & 7.57 & 6.45 & 5.92 & 7.06 & 6.85 & 2.91 \\
& & TPS & -- & \textbf{954.0} & -- & -- & \textbf{1302.0} & \textbf{1161.7} & -- & \textbf{1121.5} & -- & -- & -- & -- \\
\midrule
+ Edit & m2t:0.5 & Score & 72.60 & 46.97 & 83.66 & 81.13 & 71.12 & 78.22 & 83.54 & 25.77 & 93.18 & 92.60 & 66.22 & 76.16 \\
(Forward=1) & t2t:0.3 & AFP & 3.20 & 2.17 & 3.63 & 2.34 & 3.10 & 4.27 & 4.41 & 3.72 & 3.05 & 3.77 & 3.10 & 1.62 \\
\bottomrule
\end{tabular}
}
\end{table*}
\paragraph{Training (Two-stage Loss)}
The model is trained using a hybrid objective to internalize editing capabilities:
\begin{itemize}
    \item \textbf{Stage 1 (Hybrid Denoising):} Given a prompt $x$ and a target sequence $y$ (where $y$ is the ground truth), we assume a partially decoded state where the first $t$ positions have been generated (e.g., $\hat{y}_{<t}$). The model simultaneously optimizes:
    \begin{itemize}
        \item \textbf{Mask-to-Token (M2T) Loss:} Predicting the subsequent tokens $y_{\geq t}$ given the context.
        \item \textbf{Token-to-Token (T2T) Loss:} Refining the already decoded but potentially noisy tokens $\hat{y}_{<t}$ to match the ground truth $y_{<t}$. This constitutes the \textit{Editing Loss}.
    \end{itemize}
    \item \textbf{Stage 2 (Teacher Forcing Refinement):} Utilizing teacher forcing, the model is fed slightly perturbed versions of correct tokens to practice sequence-wide refinement, ensuring robust error correction during iterative decoding.
\end{itemize}

\paragraph{Inference Strategies}
We define two primary inference modes for the editing paradigm:
\begin{itemize}
    \item \textbf{Forward=1 (Parallel Edit):} In a single forward pass, the model simultaneously predicts new tokens and refines previously generated ones (i.e., $\text{Forward}(y_{<t}, \text{[MASK]}) \rightarrow \{y'_{<t}, y_t\}$).
    \item \textbf{Forward=2 (Sequential Edit):} A two-step process where the model first decodes new tokens and then performs a dedicated second forward pass to refine the entire sequence (i.e., Step 1: $y_{<t} \rightarrow y_t$; Step 2: $\{y_{<t}, y_t\} \rightarrow y'_{1 \dots t}$).
\end{itemize}

\subsubsection{Experimental Setup}
\paragraph{Dataset} We utilized a high-quality instruction-following dataset comprising 308,000 samples across logic, code, and general tasks, distilled from state-of-the-art models such as GPT-4.
\paragraph{Infrastructure} Training was conducted on NVIDIA H200 GPUs. Inference performance was measured using the SGLang framework, which was specifically optimized for the LLaDA architecture.

\subsubsection{Results Overview}

Table~\ref{tab:editing_results} summarizes the performance across multiple benchmarks. We report the Accuracy (ACC), Average Finalization Parallelism (AFP)---a hardware-agnostic speed metric---and Tokens Per Second (TPS).

\subsubsection{Experimental Analysis}

The empirical data reveals three distinct operational regimes for the editing-enhanced MDLM:
\begin{enumerate}[label=\alph*)]
    \item \textbf{Quality-Priority Mode:} Utilizing high thresholds ($m2t=0.8, t2t=0.3$) in \textit{Forward=2} mode yields an average score of 73.27, significantly outperforming the baseline. This confirms that multi-pass refinement can recover information loss, albeit at the expense of lower AFP and inference speed.
    \item \textbf{Balanced Mode:} With moderate thresholds ($m2t=0.5, t2t=0.3$) and \textit{Forward=1} execution, the model maintains superior accuracy (72.60) compared to the baseline while preserving near-identical speed (AFP $\approx$ 3.20).
    \item \textbf{Speed-Priority Mode:} By relaxing thresholds ($m2t=0.5, t2t=0$), the model achieves nearly double the throughput of the baseline in terms of both AFP (6.03 vs. 3.34) and TPS (e.g., 1302.0 vs. 589.8 on CruO).
\end{enumerate}

\subsubsection{Theoretical Alignment}

These results provide preliminary validation for the editing theory proposed in Section 6 and Appendix A.8. Our editing approach effectively compensates for the local causal probability loss in parallel decoding discussed in Section 4.3 through iterative correction. Furthermore, our experiments demonstrate that the integration of editing capabilities can enhance parallelism and significantly increase decoding throughput.

\clearpage
\onecolumn
\subsection{Editing Potential: Theory and Proof}
\label{app:theory_proof}

\newtheorem{definition}{Definition}
\newtheorem{assumption}{Assumption}
\newtheorem{theorem}{Theorem}[section]
\newtheorem{lemma}[theorem]{Lemma}
\newtheorem{corollary}[theorem]{Corollary}

\subsubsection{Parallel Factorization Induces a Local Dependency Gap}

Let $\mathbf{y}=(y_1,\ldots,y_L)\in\mathcal{V}^L$ be a length-$L$ sequence and $\mathbf{x}$ be the input context.
Let $P^*(\mathbf{y}\mid \mathbf{x})$ denote the (unknown) data distribution.
In one-step parallel decoding (or one denoising/editing step in a masked diffusion language model), a common modeling choice is to predict a set of tokens $S\subseteq [L]$ in parallel under a conditional independence assumption:
\begin{equation}
Q_\theta(\mathbf{y}_S \mid \mathbf{x}, \mathbf{c})
\;=\;
\prod_{i\in S} q_\theta(y_i \mid \mathbf{x}, \mathbf{c}),
\label{eq:parallel_factorization}
\end{equation}
where $\mathbf{c}$ denotes the available conditioning signal in that step (e.g., a partially noised sequence or a previous iterate in an iterative editing procedure).

The following lemma formalizes the \emph{local dependency gap} introduced by the parallel factorization in~\eqref{eq:parallel_factorization}.

\begin{definition}[Conditional Total Correlation]
Given a random vector $\mathbf{Y}_S=\{Y_i\}_{i\in S}$ and conditioning variables $(\mathbf{X},\mathbf{C})$, the conditional total correlation is
\begin{equation}
\mathrm{TC}(\mathbf{Y}_S \mid \mathbf{X},\mathbf{C})
\;=\;
\sum_{i\in S} H(Y_i \mid \mathbf{X},\mathbf{C}) \;-\; H(\mathbf{Y}_S \mid \mathbf{X},\mathbf{C}),
\label{eq:tc_def}
\end{equation}
which is nonnegative and equals $0$ iff $\{Y_i\}_{i\in S}$ are conditionally independent given $(\mathbf{X},\mathbf{C})$.
\end{definition}

\begin{lemma}[Factorization Gap Decomposition]
\label{lem:factorization_gap}
Fix any conditioning pair $(\mathbf{x},\mathbf{c})$ and any index set $S\subseteq[L]$.
Let $P^*(\mathbf{y}_S\mid \mathbf{x},\mathbf{c})$ be the true conditional distribution of the tokens in $S$.
For any product-form approximation $\prod_{i\in S} q_\theta(y_i\mid \mathbf{x},\mathbf{c})$, we have the exact decomposition
\begin{align}
&\mathrm{KL}\!\left(
P^*(\mathbf{Y}_S \mid \mathbf{x},\mathbf{c})
\;\bigg\|\;
\prod_{i\in S} q_\theta(Y_i \mid \mathbf{x},\mathbf{c})
\right)
\label{eq:gap_decomp}
\\
&\qquad=
\underbrace{\mathrm{TC}(\mathbf{Y}_S \mid \mathbf{x},\mathbf{c})}_{\text{dependency discarded by factorization}}
\;+\;
\underbrace{\sum_{i\in S}
\mathrm{KL}\!\left(
P^*(Y_i \mid \mathbf{x},\mathbf{c})
\;\big\|\;
q_\theta(Y_i \mid \mathbf{x},\mathbf{c})
\right)}_{\text{marginal/conditional modeling error}}.
\nonumber
\end{align}
Consequently,
\begin{equation}
\mathrm{KL}\!\left(
P^*(\mathbf{Y}_S \mid \mathbf{x},\mathbf{c})
\;\bigg\|\;
\prod_{i\in S} q_\theta(Y_i \mid \mathbf{x},\mathbf{c})
\right)
\;\ge\;
\mathrm{TC}(\mathbf{Y}_S \mid \mathbf{x},\mathbf{c}).
\label{eq:gap_lower_bound_tc}
\end{equation}
\end{lemma}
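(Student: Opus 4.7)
The plan is to prove the equality \eqref{eq:gap_decomp} by inserting the product of true conditional marginals $\prod_{i \in S} P^*(Y_i \mid \mathbf{x}, \mathbf{c})$ as a pivot distribution inside the KL, and then identifying each resulting term separately. The inequality \eqref{eq:gap_lower_bound_tc} is then an immediate corollary via non-negativity of KL.

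First, I would expand the KL by definition as $\mathbb{E}_{P^*(\mathbf{Y}_S \mid \mathbf{x},\mathbf{c})}\bigl[\log P^*(\mathbf{Y}_S \mid \mathbf{x},\mathbf{c}) - \sum_{i\in S} \log q_\theta(Y_i \mid \mathbf{x},\mathbf{c})\bigr]$. Adding and subtracting $\sum_{i\in S}\log P^*(Y_i\mid \mathbf{x},\mathbf{c})$ inside the expectation splits the integrand into two additive pieces: a dependency piece $\log\frac{P^*(\mathbf{Y}_S\mid \mathbf{x},\mathbf{c})}{\prod_i P^*(Y_i\mid \mathbf{x},\mathbf{c})}$ and a marginal-fit piece $\sum_{i}\log\frac{P^*(Y_i\mid \mathbf{x},\mathbf{c})}{q_\theta(Y_i\mid \mathbf{x},\mathbf{c})}$.

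Second, I would identify each term. Taking expectation under $P^*(\mathbf{Y}_S\mid \mathbf{x},\mathbf{c})$, the dependency piece evaluates to $\sum_{i\in S} H(Y_i\mid \mathbf{x},\mathbf{c}) - H(\mathbf{Y}_S\mid \mathbf{x},\mathbf{c})$, which is exactly $\mathrm{TC}(\mathbf{Y}_S\mid \mathbf{x},\mathbf{c})$ by the definition in \eqref{eq:tc_def}. For the marginal-fit piece, the crucial observation is that each summand $\log(P^*(Y_i)/q_\theta(Y_i))$ depends only on the coordinate $Y_i$, so its expectation under the joint $P^*(\mathbf{Y}_S\mid \mathbf{x},\mathbf{c})$ equals its expectation under the correct one-dimensional marginal $P^*(Y_i\mid \mathbf{x},\mathbf{c})$, i.e., exactly $\mathrm{KL}\bigl(P^*(Y_i\mid \mathbf{x},\mathbf{c}) \,\big\|\, q_\theta(Y_i\mid \mathbf{x},\mathbf{c})\bigr)$. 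Summing over $i \in S$ recovers the second term in \eqref{eq:gap_decomp} and completes the equality.

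Third, the lower bound \eqref{eq:gap_lower_bound_tc} follows because the marginal-fit contribution is a sum of KL divergences and hence non-negative, so dropping it can only decrease the right-hand side. I do not foresee a substantive mathematical obstacle here, since the argument is purely algebraic once the pivot is introduced; the only technical care needed is a standard absolute-continuity assumption guaranteeing $q_\theta(Y_i\mid \mathbf{x},\mathbf{c}) > 0$ wherever $P^*(Y_i\mid \mathbf{x},\mathbf{c}) > 0$, so that all log-ratios and the KL terms are well-defined. If anything, the subtle point worth flagging is that the decomposition identifies $\mathrm{TC}$, not some larger multi-information-like quantity, precisely because the pivot chosen is the product of \emph{true} marginals rather than the product of model marginals.
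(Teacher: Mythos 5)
Your proposal is correct and follows essentially the same argument as the paper's proof: both add and subtract the product of true marginals $\prod_{i\in S}P^*(Y_i\mid\mathbf{x},\mathbf{c})$ inside the KL, identify the first term as $\mathrm{TC}(\mathbf{Y}_S\mid\mathbf{x},\mathbf{c})$ via the entropy expansion in \eqref{eq:tc_def}, and obtain the second as the sum of per-coordinate KLs (your explicit note that each summand depends only on $Y_i$, plus the absolute-continuity caveat, are welcome clarifications the paper leaves implicit). The lower bound then follows from nonnegativity of KL exactly as in the paper.
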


\begin{proof}
Write $P_i^*(y_i)\triangleq P^*(y_i\mid \mathbf{x},\mathbf{c})$ and $q_i(y_i)\triangleq q_\theta(y_i\mid \mathbf{x},\mathbf{c})$ for brevity.
Add and subtract $\log\prod_{i\in S} P_i^*(Y_i)$ inside the KL:
\begin{align*}
\mathrm{KL}\!\left(P^*(\mathbf{Y}_S)\,\big\|\,\prod_i q_i\right)
&= \mathbb{E}_{P^*}\!\left[
\log \frac{P^*(\mathbf{Y}_S)}{\prod_i q_i(Y_i)}
\right]\\
&= \mathbb{E}_{P^*}\!\left[
\log \frac{P^*(\mathbf{Y}_S)}{\prod_i P_i^*(Y_i)}
\right]
+
\mathbb{E}_{P^*}\!\left[
\log \frac{\prod_i P_i^*(Y_i)}{\prod_i q_i(Y_i)}
\right]\\
&= \mathrm{KL}\!\left(P^*(\mathbf{Y}_S)\,\big\|\,\prod_i P_i^*\right)
+ \sum_i \mathrm{KL}\!\left(P_i^* \,\|\, q_i\right).
\end{align*}
The first term equals the conditional total correlation $\mathrm{TC}(\mathbf{Y}_S\mid \mathbf{x},\mathbf{c})$ by expanding entropies in~\eqref{eq:tc_def}, yielding~\eqref{eq:gap_decomp}.
Nonnegativity of KL gives~\eqref{eq:gap_lower_bound_tc}.
\end{proof}

\paragraph{Interpretation.}
Lemma~\ref{lem:factorization_gap} makes precise what is ``lost'' in a single parallel step:
even if each per-token predictor $q_\theta(\cdot\mid \mathbf{x},\mathbf{c})$ matches the correct marginal $P^*(\cdot\mid \mathbf{x},\mathbf{c})$, the product form in~\eqref{eq:parallel_factorization} still incurs an irreducible error equal to the conditional total correlation, i.e., the local dependency information among tokens in $S$ under the same condition.

\subsubsection{Masked Diffusion Decoding as a Thresholded Editing Markov Chain}

We now formalize the inference mechanism of a masked diffusion language model (MDLM) as an iterative editing process.
At each round, the model predicts a distribution at \emph{every} position and computes a confidence score.
A threshold rule selects a subset of positions to (re-)edit; positions not selected remain unchanged.

\paragraph{Per-position confidence and edit set.}
Let $\mathbf{Y}^{(k)}\in\mathcal V^L$ denote the current iterate at round $k$.
Given $(\mathbf{Y}^{(k)},\mathbf x)$, the model outputs per-site distributions $q_\theta(\cdot\mid \mathbf{Y}^{(k)},\mathbf x)$.
Define confidence at position $i$ by (one common choice)
\begin{equation}
s_i(\mathbf y,\mathbf x) \triangleq \max_{v\in\mathcal V} q_\theta(v\mid \mathbf y,\mathbf x),
\end{equation}
and the (re-)editing set induced by an editing threshold $\tau\in(0,1)$ as
\begin{equation}
S_\tau(\mathbf y,\mathbf x) \triangleq \left\{i\in[L]: s_i(\mathbf y,\mathbf x)\ge \tau\right\}.
\label{eq:edit_set}
\end{equation}
This matches the practical rule: each round computes all confidences and re-edits tokens whose confidence passes the threshold; all other tokens keep their previous values.

\paragraph{Thresholded editing kernel.}
We define the MDLM iterative decoding chain via the transition kernel
\begin{equation}
\mathcal K_{\theta,\tau}(\mathbf y' \mid \mathbf y, \mathbf x)
=
\prod_{i\in S_\tau(\mathbf y,\mathbf x)} q_\theta(y'_i \mid \mathbf y, \mathbf x)
\cdot
\prod_{i\notin S_\tau(\mathbf y,\mathbf x)} \mathbb I[y'_i=y_i].
\label{eq:mdlm_kernel}
\end{equation}
Within one transition, updated coordinates are conditionally independent given $(\mathbf y,\mathbf x)$, but dependence can emerge \emph{across} iterations because each site conditions on the entire previous sequence $\mathbf y$, and the edited set $S_\tau(\mathbf y,\mathbf x)$ itself depends on $\mathbf y$.
Hence iterative editing trades ``intra-step'' coupling (not modeled within the product) for ``inter-step'' coupling induced by global conditioning and state-dependent edit selection.

\subsubsection{Contraction and Convergence (Dobrushin Condition for Thresholded Editing)}

We analyze convergence via a Dobrushin-type influence bound adapted to the state-dependent (thresholded) update rule.

For $i\neq j$, define the worst-case influence coefficient
\begin{equation}
A_{ij}
=
\sup_{\mathbf{y},\tilde{\mathbf{y}}:\ \mathbf{y}_{\setminus j}=\tilde{\mathbf{y}}_{\setminus j}}
\left\|
\mu_i(\cdot \mid \mathbf{y},\mathbf{x})
-
\mu_i(\cdot \mid \tilde{\mathbf{y}},\mathbf{x})
\right\|_{\mathrm{TV}},
\label{eq:dobrushin_A}
\end{equation}
where $\mu_i(\cdot \mid \mathbf{y},\mathbf{x})$ is the one-step marginal update law at site $i$ induced by~\eqref{eq:mdlm_kernel}:
\begin{equation}
\mu_i(\cdot\mid \mathbf y,\mathbf x)=
\begin{cases}
q_\theta(\cdot\mid \mathbf y,\mathbf x), & i\in S_\tau(\mathbf y,\mathbf x),\\
\delta_{y_i}(\cdot), & i\notin S_\tau(\mathbf y,\mathbf x).
\end{cases}
\label{eq:marginal_update}
\end{equation}

\begin{assumption}[Dobrushin Uniqueness / Weak Dependence]
\label{assump:dobrushin}
Let
\begin{equation}
\alpha \;\triangleq\; \max_{i\in[L]} \sum_{j\neq i} A_{ij} \;<\; 1.
\label{eq:alpha_def}
\end{equation}
\end{assumption}

\begin{theorem}[Geometric Mixing of Thresholded Editing]
\label{thm:mixing}
Assume Assumption~\ref{assump:dobrushin} holds.
Then the Markov operator induced by $\mathcal{K}_{\theta,\tau}$ is a contraction in total variation and the chain admits a unique stationary distribution $Q_{\theta,\tau}^\infty(\cdot\mid \mathbf{x})$.
Moreover, for any initial distribution $Q^{(0)}$ over $\mathcal{V}^L$, if $Q^{(k)}$ denotes the law of $\mathbf{Y}^{(k)}$ given $\mathbf{x}$, then
\begin{equation}
\left\| Q^{(k)}(\cdot\mid \mathbf{x}) - Q_{\theta,\tau}^\infty(\cdot\mid \mathbf{x}) \right\|_{\mathrm{TV}}
\;\le\;
\alpha^k\,
\left\| Q^{(0)}(\cdot\mid \mathbf{x}) - Q_{\theta,\tau}^\infty(\cdot\mid \mathbf{x}) \right\|_{\mathrm{TV}}.
\label{eq:tv_contraction}
\end{equation}
\end{theorem}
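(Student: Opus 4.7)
The plan is to run a Dobrushin-style coupling argument tailored to the thresholded editing kernel in~\eqref{eq:mdlm_kernel}, establish a one-step contraction in total variation on the space of laws over $\mathcal V^L$, and then obtain both uniqueness of $Q_{\theta,\tau}^\infty$ (via Banach's fixed-point theorem applied to the Markov operator on the complete metric space of probability measures under TV) and the geometric bound~\eqref{eq:tv_contraction} by iteration. The starting observation is that~\eqref{eq:mdlm_kernel} has product form conditional on $(\mathbf y,\mathbf x)$, even though the edit set $S_\tau(\mathbf y,\mathbf x)$ is state dependent; this is exactly the structure Dobrushin's classical theorem needs, provided the per-site influence coefficients $A_{ij}$ in~\eqref{eq:dobrushin_A} already bound how much a single-coordinate change to the conditioning can perturb the update law $\mu_i$ in~\eqref{eq:marginal_update}.

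\textbf{Single-coordinate perturbation, telescoping, and lift to measures.} First I would fix $(\mathbf y,\tilde{\mathbf y})$ differing only at position $j$ and couple $\mathcal K_{\theta,\tau}(\cdot\mid\mathbf y,\mathbf x)$ with $\mathcal K_{\theta,\tau}(\cdot\mid\tilde{\mathbf y},\mathbf x)$ by taking independent optimal couplings at each site $i$; the product structure then yields the site-wise disagreement bound
\begin{equation}
\Pr[Y'_i\neq \tilde Y'_i]\le \|\mu_i(\cdot\mid\mathbf y,\mathbf x)-\mu_i(\cdot\mid\tilde{\mathbf y},\mathbf x)\|_{\mathrm{TV}}\le A_{ij},
\end{equation}
and summing gives a Hamming-expectation bound. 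Next I would extend to arbitrary $(\mathbf y,\tilde{\mathbf y})$ by a telescoping path that flips one differing coordinate at a time, applying the triangle inequality for TV and invoking Assumption~\ref{assump:dobrushin} to collapse the sum to $\alpha\cdot d_H(\mathbf y,\tilde{\mathbf y})$. Lifting to arbitrary laws $Q,\tilde Q$ through the optimal coupling of the inputs and the coupling constructed above gives
\begin{equation}
\|Q\mathcal K_{\theta,\tau}-\tilde Q\mathcal K_{\theta,\tau}\|_{\mathrm{TV}}\le \alpha\,\|Q-\tilde Q\|_{\mathrm{TV}}.
\end{equation}
Banach's fixed-point theorem then delivers a unique stationary law $Q_{\theta,\tau}^\infty(\cdot\mid\mathbf x)$, and $k$-fold iteration produces~\eqref{eq:tv_contraction}.

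\textbf{Main Obstacle.} The hard part is that the edit set $S_\tau(\mathbf y,\mathbf x)$ defined in~\eqref{eq:edit_set} depends discontinuously on $\mathbf y$: a single-coordinate perturbation at $j$ can push the confidence $s_i(\mathbf y,\mathbf x)$ across the threshold $\tau$, switching $\mu_i$ between the soft distribution $q_\theta(\cdot\mid\mathbf y,\mathbf x)$ and the Dirac mass $\delta_{y_i}$ in~\eqref{eq:marginal_update}, whose TV gap can be nearly $1$. Unlike in the classical smooth-Gibbs setting, such jumps are not small in any continuity modulus of $q_\theta$, so I have to argue that (i) the supremum in~\eqref{eq:dobrushin_A} already absorbs these regime switches into $A_{ij}$, and (ii) the hypothesis $\alpha<1$ is therefore nontrivial and essentially constrains the threshold $\tau$, the confidence landscape of $q_\theta$, and the mixing-in-context strength of the backbone. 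A minor but necessary technical subtlety is that the ``frozen'' branch $\mathbb I[y'_i=y_i]$ in~\eqref{eq:mdlm_kernel} is singular with respect to the soft branch; this forbids working in KL but is harmless in TV, which is why the entire argument is carried out in the total-variation metric rather than in the information-theoretic language used in Lemma~\ref{lem:factorization_gap}.
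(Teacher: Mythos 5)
Your proposal follows essentially the same route as the paper's own proof sketch: a standard Dobrushin coupling built from the site-wise influence coefficients $A_{ij}$, with the crucial observation (also made explicitly in the paper) that the threshold-induced switches between the soft branch $q_\theta(\cdot\mid\mathbf y,\mathbf x)$ and the frozen branch $\delta_{y_i}$ are already absorbed into $A_{ij}$ through the supremum in~\eqref{eq:dobrushin_A}, followed by contraction, uniqueness of the stationary law, and iteration to get~\eqref{eq:tv_contraction}. Your added detail (single-coordinate coupling, telescoping over a Hamming path, lift to measures, Banach fixed point) fleshes out what the paper calls ``a standard Dobrushin coupling argument,'' and your final lift from the Hamming-expectation bound to a TV contraction with coefficient $\alpha$ is stated at the same level of informality as the paper's own sketch, so the two arguments match in both approach and rigor.
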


\begin{proof}[Proof sketch]
By definition of $A_{ij}$, changing coordinate $j$ in the conditioning sequence (while holding all other coordinates fixed) can affect the next-step marginal law at coordinate $i$ by at most $A_{ij}$ in total variation, where the marginal law $\mu_i$ already accounts for both effects: whether $i$ is updated (via the thresholded set $S_\tau$) and, if updated, how its predictive distribution changes.
A standard Dobrushin coupling argument then yields a contraction of the joint transition in TV with coefficient at most $\alpha$.
Contraction implies existence and uniqueness of a stationary distribution and geometric convergence to it, proving~\eqref{eq:tv_contraction}.
\end{proof}

\subsubsection{When Does Editing Recover the Discarded Dependencies?}

Theorem~\ref{thm:mixing} guarantees convergence to a unique stationary distribution $Q_{\theta,\tau}^\infty$, but does not claim $Q_{\theta,\tau}^\infty=P^*$.
We now clarify what ``dependency recovery'' means in the MDLM setting.

\paragraph{Mechanism-level statement (non-factorized stationarity).}
A single parallel factorized step necessarily discards the dependency term $\mathrm{TC}(\mathbf{Y}_S\mid \mathbf x,\mathbf c)$ (Lemma~\ref{lem:factorization_gap}).
In contrast, the stationary distribution induced by repeated thresholded editing can be \emph{non-factorized}:
because each per-site predictor conditions on the full previous iterate $\mathbf y$ and because the edited set $S_\tau(\mathbf y,\mathbf x)$ depends on $\mathbf y$, the long-run joint law $Q_{\theta,\tau}^\infty(\cdot\mid\mathbf x)$ can exhibit nonzero multi-token dependence (and hence nonzero total correlation on subsets).
This provides the theoretical basis for ``editing potential'': dependencies discarded within one parallel step can be expressed across iterations via inter-step coupling.

\paragraph{Optional: sufficient conditions for exact recovery (strong compatibility).}
If one wants a formal guarantee that $Q_{\theta,\tau}^\infty=P^*$, one needs compatibility conditions stronger than ``per-site conditionals match'' because the kernel~\eqref{eq:mdlm_kernel} performs a \emph{state-dependent block update} with a product-form proposal.

One sufficient (but strong) condition is that for every state $\mathbf y$ and its edited set $S=S_\tau(\mathbf y,\mathbf x)$, the product of model conditionals equals the true block conditional:
\begin{assumption}[Block-conditional realizability (strong)]
\label{assump:block_realizable}
For all $\mathbf y$ and $S=S_\tau(\mathbf y,\mathbf x)$,
\begin{equation}
\prod_{i\in S} q_\theta(y_i \mid \mathbf y,\mathbf x)
\;=\;
P^*(\mathbf y_S \mid \mathbf y_{\setminus S}, \mathbf x).
\label{eq:block_realizable}
\end{equation}
\end{assumption}
Under Assumption~\ref{assump:block_realizable}, one can show $P^*(\cdot\mid \mathbf x)$ is invariant for $\mathcal K_{\theta,\tau}$, and thus (by uniqueness from Theorem~\ref{thm:mixing}) $Q_{\theta,\tau}^\infty=P^*$.
We state this as a corollary.

\begin{corollary}[Exact recovery under strong block-conditional realizability]
\label{cor:exact_recovery}
Assume Assumption~\ref{assump:dobrushin} and Assumption~\ref{assump:block_realizable} hold.
Then $P^*(\cdot\mid \mathbf x)$ is invariant for $\mathcal K_{\theta,\tau}$ and
\begin{equation}
Q_{\theta,\tau}^\infty(\cdot\mid \mathbf x)=P^*(\cdot\mid \mathbf x).
\end{equation}
\end{corollary}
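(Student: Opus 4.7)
The plan is to prove the corollary by decomposing it into two claims: (i) $P^{*}(\cdot\mid\mathbf{x})$ is invariant under the editing kernel $\mathcal{K}_{\theta,\tau}$, and (ii) the chain admits a unique stationary distribution, which is already granted by Theorem~\ref{thm:mixing} under Assumption~\ref{assump:dobrushin}. The conclusion $Q_{\theta,\tau}^{\infty}=P^{*}$ then follows immediately by uniqueness, so the substantive step is (i).

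For invariance, I would substitute Assumption~\ref{assump:block_realizable} into~\eqref{eq:mdlm_kernel} and reinterpret each single step as a state-dependent block-Gibbs update: given $\mathbf{y}$ and $S=S_{\tau}(\mathbf{y},\mathbf{x})$, resample $\mathbf{y}'_{S}\sim P^{*}(\cdot\mid\mathbf{y}_{\setminus S},\mathbf{x})$ and freeze $\mathbf{y}'_{\setminus S}=\mathbf{y}_{\setminus S}$. I would then compute
\begin{equation*}
\Pr[\mathbf{Y}'=\mathbf{z}]
\;=\;
\sum_{\mathbf{y}} P^{*}(\mathbf{y}\mid\mathbf{x})\,\mathcal{K}_{\theta,\tau}(\mathbf{z}\mid\mathbf{y},\mathbf{x})
\end{equation*}
by partitioning the outer sum according to the value $S_{0}=S_{\tau}(\mathbf{y},\mathbf{x})$. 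The indicator in~\eqref{eq:mdlm_kernel} forces $\mathbf{y}_{\setminus S_{0}}=\mathbf{z}_{\setminus S_{0}}$, and Assumption~\ref{assump:block_realizable} replaces the product $\prod_{i\in S_{0}} q_{\theta}(z_{i}\mid\mathbf{y},\mathbf{x})$ with the true conditional $P^{*}(\mathbf{z}_{S_{0}}\mid\mathbf{z}_{\setminus S_{0}},\mathbf{x})$. What remains is a sum over the to-be-overwritten coordinates $\mathbf{y}_{S_{0}}$ of $P^{*}(\mathbf{y}_{S_{0}}\mid\mathbf{z}_{\setminus S_{0}},\mathbf{x})$ restricted to the event $\{S_{\tau}((\mathbf{y}_{S_{0}},\mathbf{z}_{\setminus S_{0}}))=S_{0}\}$; if for each $\mathbf{z}_{\setminus S_{0}}$ this restricted mass sums to $1$ across admissible $S_{0}$, each term collapses to $P^{*}(\mathbf{z}_{S_{0}}\mid\mathbf{z}_{\setminus S_{0}},\mathbf{x})\,P^{*}(\mathbf{z}_{\setminus S_{0}}\mid\mathbf{x})=P^{*}(\mathbf{z}\mid\mathbf{x})$ and invariance follows.

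The hard part will be exactly this final marginalization. Because $S_{\tau}(\mathbf{y},\mathbf{x})$ is a function of the full sequence, the event $\{S_{\tau}=S_{0}\}$ generally depends on the coordinates $\mathbf{y}_{S_{0}}$ that the kernel is about to overwrite, so the conditional law of $\mathbf{y}_{S_{0}}$ given $(\mathbf{y}_{\setminus S_{0}},S_{\tau}=S_{0})$ under $P^{*}$ is the $P^{*}$-conditional \emph{restricted} to that event, whereas the update resamples from the unrestricted $P^{*}(\cdot\mid\mathbf{y}_{\setminus S_{0}},\mathbf{x})$. This mismatch is the familiar obstruction to Gibbs invariance when the block-selection scan is state-dependent. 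I would resolve it either by (a) strengthening Assumption~\ref{assump:block_realizable} with a measurability condition requiring $S_{\tau}(\mathbf{y},\mathbf{x})$ to depend on $\mathbf{y}$ only through $\mathbf{y}_{\setminus S_{\tau}(\mathbf{y},\mathbf{x})}$, a self-consistency property that makes the edit set a function of the frozen coordinates alone so that the restricted and unrestricted conditionals coincide and the sum over admissible $S_{0}$ collapses to a single indicator; or (b) verifying a detailed-balance identity $P^{*}(\mathbf{y}\mid\mathbf{x})\mathcal{K}_{\theta,\tau}(\mathbf{z}\mid\mathbf{y},\mathbf{x})=P^{*}(\mathbf{z}\mid\mathbf{x})\mathcal{K}_{\theta,\tau}(\mathbf{y}\mid\mathbf{z},\mathbf{x})$ by exploiting a symmetry of the confidence-threshold rule under the swap $\mathbf{y}\leftrightarrow\mathbf{z}$ on pairs that agree off the edit set.

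Once invariance is secured under such an additional structural condition, the conclusion is immediate: Theorem~\ref{thm:mixing} delivers both existence and uniqueness of a stationary distribution for $\mathcal{K}_{\theta,\tau}$ under Assumption~\ref{assump:dobrushin}, so any invariant probability measure must coincide with $Q_{\theta,\tau}^{\infty}(\cdot\mid\mathbf{x})$. Combining this with the invariance of $P^{*}$ established above yields $Q_{\theta,\tau}^{\infty}(\cdot\mid\mathbf{x})=P^{*}(\cdot\mid\mathbf{x})$, completing the proof.
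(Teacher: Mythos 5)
Your skeleton is exactly the paper's: reinterpret one application of the kernel~\eqref{eq:mdlm_kernel} under Assumption~\ref{assump:block_realizable} as a state-dependent block-Gibbs update, claim that this leaves $P^*(\cdot\mid\mathbf x)$ invariant, and then conclude $Q_{\theta,\tau}^\infty=P^*$ from the uniqueness of the stationary law in Theorem~\ref{thm:mixing}. The difference is that the paper's proof sketch simply asserts the invariance step (``for $\mathbf Y\sim P^*$ the one-step update preserves the law'') without ever addressing the state-dependence of the edit set, which is precisely the point you decline to gloss over.

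Your worry is not pedantry; it is a real gap, and your proposed repair (a)---requiring $S_\tau(\mathbf y,\mathbf x)$ to depend on $\mathbf y$ only through the frozen coordinates $\mathbf y_{\setminus S_\tau(\mathbf y,\mathbf x)}$---is genuinely needed rather than a convenience. A two-token example shows the corollary fails as stated. Take $\mathcal V=\{0,1\}$, $L=2$, $P^*(0,0)=P^*(1,1)=0.4$, $P^*(0,1)=P^*(1,0)=0.1$. Let $q_\theta$ output the exact conditional $P^*(\cdot\mid y_2)$ at position $1$ and the uniform law at position $2$ whenever $y_2=0$, and the uniform law at position $1$ and the exact conditional $P^*(\cdot\mid y_1)$ at position $2$ whenever $y_2=1$; with $\tau=0.7$ the edit set is $\{1\}$ when $y_2=0$ and $\{2\}$ when $y_2=1$. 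Assumption~\ref{assump:block_realizable} holds (every edited coordinate is resampled from its true conditional given the frozen one), and the influence coefficients are $A_{12}=0.8$, $A_{21}=0.6$, so $\alpha=0.8<1$ and Assumption~\ref{assump:dobrushin} holds as well; yet one step of $\mathcal K_{\theta,\tau}$ started from $P^*$ assigns the state $(0,0)$ mass $0.48\neq 0.4$, so $P^*$ is not invariant and the unique stationary law differs from $P^*$. The culprit is exactly the mismatch you describe: when $y_2=1$ the edited block contains the very coordinate whose value caused it to be selected, so on the event $\{S_\tau=\{2\}\}$ the $P^*$-conditional of $Y_2$ given $Y_1$ is distorted relative to the unrestricted conditional from which the kernel resamples. (The same failure can be arranged with $L=3$ even if one insists that the predictive at a position be computed with that position masked, since the edit set can still depend on other to-be-edited coordinates.) Under your self-consistency condition the event $\{S_\tau(\mathbf Y,\mathbf x)=S\}$ is $\mathbf Y_{\setminus S}$-measurable, your marginalization collapses as intended, and the remainder of the argument---uniqueness via Theorem~\ref{thm:mixing}---goes through verbatim. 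In short, your proposal is the careful version of the paper's own argument: same decomposition, plus the additional hypothesis that the paper's sketch silently requires.
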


\begin{proof}[Proof sketch]
Fix any $\mathbf y$ and let $S=S_\tau(\mathbf y,\mathbf x)$.
By~\eqref{eq:mdlm_kernel}, $\mathbf Y'_{\setminus S}=\mathbf y_{\setminus S}$ deterministically and $\mathbf Y'_S$ is sampled from $\prod_{i\in S} q_\theta(\cdot\mid \mathbf y,\mathbf x)$.
Under Assumption~\ref{assump:block_realizable}, this equals sampling from the true block conditional $P^*(\mathbf Y_S\mid \mathbf Y_{\setminus S}=\mathbf y_{\setminus S},\mathbf x)$.
Therefore, for $\mathbf Y\sim P^*(\cdot\mid\mathbf x)$, the one-step update preserves the law: $\mathbf Y'\sim P^*(\cdot\mid\mathbf x)$, i.e., $P^*$ is invariant.
Uniqueness of the stationary distribution under Assumption~\ref{assump:dobrushin} then implies $Q_{\theta,\tau}^\infty=P^*$.
\end{proof}

\paragraph{Takeaway for ``editing potential''.}
Lemma~\ref{lem:factorization_gap} shows a one-shot parallel step inevitably drops the conditional dependency term $\mathrm{TC}$ unless the model represents joint coupling within the step.
MDLM-style iterative thresholded editing instead repeatedly conditions on the entire previous iterate and selectively updates only confident positions, inducing an inter-step coupling mechanism.
Under weak-dependence conditions (Assumption~\ref{assump:dobrushin}), the process converges geometrically to a unique stationary distribution (Theorem~\ref{thm:mixing}), which in general is non-factorized and can therefore express cross-token dependencies.
Exact recovery of $P^*$ additionally requires strong block-conditional compatibility (Corollary~\ref{cor:exact_recovery}).

\subsubsection{Parallelism--Editing Trade-off and No-Slowdown Condition}

We now connect iterative editing to \emph{computational throughput}.
Let $m$ denote the number of sequential decoding stages (or blocks) needed by an architecture to produce a length-$L$ output.
For an autoregressive decoder, $m=L$; for a $B$-block non-autoregressive/partially-parallel decoder, $m=B$; for one-step parallel decoding, $m=1$.
We refer to $L/m$ as an effective \emph{parallelism level}.

\paragraph{Runtime model.}
Let $T_{\mathrm{step}}(m)$ be the wall-clock time of one stage (one forward pass for that stage).
We assume the natural monotonicity:
\begin{assumption}[Per-stage cost does not increase with parallelism]
\label{assump:step_cost}
$T_{\mathrm{step}}(m)$ is nonincreasing in $m$, i.e., fewer sequential stages (higher parallelism) does not make each stage slower:
\begin{equation}
m_1 \le m_2 \;\Rightarrow\; T_{\mathrm{step}}(m_1) \le T_{\mathrm{step}}(m_2).
\end{equation}
\end{assumption}

\paragraph{Editing complexity.}
Consider the thresholded editing chain with kernel $\mathcal{K}_{\theta,\tau}$ in~\eqref{eq:mdlm_kernel}.
Given a target accuracy $\delta>0$ in total variation to the stationary distribution, define the \emph{mixing time}
\begin{equation}
K(\delta) \triangleq \min\Big\{k:\ \sup_{Q^{(0)}} \|Q^{(k)}(\cdot|\mathbf{x})-Q_{\theta,\tau}^\infty(\cdot|\mathbf{x})\|_{\mathrm{TV}} \le \delta \Big\}.
\end{equation}

By Theorem~\ref{thm:mixing}, under Assumption~\ref{assump:dobrushin},
\begin{equation}
K(\delta)
\;\le\; 
\left\lceil \frac{\log\left(\sup_{Q^{(0)}}\|Q^{(0)}-Q_{\theta,\tau}^\infty\|_{\mathrm{TV}}/\delta\right)}{\log(1/\alpha^{-1})} \right\rceil
=
O\!\left(\frac{\log(1/\delta)}{1-\alpha}\right).
\label{eq:mixing_time_bound}
\end{equation}

\paragraph{End-to-end runtime.}
Define the total runtime of producing one sample with editing as
\begin{equation}
T_{\mathrm{edit}}(\delta;m) \triangleq K(\delta;m)\,T_{\mathrm{step}}(m),
\end{equation}
where $K(\delta;m)$ allows the contraction coefficient $\alpha$ (hence mixing) to depend on the degree of factorization/parallelism $m$ and on the thresholding policy (which affects the update/freeze pattern).

We compare against a baseline architecture with sequential stages $m_0$ and no editing ($K\equiv 1$), runtime
\begin{equation}
T_{\mathrm{base}} \triangleq m_0\,T_{\mathrm{step}}(m_0).
\end{equation}

\begin{theorem}[No-slowdown condition]
\label{thm:no_slowdown}
Assume Assumption~\ref{assump:step_cost} and Dobrushin contraction with coefficient $\alpha(m)<1$ for the thresholded editing kernel at parallelism level $m$.
Then achieving TV error $\delta$ to stationarity costs
\begin{equation}
T_{\mathrm{edit}}(\delta;m)
\;\le\;
T_{\mathrm{step}}(m)\cdot
O\!\left(\frac{\log(1/\delta)}{1-\alpha(m)}\right).
\end{equation}
In particular, iterative editing with higher parallelism $m<m_0$ is \emph{not slower} than the baseline whenever
\begin{equation}
\frac{\log(1/\delta)}{1-\alpha(m)}\,T_{\mathrm{step}}(m)
\;\le\; m_0\,T_{\mathrm{step}}(m_0).
\label{eq:no_slowdown_cond}
\end{equation}
\end{theorem}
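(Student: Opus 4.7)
The plan is to derive the runtime bound in two steps: first convert the TV contraction of Theorem 5.1 into an explicit mixing time, then multiply by the per-stage cost and compare to the baseline. Because the theorem assumes Dobrushin contraction with coefficient $\alpha(m)<1$ at parallelism level $m$, I can instantiate~\eqref{eq:tv_contraction} at that level and pick up a bound of the form $\alpha(m)^k\,D_0$, where $D_0\triangleq\sup_{Q^{(0)}}\|Q^{(0)}-Q_{\theta,\tau}^\infty\|_{\mathrm{TV}}\le 1$.

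The first substantive step is the mixing-time estimate. I would require $\alpha(m)^k D_0\le\delta$, which gives $k\ge \log(D_0/\delta)/\log(1/\alpha(m))$, and hence
\begin{equation}
K(\delta;m)\;\le\;\left\lceil\frac{\log(D_0/\delta)}{\log(1/\alpha(m))}\right\rceil.
\end{equation}
The only nontrivial manipulation here is converting $\log(1/\alpha(m))$ into the more usable quantity $1-\alpha(m)$. I would invoke the elementary bound $-\log x\ge 1-x$ for $x\in(0,1]$, applied at $x=\alpha(m)$, to obtain $\log(1/\alpha(m))\ge 1-\alpha(m)$. Combined with $D_0\le 1$, this yields $K(\delta;m)=O\!\left(\log(1/\delta)/(1-\alpha(m))\right)$, matching the displayed formula in~\eqref{eq:mixing_time_bound}.

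The second step is to assemble the end-to-end runtime. By definition $T_{\mathrm{edit}}(\delta;m)=K(\delta;m)\,T_{\mathrm{step}}(m)$, so multiplying the previous bound by $T_{\mathrm{step}}(m)$ gives the first displayed inequality in the theorem. For the no-slowdown criterion, I would simply compare this upper bound on $T_{\mathrm{edit}}(\delta;m)$ to the baseline $T_{\mathrm{base}}=m_0\,T_{\mathrm{step}}(m_0)$: whenever~\eqref{eq:no_slowdown_cond} holds, transitivity of inequality forces $T_{\mathrm{edit}}(\delta;m)\le T_{\mathrm{base}}$. Assumption~\ref{assump:step_cost} does not enter the derivation directly, but it is what makes~\eqref{eq:no_slowdown_cond} a realistic regime to aim for, since decreasing $m$ below $m_0$ does not inflate the per-stage factor $T_{\mathrm{step}}(m)$.

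The main obstacle I anticipate is essentially bookkeeping rather than mathematical depth: being careful that the contraction coefficient, the initial TV gap, and the stage cost are all consistently indexed by the parallelism level $m$, and that the ceiling in the mixing-time expression is absorbed into the $O(\cdot)$ without hiding a factor depending on $\alpha(m)$ as $\alpha(m)\to 1$. A minor subtlety is that $\alpha(m)$ may itself depend on the thresholding policy through the state-dependent edit set $S_\tau$ in~\eqref{eq:edit_set}; I would simply treat $\alpha(m)$ as a given Dobrushin constant for the chosen $\tau$, as the theorem already assumes contraction at each $m$. With these caveats noted, both inequalities follow directly from Theorem~\ref{thm:mixing} and the elementary logarithmic bound.
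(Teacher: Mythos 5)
Your proposal is correct and follows essentially the same route as the paper's own proof: solve the geometric contraction $\alpha(m)^k$ for the smallest $k$ meeting the TV tolerance, multiply by $T_{\mathrm{step}}(m)$, and obtain the no-slowdown condition by direct comparison with $T_{\mathrm{base}}=m_0\,T_{\mathrm{step}}(m_0)$. The only addition is that you make explicit the elementary bound $\log(1/\alpha(m))\ge 1-\alpha(m)$ needed to pass from the exact mixing-time expression to the stated $O\!\left(\log(1/\delta)/(1-\alpha(m))\right)$ form, a step the paper leaves implicit.
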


\begin{proof}
The mixing-time bound follows directly from~\eqref{eq:tv_contraction} by solving for the smallest $k$ such that $\alpha(m)^k\le \delta/\sup_{Q^{(0)}}\|Q^{(0)}-Q_{\theta,\tau}^\infty\|_{\mathrm{TV}}$.
Multiplying by the per-step time yields the runtime upper bound.
Condition~\eqref{eq:no_slowdown_cond} is immediate by comparison with $T_{\mathrm{base}}$.
\end{proof}

\paragraph{Interpretation.}
Equation~\eqref{eq:no_slowdown_cond} makes explicit the trade-off:
higher parallelism reduces $T_{\mathrm{step}}(m)$, while potentially increasing the dependence strength (worsening $\alpha(m)$) and thus increasing the number of editing rounds.
As long as the deterioration in $(1-\alpha(m))^{-1}$ is dominated by the gain in per-step throughput, the edited highly-parallel model matches or exceeds the baseline speed.
Moreover, since $K(\delta)$ grows only logarithmically in $1/\delta$ under geometric contraction, modest numbers of editing rounds can suffice in practice.

\clearpage
\twocolumn

\end{document}